
\documentclass{article}

\usepackage{microtype}
\usepackage{graphicx}
\usepackage{subfigure}
\usepackage{booktabs} 
\usepackage{caption}

\usepackage{hyperref}



\usepackage[accepted]{icml2025}

\usepackage{amsmath}
\usepackage{amssymb}
\usepackage{mathtools}
\usepackage{amsthm}

\usepackage[capitalize,noabbrev]{cleveref}


\usepackage{amsmath,amsfonts,bm}









\def\eqref#1{equation~\ref{#1}}









\def\1{\bm{1}}










\DeclareMathAlphabet{\mathsfit}{\encodingdefault}{\sfdefault}{m}{sl}
\SetMathAlphabet{\mathsfit}{bold}{\encodingdefault}{\sfdefault}{bx}{n}













\DeclareMathOperator*{\argmax}{arg\,max}
\DeclareMathOperator*{\argmin}{arg\,min}

\usepackage{url}            
\usepackage{amsfonts}       
\usepackage{nicefrac}       
\usepackage{xcolor}         
\usepackage{thmtools}
\usepackage{multirow}
\usepackage{bbm}
\usepackage{authblk}
\usepackage{multicol}
\usepackage{wrapfig,lipsum}
\usepackage{algorithmic}
\usepackage{framed}
\usepackage{enumitem}

\definecolor{vfc}{HTML}{F95700}
\definecolor{fca-c}{HTML}{295F2E}
\definecolor{fca}{HTML}{00539C}
\definecolor{x0}{HTML}{1a7a4c}
\definecolor{x1}{HTML}{9E1030}
\definecolor{balance}{HTML}{1a7a4c}
\definecolor{cost}{HTML}{9E1030}

\definecolor{auburn}{rgb}{0.5, 0.22, 0.12}

\setlength{\abovedisplayskip}{3pt}
\setlength{\belowdisplayskip}{3pt}

\makeatletter
\newcommand*\bigcdot{\mathpalette\bigcdot@{.5}}
\newcommand*\bigcdot@[2]{\mathbin{\vcenter{\hbox{\scalebox{#2}{$\m@th#1\bullet$}}}}}
\makeatother

\theoremstyle{plain}
\newtheorem{theorem}{Theorem}[section]
\newtheorem{proposition}[theorem]{Proposition}

\theoremstyle{definition}

\theoremstyle{remark}
\newtheorem{remark}[theorem]{Remark}

\usepackage[textsize=tiny]{todonotes}

\icmltitlerunning{Fair Clustering via Alignment}

\begin{document}

\twocolumn[
\icmltitle{Fair Clustering via Alignment}



\icmlsetsymbol{equal}{*}

\begin{icmlauthorlist}
    \icmlauthor{Kunwoong Kim}{snu-stat}
    \icmlauthor{Jihu Lee}{snu-stat}
    \icmlauthor{Sangchul Park}{snu-law}
    \icmlauthor{Yongdai Kim}{snu-stat}
\end{icmlauthorlist}

\icmlaffiliation{snu-stat}{Department of Statistics, Seoul National University, Republic of Korea}
\icmlaffiliation{snu-law}{School of Law, Seoul National University, Republic of Korea}

\icmlcorrespondingauthor{Yongdai Kim}{ydkim0903@gmail.com}

\icmlkeywords{Machine Learning, ICML}

\vskip 0.3in
]



\printAffiliationsAndNotice{}  

\begin{abstract}
    Algorithmic fairness in clustering aims to balance the proportions of instances assigned to each cluster with respect to a given sensitive attribute.
    While recently developed fair clustering algorithms optimize clustering objectives under specific fairness constraints, their inherent complexity or approximation often results in suboptimal clustering utility or numerical instability in practice.
    To resolve these limitations, we propose a new fair clustering algorithm based on a novel decomposition of the fair $K$-means clustering objective function.
    The proposed algorithm, called Fair Clustering via Alignment (FCA), operates by alternately (i) finding a joint probability distribution to align the data from different protected groups, and (ii) optimizing cluster centers in the aligned space.
    A key advantage of FCA is that it theoretically guarantees approximately optimal clustering utility for any given fairness level without complex constraints, thereby enabling high-utility fair clustering in practice.
    Experiments show that FCA outperforms existing methods by (i) attaining a superior trade-off between fairness level and clustering utility, and (ii) achieving near-perfect fairness without numerical instability.
\end{abstract}

\section{Introduction}

As artificial intelligence (AI) technology has advanced and been successfully applied to diverse domains and tasks, the requirement for AI systems to make fair decisions (i.e., algorithmic fairness) has emerged as an important societal issue.
This requirement is particularly necessary when observed data possess historical biases with respect to specific sensitive attributes, leading to unfair outcomes of learned models based on such biased data \citep{angwin2016machine, ingold2016amazon, 10.1007/978-3-030-01225-0_28, mehrabi2019survey}.
Moreover, non-discrimination laws are also increasingly emphasizing the importance of fair decision making based on AI systems \citep{Hellman2019MeasuringAF}.
Specifically, group fairness is a category within algorithmic fairness that ensures models do not discriminate against certain protected groups, which are defined by specific sensitive attributes (e.g., race).
In response, a large amount of research has been conducted to develop algorithms for mitigating such biases in various supervised learning tasks such as classification \citep{zafar2017fairness, donini2018empirical, pmlr-v80-agarwal18a, quadrianto2019discovering, pmlr-v115-jiang20a} and regression \citep{ADW19, NEURIPS2020_51cdbd26}.

Along with supervised learning, algorithmic fairness for unsupervised learning tasks, such as clustering, has also gathered significant interest.
Clustering algorithms have long been employed as fundamental unsupervised learning methods for machine learning, such as recommendation systems \citep{widiyaningtyas2021recommendation}, image processing \citep{le2013building, 9156648, mittal2022comprehensive}, and language modeling \citep{BUTNARU20171783, zhang2023clusterllm}.

\paragraph{Related works for Fair Clustering (FC)}

Combining algorithmic fairness and clustering, the notion of Fair Clustering (FC) was initially introduced in \citet{chierichetti2017fair}.
FC operates under the goal that the proportion of each protected group within each cluster should be similar to that in the population.
To achieve this goal, various algorithms have been developed to minimize a given clustering objective under pre-specified fairness constraints \citep{bera2019fair, kleindessner2019guarantees, backurs2019scalable, li2020deep, esmaeili2021fair, ziko2021variational, zeng2023deep}, to name a few.

We can roughly categorize the existing FC algorithms into three: (i) pre-processing, (ii) in-processing, and (iii) post-processing.
Pre-processing methods \citep{chierichetti2017fair, backurs2019scalable} involve transforming instances into a fair space based on the concept of fairlets.
Fairlets are small subsets that satisfy (perfect) fairness, and thus performing standard clustering on the fairlet space yields a fair clustering.
In-processing methods \citep{kleindessner2019guarantees, ziko2021variational, li2020deep, zeng2023deep} aim to simultaneously find both the cluster centers and assignments of the fair clustering by solving constrained optimization problems.
Post-processing methods \citep{bera2019fair, NEURIPS2020_a6d259bf} focus on finding fair assignments given fixed cluster centers.
The fixed cluster centers are typically predetermined by a standard clustering algorithm.

\paragraph{Our contributions}

In this paper, we focus on the \textit{trade-off between fairness level and clustering utility}: our goal is to maximize clustering utility while satisfying a given fairness level.
While the trade-off between the fairness and utility is inevitable \citep{10.1287/opre.1100.0865, 9541160}, achieving the optimal trade-off with existing FC algorithms remains challenging.
For example, pre- or post-processing algorithms usually result in suboptimal clustering utility due to indirect maximization of clustering utility (e.g., \citet{backurs2019scalable, esmaeili2021fair}). 
Even when designed for achieving reasonable trade-off, in-processing algorithms may have trouble due to numerical instability, particularly when a given fairness level is high (e.g., \citet{ziko2021variational}).

This paper aims to address these challenges by developing a new in-processing algorithm that can practically achieve a superior trade-off between fairness level and clustering utility without numerical instability.
The primary idea of our proposed algorithm is to optimally align data from different protected groups by transforming them into a common space (called the aligned space), and then applying a standard clustering algorithm in the aligned space.
We prove that the optimal fair clustering, i.e., the clustering with minimal clustering cost under a given fairness constraint, is equivalent to the optimal clustering in the aligned space.

Based on the theoretical result, we devise a new FC algorithm, called \textbf{Fair Clustering via Alignment (FCA)}\footnote{Implementation code is available at \url{https://github.com/kwkimonline/FCA}.}.
FCA alternately finds the aligned space and the (approximately) optimal clustering in the aligned space until convergence.
To find the aligned space, we develop a modified version of an algorithm for finding the optimal transport map \citep{Kantorovich2006OnTT}, while a standard clustering algorithm (e.g., the $K$-means++ algorithm \citep{10.5555/1283383.1283494}) is applied to find the (approximately) optimal clustering in the aligned space.
It is worth noting that FCA can be particularly compared to the fairlet-based methods (e.g., \citet{backurs2019scalable}).
While existing fairlet-based methods first find fairlets and then perform clustering sequentially, FCA simultaneously builds the aligned space and performs clustering to obtain an approximately optimal fair clustering.
A detailed comparison is provided in \cref{rmk:vs_fairlet}.

The main contributions of this paper can be summarized as:
\begin{itemize}[noitemsep, topsep=0pt]
    \item[$\diamond$]
    We provide a novel decomposition of the fair clustering cost into two components: 
    (i) the transport cost with respect to a joint distribution between two protected groups, and
    (ii) the clustering cost with respect to cluster centers in the aligned space.
    
    \item[$\diamond$]
    Building on this decomposition, we develop a novel FC algorithm called FCA (Fair Clustering via Alignment), which is stable and guarantees convergence.

    \item[$\diamond$]
    Theoretically, we prove that FCA achieves an approximately optimal trade-off between fairness level and clustering utility, for any given fairness level.

    \item[$\diamond$]
    Experimentally, we show that FCA (i) outperforms existing baseline FC algorithms in terms of the trade-off and numerical stability, and (ii) effectively controls the trade-off across various fairness levels.
\end{itemize}


\section{Preliminaries}


\paragraph{Notations}

Let $\mathcal{D} = \{(\mathbf{x}_i, s_i)\}_{i=1}^n$ be a given dataset (i.e., a set of observed instances), where $\mathbf{x}_i \in \mathbb{R}^{d}$ and $s_i \in \{ 0, 1 \}$ are $d$-dimensional data and binary variable for the sensitive attribute, respectively.
We denote $(\mathbf{X},S)$ as the random vector whose joint distribution denoted as $\mathbb{P}$ is the empirical distribution on $\mathcal{D}.$
Let $\mathbb{P}_{s}$ represents the conditional distribution of $\mathbf{X}$ given $S = s.$
In this paper, we specifically define these distributions to discuss the (probabilistic) matching between two protected groups of different sizes.
We denote $\mathbb{E}$ and $\mathbb{E}_{s}$ as the expectation operators of $\mathbb{P}$ and $\mathbb{P}_{s},$ respectively.
Let $\mathcal{X}=\{\mathbf{x}_i\}_{i=1}^{n},$ $\mathcal{X}_s=\{\mathbf{x}_i \in \mathcal{X} : s_i=s\},$ and $n_{s} := \vert \mathcal{X}_{s} \vert$ for $s \in \{0, 1\}.$
Denote $\Vert \cdot \Vert^{2}$ as the $L_{2}$ norm.

We assume that the number of clusters, represented by $K \in \mathbb{N}$, is given a priori.
The $K$-many cluster centers are denoted as $\bm{\mu} := \{ \mu_{1}, \ldots, \mu_{K} \}$ where $\mu_{k} \in \mathbb{R}^{d}, \forall k \in [K] = \{1,\ldots,K\}.$
Let $\mathcal{A} : \mathcal{X} \times \{ 0, 1 \} \rightarrow \mathcal{S}^K$ be an assignment function that takes as input $(\mathbf{x}, s) \in \mathcal{X} \times \{0, 1\}$ and returns the assignment probabilities over clusters for the data point $\mathbf{x}$, where $\mathcal{S}^K$ is the $(K-1)$-dimensional simplex.
We consider this probabilistic assignment function to ensure the existence of a perfectly fair clustering.

\paragraph{Clustering objective function}
We first present the mathematical formulation of the clustering objective function.
The objective of the standard (i.e., fair-unaware) $K$-means clustering is to minimize the clustering cost
$
C (\bm{\mu}, \mathcal{A}) := \frac{1}{n} \sum_{k=1}^{K} \sum_{(\mathbf{x}, s) \in \mathcal{D}} \mathcal{A}(\mathbf{x}, s)_{k} \Vert \mathbf{x} - \mu_{k} \Vert^{2},
$
with respect to $\bm{\mu}$ and $\mathcal{A}.$
Note that $C (\bm{\mu}, \mathcal{A})$ can be equivalently re-written as $ \mathbb{E} \sum_{k=1}^{K} \mathcal{A}(\mathbf{X}, S)_{k} \Vert \mathbf{X} - \mu_{k} \Vert^{2}. $
Furthermore, the optimal assignment function is deterministic, i.e., $\mathcal{A}(\mathbf{x}, s)_k = \mathbbm{1}(\argmin_{k' \in [K]} \Vert \mathbf{x} - \mu_{k'}^{\diamond} \Vert^{2} = k)$ for a given $(\mathbf{x}, s) \in \mathcal{X}_{s} \times \{0, 1\},$
where $ \mu_{1}^{\diamond}, \ldots, \mu_{K}^{\diamond} $ are the centers of the optimal clustering.
Thus, $C(\bm{\mu}, \mathcal{A})$ becomes $ \mathbb{E} \min_{k} \Vert \mathbf{X} - \mu_{k} \Vert^{2},$ and the optimal clustering is obtained by finding $\bm{\mu}$ minimizing $\mathbb{E} \min_{k} \Vert \mathbf{X} - \mu_{k} \Vert^{2}.$

\paragraph{Definition of fair clustering}

An assignment function $\mathcal{A}$ is said to be \textit{fair} in view of group (or proportional) fairness \citep{chierichetti2017fair}, if it satisfies for all $k \in [K],$
\begin{equation}\label{eq:def_fair_intuit}
    \frac{\sum_{\mathbf{x}_{i} \in \mathcal{X}_{0}} \mathcal{A}(\mathbf{x}_{i}, 0)_{k}}{n_0} \approx 
        \frac{\sum_{\mathbf{x}_{j} \in \mathcal{X}_{1}} \mathcal{A}(\mathbf{x}_{j}, 1)_{k}} {n_{1}}.
\end{equation}
This constraint ensures the proportion of data belonging to a cluster be balanced, resulting in fair clustering.
That is, we find the cluster center $\bm{\mu}$ and the assignment function $\mathcal{A}$ that minimize 
$C (\bm{\mu}, \mathcal{A})$ among all $\bm{\mu}$ and fair assignment functions $\mathcal{A}$ satisfying eq. (\ref{eq:def_fair_intuit}).

To assess the fairness level in clustering, \textit{Balance} measure is widely used \citep{chierichetti2017fair, bera2019fair, backurs2019scalable, esmaeili2021fair, ziko2021variational, zeng2023deep}, which is defined as
$$
\min_{k \in [K]} \min \left( \frac{\sum_{\mathbf{x}_{i} \in \mathcal{X}_{0}} \mathcal{A}(\mathbf{x}_{i}, 0)_{k}}{\sum_{\mathbf{x}_{j} \in \mathcal{X}_{1}} \mathcal{A}(\mathbf{x}_{j}, 1)_{k}}, 
\frac{\sum_{\mathbf{x}_{j} \in \mathcal{X}_{1}} \mathcal{A}(\mathbf{x}_{j}, 1)_{k}}{\sum_{\mathbf{x}_{i} \in \mathcal{X}_{0}} \mathcal{A}(\mathbf{x}_{i}, 0)_{k}}
\right).
$$
Note that the higher balance is, the fairer the clustering is.
Furthermore, the balance of any given perfectly fair clustering is $\min \left( n_{0} / n_{1}, n_{1} / n_{0} \right).$
The objective of FC is to minimize $C (\bm{\mu}, \mathcal{A})$ with respect to $\bm{\mu}\in \mathbb{R}^K$ and $\mathcal{A},$ under the fairness constraint (e.g., $\mbox{balance} \approx \min \left( n_{0} / n_{1}, n_{1} / n_{0} \right)).$
We abbreviate $\mathcal{A}(\cdot, s)=\mathcal{A}_{s}(\cdot)$ and $C(\bm{\mu}, \mathcal{A})=C(\bm{\mu}, \mathcal{A}_{0}, \mathcal{A}_{1})$ when their meanings are clear.

For the case of perfect fairness, in \cref{sec:method}, we prove that the FC objective can be decomposed into the sum of (i) the cost of transporting data from different protected groups to a common space (called the aligned space), and (ii) the clustering cost in the aligned space built by the transported data.
Building on the decomposition, in \cref{sec:algorithm}, we introduce our proposed algorithm for perfect fairness.
Then, in \cref{sec:control}, we extend the algorithm to control the fairness level by relaxing the perfect fairness constraint.

\section{Reformulation of fair clustering objective}\label{sec:method}

This section presents our main theoretical contribution: we derive a novel decomposition of the perfectly fair (i.e., $\sum_{\mathbf{x}_{i} \in \mathcal{X}_{0}} \mathcal{A}_{0}(\mathbf{x}_{i})_{k} / \sum_{\mathbf{x}_{j} \in \mathcal{X}_{1}} \mathcal{A}_{1}(\mathbf{x}_{j})_{k} = n_{0} / n_{1}$ for all $k \in [K]$) clustering objective, which motivates our proposed algorithms in \cref{sec:proposed_alg}.

In \cref{sec:equivalence-2}, we introduce our idea through discussing the simplest case where the two protected groups are of equal size ($n_{0} = n_{1}$).
Then, in \cref{sec:equivalence}, we generalize to the unequal case ($n_{0} \neq n_{1}$) by constructing the aligned space defined by a given joint distribution on $\mathcal{X}_{0} \times \mathcal{X}_{1}.$
We prove that there exists a joint distribution such that the objective function of perfectly fair clustering can be decomposed into the sum of the transport cost with respect to the joint distribution and the clustering cost with respect to the cluster centers on the aligned space.
Full proofs of all the theoretical findings in this section are given in \cref{sec:appen-proofs}.

\subsection{Case of equal sample sizes: $n_{0} = n_{1}$}\label{sec:equivalence-2}

Assume that the sizes of two protected groups are equal (i.e., $n_{0} = n_{1}$).
We consider deterministic assignment functions, i.e., $\mathcal{A}_{s} (\mathbf{x})_k \in \{0, 1\},$ since the optimal perfectly fair assignment function is deterministic when $n_0=n_1$ (see \cref{rmk:n0=n1} in \cref{sec:equivalence}).
The case of probabilistic assignment functions when $n_0 \ne n_1$ is discussed in \cref{sec:equivalence}.

The core idea of FCA is to \textit{match two instances from different protected groups and assign them to the same cluster}.
By doing so -- matching all instances from $\mathcal{X}_{0}$ to $\mathcal{X}_{1}$ in a one-to-one fashion and assigning each pair to the same cluster -- the resulting clustering becomes perfectly fair.

Conversely, suppose we are given a perfectly fair clustering constructed by a deterministic assignment function $\mathcal{A}.$
Since $n_0=n_1$ and $\mathcal{A}$ is deterministic, there exists a one-to-one matching between $\mathcal{X}_{0}$ and $\mathcal{X}_{1}$ such that two matched instances belong to the same cluster. 
Thus, we can decompose the clustering cost in terms of the one-to-one matching, as presented in \cref{thm:equalcase}.

\begin{figure*}[h]
    \centering
    \includegraphics[width=0.8\linewidth]{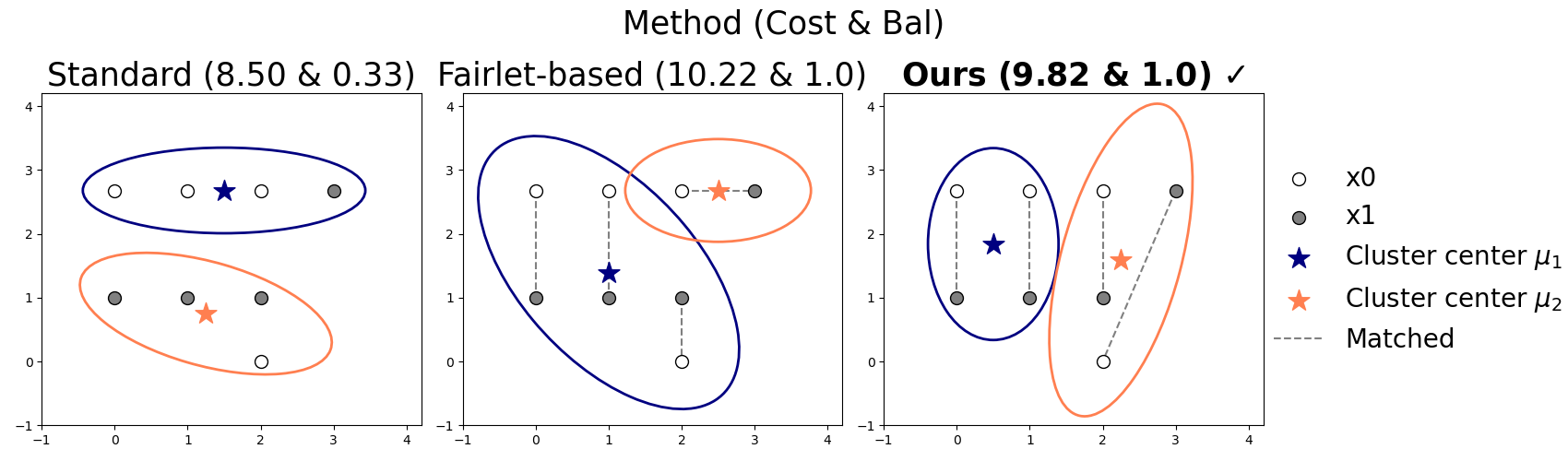}
    \vskip -0.1in
    \caption{
    Comparison between the fairlet-based method and our approach with $n_{0} = n_{1} = 4$ and $K = 2.$
    The representative of each fairlet is set as the mean vector of the data within that fairlet, and the standard $K$-means algorithm is then applied to this set of representatives.
    The clustering results are visualized using contours.
    While both the fairlet-based method and ours result in perfectly fair clustering, i.e., balance (Bal) $= 1 = \min (n_{0}/n_{1}, n_{1}/n_{0})$, our approach yields a lower cost ($9.82 < 10.22$), due to more efficient matchings.
    }
    \label{fig:ours_vs_fairlet}
    \vskip -0.1in
\end{figure*}

\begin{theorem}\label{thm:equalcase}
    For any given perfectly fair deterministic assignment function $\mathcal{A}$ and cluster centers $\bm{\mu}$,
    there exists a one-to-one matching map $\mathbf{T} : \mathcal{X}_{s} \rightarrow \mathcal{X}_{s'}$ such that, for any $s \in \{0, 1\}$,
    $ C(\bm{\mu}, \mathcal{A}_{0}, \mathcal{A}_{1}) = $
    \begin{equation}\label{eq:thm:equalcase}
        \footnotesize
        \begin{split}
            & \mathbb{E}_{s} \sum_{k=1}^{K} \mathcal{A}_{s}(\mathbf{X})_{k} 
            \bigg( 
            \underbrace{\frac{\|\mathbf{X}-\mathbf{T}(\mathbf{X})\|^2}{4}}_{\textup{Transport cost w.r.t. $\mathbf{T}$}} 
            + \underbrace{\left\Vert \frac{\mathbf{X} + \mathbf{T}(\mathbf{X})}{2}-\mu_k \right\Vert^2}_{
            \textup{Clustering cost w.r.t. $\bm{\mu}$ and $\mathbf{T}$}} \bigg).
        \end{split}
    \end{equation}
\end{theorem}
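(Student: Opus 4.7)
The plan is to exploit perfect fairness together with $n_0 = n_1$ and the determinism of $\mathcal{A}$ to construct the matching $\mathbf{T}$ cluster-by-cluster, then apply the elementary identity
$$
\|a-c\|^2+\|b-c\|^2 = 2\left\|\tfrac{a+b}{2}-c\right\|^2+\tfrac{1}{2}\|a-b\|^2
$$
(a rescaling of the parallelogram law) to separate a transport term from a clustering term in the aligned space.

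First I would observe that, under perfect fairness with $n_0=n_1$, the constraint $\sum_{\mathbf{x}\in\mathcal{X}_0}\mathcal{A}_0(\mathbf{x})_k/n_0 = \sum_{\mathbf{y}\in\mathcal{X}_1}\mathcal{A}_1(\mathbf{y})_k/n_1$ combined with $\mathcal{A}_s(\cdot)_k\in\{0,1\}$ forces the integer equality $|\{\mathbf{x}\in\mathcal{X}_0:\mathcal{A}_0(\mathbf{x})_k=1\}| = |\{\mathbf{y}\in\mathcal{X}_1:\mathcal{A}_1(\mathbf{y})_k=1\}|$ for every $k\in[K]$. Because the clusters partition $\mathcal{X}_0$ and $\mathcal{X}_1$, I can pick an arbitrary bijection within each cluster and glue these into a single one-to-one map $\mathbf{T}:\mathcal{X}_0\to\mathcal{X}_1$ with the crucial property that $\mathcal{A}_1(\mathbf{T}(\mathbf{x}))_k = \mathcal{A}_0(\mathbf{x})_k$ for all $\mathbf{x}$ and all $k$. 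The case $s=1$ is then handled by using $\mathbf{T}^{-1}$.

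Next I would rewrite $C(\bm{\mu},\mathcal{A})=\frac{1}{n}\sum_k\sum_{(\mathbf{x},s)\in\mathcal{D}}\mathcal{A}(\mathbf{x},s)_k\|\mathbf{x}-\mu_k\|^2$ using $n=2n_0$, and group the contribution of each $\mathbf{x}\in\mathcal{X}_0$ with that of its partner $\mathbf{T}(\mathbf{x})\in\mathcal{X}_1$. Since both points lie in the same cluster $k$, their joint contribution equals $\|\mathbf{x}-\mu_k\|^2+\|\mathbf{T}(\mathbf{x})-\mu_k\|^2$, and applying the parallelogram identity with $c=\mu_k$ turns this into $2\|(\mathbf{x}+\mathbf{T}(\mathbf{x}))/2-\mu_k\|^2+\tfrac{1}{2}\|\mathbf{x}-\mathbf{T}(\mathbf{x})\|^2$. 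The normalization $1/(2n_0)$ absorbs the factor of $2$ and produces the $1/4$ in front of the transport term, yielding exactly the right-hand side of \eqref{eq:thm:equalcase} expressed as $\mathbb{E}_0$.

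I do not expect a genuinely hard step: the algebraic identity carries the weight, and the matching exists by elementary counting once perfect fairness is combined with equal group sizes and deterministic assignments. The only care required is to verify that gluing per-cluster bijections indeed produces a global one-to-one map (immediate from the partition structure), and that the $s=1$ version follows by symmetry by invoking $\mathbf{T}^{-1}$ and relabeling. If anything, the most delicate point is purely expository, namely emphasizing that the same $\mathbf{T}$ works regardless of $s$ in the statement (so the two expectations are equal), which is a direct consequence of the matched contributions being symmetric in $\mathbf{x}$ and $\mathbf{T}(\mathbf{x})$.
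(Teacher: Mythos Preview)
Your proposal is correct and follows essentially the same approach as the paper: construct $\mathbf{T}$ by choosing per-cluster bijections (which exist because perfect fairness with $n_0=n_1$ and deterministic $\mathcal{A}$ forces equal cluster sizes), pair each $\mathbf{x}$ with $\mathbf{T}(\mathbf{x})$ in the cost, and apply the parallelogram identity to obtain the decomposition. The only cosmetic difference is that the paper expands $\|\mathbf{x}-\mu_k\|^2$ and $\|\mathbf{T}(\mathbf{x})-\mu_k\|^2$ separately through the midpoint and cancels the cross terms explicitly, whereas you invoke the identity directly.
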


The assignment function $\mathcal{A}$ that minimizes eq. (\ref{eq:thm:equalcase}) given $\bm{\mu}$ and $\mathbf{T}$ assigns both $\mathbf{x}$ and $\mathbf{T}(\mathbf{x})$ to cluster $k,$ where $k = \argmin_{k' \in [K]} \Vert \frac{\mathbf{x} + \mathbf{T}(\mathbf{x})}{2} - \mu_{k'} \Vert^{2}.$
Hence, the optimal perfectly fair clustering can be found by minimizing $ \mathbb{E}_{s} \left( \Vert \mathbf{X}-\mathbf{T}(\mathbf{X}) \Vert^2 / 4  + \min_{k} \left\Vert (\mathbf{X} + \mathbf{T}(\mathbf{X})) / 2 - \mu_k \right\Vert^2 \right) $ with respect to $\bm{\mu}$ and $\mathbf{T},$ instead of finding the optimal $\bm{\mu}$ and $\mathcal{A}$ minimizing $C(\bm{\mu}, \mathcal{A}_{0}, \mathcal{A}_{1}).$
We update $\bm{\mu}$ for a given $\mathbf{T}$ by applying a standard clustering algorithm to $\{ \frac{ \mathbf{x} + \mathbf{T}(\mathbf{x}) }{2}, \mathbf{x} \in \mathcal{X}_s\},$ which is called the \textit{aligned space}.
To update $\mathbf{T},$ any algorithm for finding the optimal matching can be used, where the cost between two instances $\mathbf{x}_{0} \in \mathcal{X}_{0}$ and $\mathbf{x}_{1} \in \mathcal{X}_{1}$ is given by $ \|\mathbf{x}_{0}-\mathbf{x}_{1}\|^2/4 + \min_k \|(\mathbf{x}_{0}+ \mathbf{x}_{1})/2 -\mu_k\|^2$
(see \cref{sec:algorithm} for the specific algorithm we use).
Note that there are no complex constraints in updating $\bm{\mu}$ and $\mathbf{T}.$
Finally, we define $\mathcal{A}$ corresponding to $\mathbf{T},$ which assigns both $\mathbf{x}$ and $\mathbf{T}(\mathbf{x})$ to the same cluster on the aligned space $\{ \frac{ \mathbf{x} + \mathbf{T}(\mathbf{x}) }{2}, \mathbf{x} \in \mathcal{X}_s\}.$
See \cref{sec:appen-kmedian_discuss} for a similar decomposition that extends to other general distance metrics (e.g., the $L_{p}$ norm for $p \ge 1$).

\begin{remark}[Comparison to the fairlet-based methods]\label{rmk:vs_fairlet}
    Although our idea of matching data from different protected groups may seem similar to the fairlet-based methods \citep{chierichetti2017fair, backurs2019scalable}, they fundamentally differ:

    Our method is an \textit{in-processing approach that directly minimizes the clustering cost with respect to both the matching map and cluster centers simultaneously}.
    In contrast, the fairlet-based method is a two-step pre-processing approach, which does not directly minimize the clustering cost; instead, it first searches for fairlets and then attempts to find the optimal cluster centers on the set of representatives for each fairlet.
    See \cref{sec:appen-fairlet} for details about fairlets.
    As a result, in the fairlet-based method, the matchings and cluster centers are not jointly optimized, which may lead to suboptimal clustering utility.
  
    \cref{fig:ours_vs_fairlet} illustrates how the fairlet-based method can produce suboptimal clustering, when compared to our approach.
    It implies that, more efficient matchings exist that yield higher clustering utility than the matchings of fairlets, and our approach is specifically designed to find these efficient matchings.
    We confirm this claim more comprehensively using real benchmark datasets in \cref{sec:exp-main}.
\end{remark}

\subsection{Case of unequal sample sizes: $n_{0} \neq n_{1}$}\label{sec:equivalence}

To handle the case of $n_{0} \neq n_{1},$ we follow a similar strategy to that for $n_{0} = n_{1},$ but replace the matching map $\mathbf{T}$ with the joint distribution $\mathbb{Q}$ over $\mathbf{X} \vert S = 0$ and $\mathbf{X} \vert S = 1.$
For each $s \in \{0, 1\},$ let $\mathbf{X}_{s}$ denote the random variable following the conditional distribution of $\mathbf{X} \vert S = s,$ i.e., $\mathbb{P}_{s}.$
We reformulate the perfectly fair clustering cost in terms of cluster centers $\bm{\mu}$ and joint distribution $\mathbb{Q}$ whose marginal distributions are $\mathbb{P}_s, s\in \{0,1\}.$
Note that $\mathbb{Q}$ serves as a smooth and stochastic version of $\mathbf{T}.$

Let $\mathcal{Q}=\{\textup{all joint distributions } \mathbb{Q} \mbox{ on } \mathcal{X}_0\times \mathcal{X}_1 \mbox{ with } \mathbb{Q}_s=\mathbb{P}_s, s\in \{0,1\}\},$ where $\mathbb{Q}_{s}$ is the marginal distribution of $\mathbb{Q}$ on $\mathbf{X}_{s}.$
\cref{thm:unequalcase} below, which is the main theoretical result and motivation of our proposed algorithm, proves that the optimal perfectly fair clustering can be found by optimizing the joint distribution $\mathbb{Q}$ and cluster centers $\bm{\mu}.$
Let $\pi_{s} = n_{s} / (n_{s} + n_{s'})$ for $s \neq s' \in \{0, 1\}.$
We then define 
$$
\mathbf{T}^{\textup{A}} (\mathbf{x}_0,\mathbf{x}_1) := \pi_0 \mathbf{x}_0+\pi_1 \mathbf{x}_1
$$
as the \textit{alignment map}.

\begin{theorem}\label{thm:unequalcase}
    Let $\bm{\mu}^{*} \in \mathbb{R}^{d}$ and $\mathbb{Q}^{*} \in \mathcal{Q}$ be the cluster centers and joint distribution minimizing
    \begin{equation}\label{eq:main_result}
        \begin{split}
            \mathbb{E}_{\mathbb{Q}} \left( 2 \pi_{0} \pi_{1} \Vert \mathbf{X}_{0} - \mathbf{X}_{1} \Vert^{2} + \min_{k} \Vert \mathbf{T}^{\textup{A}} ( \mathbf{X}_{0}, \mathbf{X}_{1} ) - \mu_{k} \Vert^{2} \right).
        \end{split}
    \end{equation}
    Then, $(\bm{\mu}^*,\mathcal{A}_{0}^*, \mathcal{A}_{1}^{*})$ is the solution of the perfectly fair $K$-means clustering,
    where
    $
    \mathcal{A}_{0}^{*}(\mathbf{x})_{k} := \mathbb{Q}^{*} \left( \argmin_{k'} \Vert \mathbf{T}^{\textup{A}} ( \mathbf{x}, \mathbf{X}_{1} ) - \mu_{k'} \Vert^{2} = k \vert \mathbf{X}_{0} = \mathbf{x} \right)
    $
    and
    $
    \mathcal{A}_{1}^{*}(\mathbf{x})_{k}
    $
    is defined similarly.
\end{theorem}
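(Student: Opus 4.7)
The plan is to prove the theorem by establishing a two-sided equivalence between the perfectly fair $K$-means problem and the unconstrained minimization of the objective in~(\ref{eq:main_result}) over $(\bm{\mu}, \mathbb{Q}) \in \mathbb{R}^{d\times K} \times \mathcal{Q}$. Since $\mathcal{Q}$ encodes all couplings of $\mathbb{P}_0$ and $\mathbb{P}_1$, this replaces the combinatorial constraint on $(\mathcal{A}_0,\mathcal{A}_1)$ by an unconstrained search over joint distributions. The algebraic backbone is a parallel-axis style identity of the form
\[
\pi_0 \|\mathbf{x}_0 - \mu\|^2 + \pi_1 \|\mathbf{x}_1 - \mu\|^2 \;=\; c(\pi_0,\pi_1)\,\|\mathbf{x}_0 - \mathbf{x}_1\|^2 + \|\mathbf{T}^{\mathrm{A}}(\mathbf{x}_0, \mathbf{x}_1) - \mu\|^2,
\]
obtained by expanding each $\|\mathbf{x}_s - \mu\|^2$ around $\mathbf{T}^{\mathrm{A}}(\mathbf{x}_0,\mathbf{x}_1) = \pi_0\mathbf{x}_0 + \pi_1\mathbf{x}_1$ and using $\pi_0 + \pi_1 = 1$; this is the same identity underlying \cref{thm:equalcase} in the $\pi_0 = \pi_1 = 1/2$ case, and specialising it here produces the transport-cost coefficient appearing in~(\ref{eq:main_result}).

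First I would verify that $(\mathcal{A}_0^*, \mathcal{A}_1^*)$ built from $(\mathbb{Q}^*, \bm{\mu}^*)$ is indeed perfectly fair. Writing $K^{\star}(\mathbf{X}_0,\mathbf{X}_1) := \argmin_{k'} \|\mathbf{T}^{\mathrm{A}}(\mathbf{X}_0,\mathbf{X}_1) - \mu^{*}_{k'}\|^2$ and using the tower property under $\mathbb{Q}^*$ (whose marginals are $\mathbb{P}_0, \mathbb{P}_1$), both $\mathbb{E}_0[\mathcal{A}_0^*(\mathbf{X})_k]$ and $\mathbb{E}_1[\mathcal{A}_1^*(\mathbf{X})_k]$ collapse to $\mathbb{Q}^*(K^{\star}=k)$; since this common value does not depend on $s$, definition~(\ref{eq:def_fair_intuit}) is met. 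Next, I would evaluate $C(\bm{\mu}^*, \mathcal{A}_0^*, \mathcal{A}_1^*)$ directly: split the cost by group, push both group expectations into a single $\mathbb{E}_{\mathbb{Q}^*}$ via its marginals, apply the tower property to absorb $\mathcal{A}_s^*$ into the indicator $\mathbbm{1}(K^{\star}=k)$, and then invoke the identity cluster-by-cluster; the result equals the value of~(\ref{eq:main_result}) at $(\bm{\mu}^*, \mathbb{Q}^*)$.

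The substantive step is the converse inequality: for an arbitrary perfectly fair $(\mathcal{A}_0, \mathcal{A}_1)$ and centers $\bm{\mu}$, I need to exhibit a coupling $\mathbb{Q}_{\mathcal{A}} \in \mathcal{Q}$ for which $C(\bm{\mu},\mathcal{A}_0,\mathcal{A}_1)$ dominates the objective at $(\bm{\mu}, \mathbb{Q}_{\mathcal{A}})$. I would construct $\mathbb{Q}_{\mathcal{A}}$ by pairing points that share a latent cluster: sample $K = k$ with probability $p_k := \mathbb{E}_0[\mathcal{A}_0(\mathbf{X})_k]$ (equal to $\mathbb{E}_1[\mathcal{A}_1(\mathbf{X})_k]$ by perfect fairness), then independently draw $\mathbf{X}_0$ from the $\mathcal{A}_0(\cdot)_k$-weighted conditional on $\mathcal{X}_0$ and $\mathbf{X}_1$ from the analogous conditional on $\mathcal{X}_1$. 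The marginals of $\mathbb{Q}_{\mathcal{A}}$ are then $\mathbb{P}_0$ and $\mathbb{P}_1$, so $\mathbb{Q}_{\mathcal{A}} \in \mathcal{Q}$. Applying the identity cluster-by-cluster rewrites $C(\bm{\mu},\mathcal{A}_0,\mathcal{A}_1)$ as the transport-cost term plus $\sum_k \mathbb{E}_{\mathbb{Q}_{\mathcal{A}}}[\mathbbm{1}(K=k)\|\mathbf{T}^{\mathrm{A}}-\mu_k\|^2]$, which pointwise dominates the objective at $(\bm{\mu}, \mathbb{Q}_{\mathcal{A}})$ because $\sum_k \mathbbm{1}(K=k)\|\mathbf{T}^{\mathrm{A}}-\mu_k\|^2 \ge \min_k\|\mathbf{T}^{\mathrm{A}}-\mu_k\|^2$. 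Together with the previous paragraph, this closes the argument.

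The main obstacle will be handling probabilistic assignments rigorously when $n_0 \neq n_1$: the coupling $\mathbb{Q}_{\mathcal{A}}$ must be a genuine probability measure on the discrete set $\mathcal{X}_0 \times \mathcal{X}_1$ with the stipulated marginals, and the tower-property manipulations need to be checked carefully because $\mathcal{A}_s$ takes values in the simplex rather than $\{0,1\}$; a convenient reduction is to enlarge the probability space with an auxiliary cluster-index random variable attached to $(\mathbf{X},S)$. A secondary subtlety is measurable tie-breaking in the $\argmin$ defining $K^{\star}$, which any deterministic rule (e.g., smallest index) resolves so that $\mathcal{A}_s^*$ is well-defined.
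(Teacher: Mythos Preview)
Your proposal is correct and follows essentially the same route as the paper: both use the parallel-axis identity $\pi_0\|\mathbf{x}_0-\mu\|^2+\pi_1\|\mathbf{x}_1-\mu\|^2 = \pi_0\pi_1\|\mathbf{x}_0-\mathbf{x}_1\|^2+\|\mathbf{T}^{\textup{A}}(\mathbf{x}_0,\mathbf{x}_1)-\mu\|^2$ and the same cluster-mixture coupling $\mathbb{Q}_{\mathcal{A}}(\{\mathbf{x}_0,\mathbf{x}_1\})=\sum_k p_k^{-1}\mathcal{A}_0(\mathbf{x}_0)_k\mathcal{A}_1(\mathbf{x}_1)_k\,\mathbb{P}_0(\{\mathbf{x}_0\})\mathbb{P}_1(\{\mathbf{x}_1\})$ (this is exactly the paper's eq.~(B.5) with $p_k=\mathcal{C}_k$). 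Your write-up is somewhat more careful than the paper's in that you explicitly verify perfect fairness of $(\mathcal{A}_0^*,\mathcal{A}_1^*)$ via the tower property and frame the equivalence as a pair of inequalities, whereas the paper states these steps more tersely.
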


This result implies that, by simultaneously minimizing the transport cost $\Vert \mathbf{X}_{0} - \mathbf{X}_{1} \Vert^{2}$ and finding the cluster centers in the aligned space $ \{ \mathbf{T}^{\textup{A}} (\mathbf{X}_{0}, \mathbf{X}_{1}): (\mathbf{X}_{0}, \mathbf{X}_{1}) \sim \mathbb{Q} \},$ we obtain the optimal perfectly fair clustering.
A notable observation is here that there is no explicit constraint in eq. (\ref{eq:main_result}).
In fact, the constraint for perfect fairness 
(i.e., $\sum_{\mathbf{x}_{i} \in \mathcal{X}_{0}} \mathcal{A}(\mathbf{x}_{i}, 0)_{k} / \sum_{\mathbf{x}_{j} \in \mathcal{X}_{1}} \mathcal{A}(\mathbf{x}_{j}, 1)_{k} = n_{0} / n_{1}$ for all $k \in [K]$) is implicitly satisfied through the use of the alignment map $\mathbf{T}^{\textup{A}}$ together with the assignment functions $\mathcal{A}_{0}^{*}$ and $\mathcal{A}_{1}^{*}.$
In conclusion, solving eq. (\ref{eq:main_result}) with respect to $\bm{\mu}$ and $\mathbb{Q}$ yields the optimal perfectly fair clustering.
The algorithm for solving eq. (\ref{eq:main_result}) is detailed in \cref{sec:proposed_alg}.

\begin{remark}[$\mathcal{A}^{*}$ becomes deterministic when $n_{0} = n_{1}$]\label{rmk:n0=n1}
    Assume that $n_{0} = n_{1}.$
    Then, for a given $\mathbf{x}_{i} \in \mathcal{X}_{0},$ note that $\gamma_{i, j}$ is positive for a unique $j \in [n_{1}],$ meaning that $\mathbb{Q}$ in \cref{thm:unequalcase} corresponds to the one-to-one matching map $\mathbf{T}$ in \cref{thm:equalcase}.
    In other words, finding $\mathbb{Q}$ becomes equivalent to optimizing the optimal permutation between $[n_{0}]$ and $[n_{1}].$
    See Remark 2.4 in \citet{peyre2020computationaloptimaltransport} for the theoretical evidence.
    Then, we have
    $ \mathcal{A}_{0}^{*}(\mathbf{x}_{i})_{k} := \mathbb{Q}^{*} \left( \argmin_{k'} \Vert \mathbf{T}^{\textup{A}} ( \mathbf{x}_{i}, \mathbf{X}_{1} ) - \mu_{k'} \Vert^{2} = k \vert \mathbf{X}_{0} = \mathbf{x}_{i} \right)
    = \mathbbm{1} ( \argmin_{k'} \Vert \frac{\mathbf{x}_{i} + \mathbf{T}(\mathbf{x}_{i})}{2} - \mu_{k'} \Vert^{2} = k ), $
    which is deterministic, provided that $\pi_{0} = \pi_{1} = 1/2.$
\end{remark}

\section{Proposed algorithms}\label{sec:proposed_alg}

\subsection{FCA: Fair Clustering via Alignment}\label{sec:algorithm}

Based on \cref{thm:unequalcase}, we propose an algorithm for finding the (approximately) optimal perfectly fair clustering, called \textbf{Fair Clustering via Alignment (FCA)}.
FCA consists of two phases.
Phase 1 finds the joint distribution $\mathbb{Q}$ with the cluster centers $\bm{\mu}$ being fixed.
Phase 2 updates the cluster centers $\bm{\mu}$ with the joint distribution $\mathbb{Q}$ being fixed. Then, FCA iterates these two phases until cluster centers converge.

\paragraph{Phase 1: Finding the joint distribution }
Phase 1 finds the optimal $\mathbb{Q}$ that minimizes the cost in eq. (\ref{eq:main_result}) given $\bm{\mu}.$
For this goal, we modify the Kantorovich problem \citep{Kantorovich2006OnTT, villani2008optimal}, which finds the optimal coupling between two measures for a given cost matrix.
\cref{sec:appen-OT} covers details regarding the Kantorovich problem along with the optimal transport problem.

First, the transport cost matrix between the two (the first term of eq. (\ref{eq:main_result})) is defined by
$\mathbf{C} := [c_{i, j}] \in \mathbb{R}_{+}^{n_{0} \times n_{1}}$ where $ c_{i, j} = 2\pi_{0} \pi_{1} \Vert \mathbf{x}_{i} - \mathbf{x}_{j} \Vert^{2}.$
The clustering cost matrix between the aligned data and their assigned centers (the second term of eq. (\ref{eq:main_result})) is defined by
$\mathbf{D} := [d_{i, j}] \in \mathbb{R}_{+}^{n_{0} \times n_{1}}$ where $d_{i, j} = \min_{k \in [K]} \Vert \mathbf{T}^{\textup{A}} (\mathbf{x}_{i}, \mathbf{x}_{j}) - \mu_{k} \Vert^{2}.$ 
Then, we find the optimal coupling  $\Gamma = [\gamma_{i, j}] \in \mathbb{R}_{+}^{n_{0} \times n_{1}}$ solving
\begin{equation}\label{eq:alignment}
    \begin{split}
        & \min_{\Gamma} \Vert (\mathbf{C} + \mathbf{D}) \odot \Gamma \Vert_{1}
        = \min_{\gamma_{i, j}} ( c_{i, j} + d_{i, j} ) \gamma_{i, j}
        \\
        & \textup{subject to}
        \sum_{i=1}^{n_{0}} \gamma_{i, j} = \frac{1}{n_{1}},
        \sum_{j=1}^{n_{1}} \gamma_{i, j} = \frac{1}{n_{0}},
        \gamma_{i, j} \ge 0.
    \end{split}
\end{equation}
Note that this problem becomes the original Kantorovich problem when $\mathbf{D}=\mathbf{0}.$
Hence, this objective can be also efficiently solved using linear programming, similar to the Kantorovich problem \citep{villani2008optimal}.

Based on the optimal coupling $\Gamma$ that solves eq. (\ref{eq:alignment}), we define the joint distribution as $ \mathbb{Q} ( \{ \mathbf{x}_{i}, \mathbf{x}_{j} \} ) = \gamma_{i, j}.$
That is, we have the measures (weights) $\{ \gamma_{i, j} \}_{i \in [n_{0}], j \in [n_{1}]}$ for the aligned points in $\{ \mathbf{T}^{\textup{A}} ( \mathbf{x}_{i}, \mathbf{x}_{j}) \}_{i \in [n_{0}], j \in [n_{1}]}.$
As a result, we define the corresponding aligned space as the $n_{0} \times n_{1}$ many pairs of the weight and the aligned points
: $\{ (\gamma_{i, j}, \mathbf{T}^{\textup{A}} ( \mathbf{x}_{i}, \mathbf{x}_{j}) \}_{i \in [n_{0}], j \in [n_{1}]}.$

\paragraph{Phase 2: Optimizing cluster centers }
Phase 2 optimizes the cluster centers $\bm{\mu}$ on the aligned space $\{ (\gamma_{i, j}, \mathbf{T}^{\textup{A}} ( \mathbf{x}_{i}, \mathbf{x}_{j}) \}_{i \in [n_{0}], j \in [n_{1}]}$ obtained from Phase 1, by solving
$ \min_{\bm{\mu}} \sum_{i = 1}^{n_{0}} \sum_{j=1}^{n_{1}} \gamma_{i, j} \min_{k} \Vert \mathbf{T}^{\textup{A}} ( \mathbf{x}_{i}, \mathbf{x}_{j}) - \mu_{k} \Vert^{2}. $
Standard clustering algorithms, such as the $K$-means++ algorithm \citep{10.5555/1283383.1283494} or a gradient descent-based algorithm, can be used to update $\bm{\mu}.$
Note that in \cref{sec:abl}, we empirically show that FCA is stable regardless of the algorithm used to optimize $\bm{\mu}.$
See \cref{alg:fca} for the overall algorithm of FCA.

\vskip -0.05in
\begin{algorithm}[h]
    \caption{FCA algorithm}
    \label{alg:fca}
    \begin{algorithmic}[1]
    \INPUT
    (i) Dataset $\mathcal{X}_{0} \cup \mathcal{X}_{1}.$
    (ii) The number of clusters $K.$
    \\
    \STATE Initialize cluster centers $\bm{\mu} = \{\mu_k\}_{k=1}^{K}.$
    \WHILE{$\bm{\mu}$ has not converged}
      \STATE Update $\Gamma = [\gamma_{i, j}] \in \mathbb{R}_{+}^{n_{0} \times n_{1}}$ by solving eq. (\ref{eq:alignment})
      \\ for a fixed $\bm{\mu}.$
             \hfill \textcolor{teal}{// Phase 1: update $\Gamma$}
      \STATE Update $\bm{\mu}$ by solving
            \\
            $ \min_{\bm{\mu}} \sum_{i = 1}^{n_{0}} \sum_{j=1}^{n_{1}} \gamma_{i, j} \min_{k} \Vert \mathbf{T}^{\textup{A}} ( \mathbf{x}_{i}, \mathbf{x}_{j}) - \mu_{k} \Vert^{2} $
            \\ for a fixed $\Gamma.$
            \hfill \textcolor{teal}{// Phase 2: update $\bm{\mu}$}
    \ENDWHILE
    \STATE Build fair assignments:
    for $\mathbf{x}_{i} \in \mathcal{X}_{s},$ define
    \\
    $
    \mathcal{A}_{s}(\mathbf{x}_{i})_{k} := 
    \sum_{\mathbf{x}_{j} \in \mathcal{X}_{s'}} 
    n_{s} \gamma_{i, j} \mathbbm{1} ( \argmin_{k'} \Vert \pi_{s} \mathbf{x}_{i} + \pi_{s'} \mathbf{x}_{j} - \mu_{k'} \Vert^{2} = k ), 
    k \in [K].
    $
    \\
    \OUTPUT
    (i) Cluster centers $\bm{\mu} = \{ \mu_k \}_{k=1}^{K}.$
    (ii) Assignments $\mathcal{A}_0(\mathbf{x}_{i}), \mathbf{x}_{i} \in \mathcal{X}_{0}$ and $\mathcal{A}_1(\mathbf{x}_{j}), \mathbf{x}_{j} \in \mathcal{X}_{1}.$
    \end{algorithmic}
\end{algorithm}
\vskip -0.05in

In \cref{sec:appen-extend}, we introduce a practical extension of FCA to scenarios involving multiple protected groups and present experimental results on a real dataset that validate this extension (see \cref{tab:bank_three_compare}).

\subsection{FCA-C: control of fairness level}\label{sec:control}

It is also important to find the optimal non-perfectly fair clustering.
For this purpose, we introduce a feasible relaxation of FCA called \textbf{FCA-Control (FCA-C)}, a variant of FCA specifically designed for controlling the fairness level.

Let $\mathcal{W} \subset \mathcal{X}_{0} \times \mathcal{X}_{1}$ be a given subset.
The idea of our relaxation is to \textit{apply FCA algorithm only to instances in $\mathcal{W}^{c} = \mathcal{X}_{0} \times \mathcal{X}_{1} \setminus \mathcal{W}$} and to apply the standard $K$-means algorithm to those in $\mathcal{W}.$
Note that $\mathcal{W} = \emptyset$ becomes FCA, while $\mathcal{W} = \mathcal{X}_{0} \times \mathcal{X}_{1}$ results in standard (fair-unaware) clustering.
For given $\bm{\mu}$ and $\mathcal{W},$ we define $C_{\textup{FCA}} (\bm{\mu}, \mathcal{W}) := \big( 2 \pi_{0} \pi_{1} \Vert \mathbf{X}_{0} - \mathbf{X}_{1} \Vert^{2} + \min_{k} \Vert \mathbf{T}^{\textup{A}}(\mathbf{X}_{0}, \mathbf{X}_{1}) - \mu_{k} \Vert^{2} \big) \cdot \mathbbm{1} \left( (\mathbf{X}_{0}, \mathbf{X}_{1} ) \in \mathcal{W}^{c} \right)$
and $C_{K\textup{-means}} (\bm{\mu}, \mathcal{W}) := \big( \min_{k} \left( \pi_{0} \Vert \mathbf{X}_{0} - \mu_{k} \Vert^{2} \right) + \min_{k} \left( \pi_{1} \Vert \mathbf{X}_{1} - \mu_{k} \Vert^{2} \right) \big) \cdot \mathbbm{1} \left((\mathbf{X}_{0}, \mathbf{X}_{1}) \in \mathcal{W} \right) .$
Denote $\varepsilon > 0$ as a hyper-parameter that controls the fairness level.
For a given $\varepsilon,$ FCA-C algorithm minimizes
\begin{equation}\label{eq:FCA-C}
    \begin{split}
        \mathbb{E}_{\mathbf{X}_{0}, \mathbf{X}_{1} \sim \mathbb{Q}} 
        \big(
        C_{\textup{FCA}} (\bm{\mu}, \mathcal{W})
        + C_{K\textup{-means}} (\bm{\mu}, \mathcal{W})
        \big)
     \end{split}
\end{equation}
with respect to $\bm{\mu}, \mathbb{Q}$ and $\mathcal{W}$ satisfying $\mathbb{Q} ( (\mathbf{X}_{0}, \mathbf{X}_{1} ) \in \mathcal{W} ) \le \varepsilon.$
\cref{fig:illust_fca-c} visualizes an example of $\mathcal{W}$ and $\mathbb{Q}.$
\begin{figure}[h]
    \centering
    \includegraphics[width=0.98\linewidth]{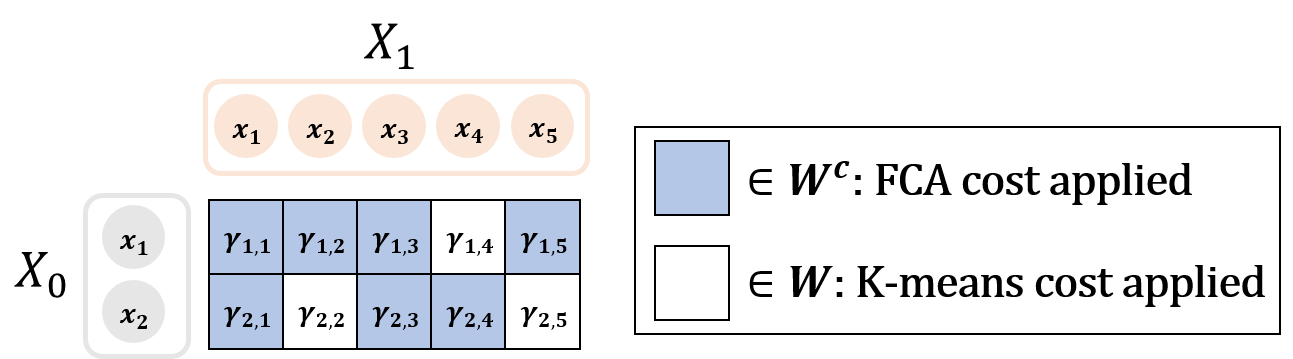}
    \caption{Example illustration of $\mathcal{W}$ and $\Gamma$ (or equivalently $\mathbb{Q}$) when $(n_{0}, n_{1}) = (2, 5)$ and $\varepsilon = \gamma_{1, 4} + \gamma_{2, 2} + \gamma_{2, 5}.$}
    \label{fig:illust_fca-c}
    \vskip -0.1in
\end{figure}

Then, we construct $\mathcal{A}$ as follows.
For $\mathbf{x}_{0} \in \mathcal{X}_{0},$ we define
\begin{equation}\label{eq:fca-c_assign}
    \begin{split}
        & \mathcal{A}_{0}(\mathbf{x}_{0})_{k} = 
        \\
        & \mathbb{P}_{1} ( \argmin_{k'} \|\mathbf{T}^{\textup{A}}(\mathbf{x}_{0}, \mathbf{X}_{1})-\mu_{k'}\|^2 = k, (\mathbf{x}_{0}, \mathbf{X}_{1} )\in \mathcal{W}^{c} )
        \\
        & + \mathbbm{1} ( \argmin_{k'} \|\mathbf{x}_{0}-\mu_{k'}\|^2 = k ) \cdot \mathbb{P}_{1} ( (\mathbf{x}_{0}, \mathbf{X}_{1} ) \in \mathcal{W} ),
        \\
        & \textup{and the assignment function $\mathcal{A}_{1}$ can be defined similarly.}
    \end{split}
\end{equation}
In summary, FCA-C comprises three phases: the first two mirror the Phases 1 and 2 of FCA, and the third updates $\mathcal{W}.$
The overall algorithm for FCA-C is in \cref{alg:fca_c}.

\vskip -0.05in
\begin{algorithm}[h]
    \caption{FCA-C algorithm}
    \label{alg:fca_c}
    \begin{algorithmic}[1]
    \INPUT
    (i) Dataset $\mathcal{X}_{0} \cup \mathcal{X}_{1}.$
    (ii) The number of clusters $K.$
    (iii) Fairness level $\varepsilon \in [0, 1].$
    \\
    \STATE Initialize cluster centers $\bm{\mu} = \{\mu_k\}_{k=1}^{K}$ and a subset $\mathcal{W} \subset \mathcal{X}_{0} \times \mathcal{X}_{1}$ such that $\frac{1}{n_{0} n_{1}} \sum_{i=1}^{n_{0}} \sum_{j=1}^{n_{1}} \mathbbm{I}((\mathbf{x}_{i}, \mathbf{x}_{j}) \in \mathcal{W}) \le \varepsilon.$
    \WHILE{$\bm{\mu}$ has not converged}
        \STATE Calculate the costs $C_{K\textup{-means}}$ for $(\mathbf{x}_{i}, \mathbf{x}_{j}) \in \mathcal{W}$ 
        and $C_{\textup{FCA}}$ for $(\mathbf{x}_{i}, \mathbf{x}_{j}) \in \mathcal{W}^{c}.$
        \STATE Update $\Gamma$ by minimizing eq. (\ref{eq:FCA-C}) for fixed $\bm{\mu}$ and $\mathcal{W}.$
        \\ \hfill \textcolor{teal}{// Phase 1: update $\Gamma$}
        \STATE Update $\bm{\mu}$ by minimizing eq. (\ref{eq:FCA-C}) for fixed $\Gamma$ and $\mathcal{W}.$
        \\ \hfill \textcolor{teal}{// Phase 2: update $\bm{\mu}$}
        \STATE 
        For all $(\mathbf{x}_{i}, \mathbf{x}_{j}) \in \mathcal{X}_{0} \times \mathcal{X}_{1},$
        calculate $\eta(\mathbf{x}_{i}, \mathbf{x}_{j}) := 2 \pi_{0} \pi_{1} \Vert \mathbf{x}_{i} - \mathbf{x}_{j} \Vert^{2} + \min_{k} \|\mathbf{T}^{\textup{A}} (\mathbf{x}_{i}, \mathbf{x}_{j}) - \mu_{k} \Vert^{2}. $
        Let $\eta_\varepsilon$ be the $\varepsilon$th upper quantile.
        Update $\mathcal{W}=\{ (\mathbf{x}_{i}, \mathbf{x}_{j}) \in \mathcal{X}_{0} \times \mathcal{X}_{1}: \eta(\mathbf{x}_{i}, \mathbf{x}_{j})> \eta_\varepsilon\}.$
        \\ \hfill \textcolor{teal}{// Phase 3: update $\mathcal{W}$}
    \ENDWHILE
    \STATE Build fair assignment functions $\mathcal{A}_{0}$ and $\mathcal{A}_{1}$ following \cref{eq:fca-c_assign}.
    \\
    \OUTPUT
    (i) Cluster centers $\bm{\mu} = \{ \mu_k \}_{k=1}^{K}.$
    (ii) Assignments $\mathcal{A}_0(\mathbf{x}_{i}), \mathbf{x}_{i} \in \mathcal{X}_{0}$ and $\mathcal{A}_1(\mathbf{x}_{j}), \mathbf{x}_{j} \in \mathcal{X}_{1}.$
    \end{algorithmic}
\end{algorithm}
\vskip -0.05in

\paragraph{Theoretical validity of FCA-C}

Notably, FCA-C algorithm achieves a solid theoretical guarantee for the (approximately) optimal trade-off between fairness level and clustering utility, for any given fairness level to be satisfied.

First, \cref{thm:fca-c} shows that solving the objective of FCA-C algorithm in eq. (\ref{eq:FCA-C}) is equivalent to solve the clustering cost $C(\bm{\mu}, \mathcal{A})$ under a given fairness constraint, whose proof is given in \cref{sec:thm:fca-c_proof}.
For technical simplicity, assume that the densities of $\mathbb{P}_0$ and $\mathbb{P}_1$ exist (i.e., we consider the population version).
See \cref{rmk:FCA-C} in \cref{sec:appen-proofs} when the densities do not exist.
Recall that $ \mathbb{E}_{s}(\mathcal{A}_{s}(\mathbf{X})_{k}) = \frac{1}{n_{s}} \sum_{\mathbf{x}_{i} \in \mathcal{X}_{s}} \mathcal{A}_{s}(\mathbf{x}_{i})_{k}. $
Let $ \mathbf{A}_{\varepsilon} := \{ (\mathcal{A}_{0}, \mathcal{A}_{1}): \sum_{k=1}^{K} \vert \mathbb{E}_{0}(\mathcal{A}_{0}(\mathbf{X})_{k}) - \mathbb{E}_{1}(\mathcal{A}_{1}(\mathbf{X})_{k}) \vert \le \varepsilon \} $ represent a set of fair assignment functions.
Let $\tilde{C}(\mathbb{Q}, \mathcal{W}, \bm{\mu})$ be the objective of FCA-C in eq. (\ref{eq:FCA-C}).
\begin{theorem}[Equivalence between $\tilde{C}$ and constrained $C$]\label{thm:fca-c}
    Minimizing FCA-C objective $\tilde{C}(\mathbb{Q}, \mathcal{W}, \bm{\mu})$
    with the corresponding assignment function defined in eq. (\ref{eq:fca-c_assign}),
    is equivalent to minimizing $C(\bm{\mu}, \mathcal{A}_{0}, \mathcal{A}_{1})$
    subject to $(\mathcal{A}_{0}, \mathcal{A}_{1}) \in \mathbf{A}_{\varepsilon}.$
\end{theorem}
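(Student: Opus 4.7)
The plan is to prove the equivalence by exhibiting a cost- and constraint-preserving bijection between feasible triples $(\mathbb{Q}, \mathcal{W}, \bm{\mu})$ of the FCA-C problem and feasible triples $(\bm{\mu}, \mathcal{A}_{0}, \mathcal{A}_{1})$ of the constrained $K$-means problem, with the bridge being precisely the assignment formula in eq. (\ref{eq:fca-c_assign}). The proof breaks into two ingredients: (a) the objective equality $C(\bm{\mu}, \mathcal{A}_{0}, \mathcal{A}_{1}) = \tilde{C}(\mathbb{Q}, \mathcal{W}, \bm{\mu})$ for the constructed assignment, and (b) the constraint equivalence between $\mathbb{Q}(\mathcal{W}) \le \varepsilon$ and $(\mathcal{A}_{0}, \mathcal{A}_{1}) \in \mathbf{A}_{\varepsilon}$.

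For (a), I would expand $C(\bm{\mu}, \mathcal{A}_{0}, \mathcal{A}_{1}) = \pi_{0}\mathbb{E}_{0}\sum_{k}\mathcal{A}_{0}(X_{0})_{k}\|X_{0}-\mu_{k}\|^{2} + \pi_{1}\mathbb{E}_{1}\sum_{k}\mathcal{A}_{1}(X_{1})_{k}\|X_{1}-\mu_{k}\|^{2}$ and substitute the two-piece definition of eq. (\ref{eq:fca-c_assign}). The $\mathcal{W}$ part of the assignment is the deterministic nearest-centre indicator, so it contributes exactly $\mathbb{E}_{\mathbb{Q}}[C_{K\textup{-means}}(\bm{\mu},\mathcal{W})]$. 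The $\mathcal{W}^{c}$ part assigns $X_{0}$ and $X_{1}$ jointly to the cluster $k^{\star}(X_{0},X_{1}) = \arg\min_{k}\|\mathbf{T}^{\textup{A}}(X_{0},X_{1})-\mu_{k}\|^{2}$, so applying the same algebraic manipulation that underlies Theorem~3.2 (slice-wise, on $\mathcal{W}^{c}$) converts $\pi_{0}\|X_{0}-\mu_{k^{\star}}\|^{2} + \pi_{1}\|X_{1}-\mu_{k^{\star}}\|^{2}$ into the transport-plus-aligned-clustering cost, yielding $\mathbb{E}_{\mathbb{Q}}[C_{\textup{FCA}}(\bm{\mu},\mathcal{W})]$.

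For (b), the key observation is that on $\mathcal{W}^{c}$ both coordinates $X_{0}$ and $X_{1}$ of a pair receive the same cluster label, so the $\mathcal{W}^{c}$ portion contributes identically to $\mathbb{E}_{0}(\mathcal{A}_{0}(X)_{k})$ and $\mathbb{E}_{1}(\mathcal{A}_{1}(X)_{k})$ and cancels in their difference. All group deviation therefore originates from $\mathcal{W}$, and a one-line triangle-inequality bound gives $\sum_{k}|\mathbb{E}_{0}(\mathcal{A}_{0}(X)_{k})-\mathbb{E}_{1}(\mathcal{A}_{1}(X)_{k})| \le 2\mathbb{Q}(\mathcal{W})$, establishing the forward direction (up to a constant absorbed into the definition of $\varepsilon$). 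For the converse, given any fair $(\mathcal{A}_{0},\mathcal{A}_{1}) \in \mathbf{A}_{\varepsilon}$ I would construct $\mathbb{Q}$ by optimally coupling the cluster-conditional distributions $\mathbb{P}_{s}(\cdot \mid \mathcal{A}_{s}(X) = k)$ for each $k$ up to the common cluster mass, and let $\mathcal{W}$ collect the residual pairs whose cluster assignments disagree; then eq. (\ref{eq:fca-c_assign}) reproduces the prescribed assignment and $\tilde{C} \le C$ follows by running the identity of (a) in reverse.

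The hard part will be this converse construction: producing a coupling $\mathbb{Q}$ with the correct marginals whose induced probabilistic assignment from eq. (\ref{eq:fca-c_assign}) agrees with a prescribed fair pair $(\mathcal{A}_{0},\mathcal{A}_{1})$ everywhere except on a set $\mathcal{W}$ of controlled measure. I expect this to reduce to $K$ separate Monge--Kantorovich subproblems (one per cluster), whose total uncoupled mass equals $\tfrac{1}{2}\sum_{k}|\mathbb{E}_{0}(\mathcal{A}_{0}(X)_{k})-\mathbb{E}_{1}(\mathcal{A}_{1}(X)_{k})|$, which is bounded by $\varepsilon/2$ and hence controllable by the allowed fairness slack.
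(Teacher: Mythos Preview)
Your two-direction scaffold---show every FCA-C triple yields a fair assignment with equal cost, and show every fair assignment is dominated by some FCA-C triple---is exactly the paper's structure, and your arguments for (a) and the forward half of (b) match the paper (your bound $2\mathbb{Q}(\mathcal{W})$ is in fact slightly more careful than what appears there).

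The converse sketch has two genuine gaps. First, the ``bijection'' framing and the claim that eq.~(\ref{eq:fca-c_assign}) ``reproduces the prescribed assignment'' are false and cannot be salvaged: even if $(X_0,X_1)$ are coupled within a common prescribed cluster $k$, eq.~(\ref{eq:fca-c_assign}) sends the pair to $\arg\min_{k'}\|\mathbf{T}^{\textup{A}}(X_0,X_1)-\mu_{k'}\|^2$, which need not equal $k$. What survives is only the inequality $\tilde{C}\le C$, because the aligned-space minimum is no larger than the value at the prescribed $k$, and on $\mathcal{W}$ the nearest-centre cost is no larger than the prescribed-centre cost. The paper states and uses only this inequality in its part B; it suffices for equivalence of minima but is not a bijection. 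The ``hard part'' you identify---forcing the induced assignment to agree with the prescribed one---is therefore both unnecessary and impossible.

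Second, the ``residual pairs whose cluster assignments disagree'' do not form a deterministic set $\mathcal{W}\subset\mathcal{X}_0\times\mathcal{X}_1$: the cluster label attached to a coupled pair is itself random, since the matched-cluster components and the residual component of your coupling share support. The paper handles this by first introducing a Bernoulli variable $R(x_0,x_1)$ marking the residual, proving $\tilde{C}(\mathbb{Q},R,\bm{\mu})\le C$, and only then passing to a deterministic $\mathcal{W}$ via the $\varepsilon$-upper-quantile of the per-pair FCA cost $\eta(x_0,x_1)$; this quantile step is where the density hypothesis stated before the theorem is used. Your sketch is missing this conversion.
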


Note that $\mathbf{A}_{\varepsilon}$ is based on the sum of unfairness for $k \in [K],$ 
whereas several previous works (e.g., \citet{bera2019fair, backurs2019scalable}) focus on the proportions for all $k \in [K],$ which is more directly related to balance.
However, $\mathbf{A}_{\varepsilon}$ is also closely connected to balance:
(i) $\varepsilon = 0$ ensures perfect fairness (i.e., balance = $\min(n_{0} / n_{1}, n_{1} / n_{0}),$
and
(ii) for all $k \in [K],$ the gap between $ \sum_{\mathbf{x}_{i} \in \mathcal{X}_{0}} \mathcal{A}_{0}(\mathbf{x}_{i})_{k} / \sum_{\mathbf{x}_{i} \in \mathcal{X}_{1}} \mathcal{A}_{1}(\mathbf{x}_{i})_{k}$ and $n_{0}/n_{1}$ is bounded by $\varepsilon,$ as shown in \cref{thm:control-2}.
Its proof is provided in \cref{sec:thm:control-2_proof}.
\begin{proposition}[Relationship between balance and $\varepsilon$]\label{thm:control-2}
    For any assignment function $(\mathcal{A}_{0}, \mathcal{A}_{1}) \in \mathbf{A}_{\varepsilon},$
    we have
    \begin{equation}
        \max_{k \in [K]} \left\vert \frac{ \sum_{\mathbf{x}_{i} \in \mathcal{X}_{0}} \mathcal{A}_{0}(\mathbf{x}_{i})_{k}}{\sum_{\mathbf{x}_{j} \in \mathcal{X}_{1}} \mathcal{A}_{1}(\mathbf{x}_{j})_{k}}
        - \frac{n_{0}}{n_{1}} \right\vert
        \le c \varepsilon,
    \end{equation}
    where $c = \frac{n_{0}}{n_{1}} \max_{k \in [K]} \frac{1}{\mathbb{E}_{1} \mathcal{A}_{1}(\mathbf{X})_{k}}.$
\end{proposition}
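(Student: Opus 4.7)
The plan is to rewrite the ratio in the statement directly in terms of the conditional expectations $\mathbb{E}_s \mathcal{A}_s(\mathbf{X})_k$, at which point the membership condition $(\mathcal{A}_0,\mathcal{A}_1) \in \mathbf{A}_\varepsilon$ can be applied coordinate by coordinate. The proof is therefore essentially an algebraic manipulation followed by two elementary inequalities, so I do not expect a genuine obstacle.

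First I would use the identity $\sum_{\mathbf{x}_i \in \mathcal{X}_s} \mathcal{A}_s(\mathbf{x}_i)_k = n_s\, \mathbb{E}_s \mathcal{A}_s(\mathbf{X})_k$ (stated just above the proposition) to express the ratio as
\begin{equation*}
\frac{\sum_{\mathbf{x}_i \in \mathcal{X}_0} \mathcal{A}_0(\mathbf{x}_i)_k}{\sum_{\mathbf{x}_j \in \mathcal{X}_1} \mathcal{A}_1(\mathbf{x}_j)_k}
= \frac{n_0}{n_1}\cdot\frac{\mathbb{E}_0 \mathcal{A}_0(\mathbf{X})_k}{\mathbb{E}_1 \mathcal{A}_1(\mathbf{X})_k}.
\end{equation*}
Subtracting $n_0/n_1$ from both sides and factoring gives
\begin{equation*}
\frac{\sum_{\mathbf{x}_i \in \mathcal{X}_0} \mathcal{A}_0(\mathbf{x}_i)_k}{\sum_{\mathbf{x}_j \in \mathcal{X}_1} \mathcal{A}_1(\mathbf{x}_j)_k} - \frac{n_0}{n_1}
= \frac{n_0}{n_1}\cdot\frac{\mathbb{E}_0 \mathcal{A}_0(\mathbf{X})_k - \mathbb{E}_1 \mathcal{A}_1(\mathbf{X})_k}{\mathbb{E}_1 \mathcal{A}_1(\mathbf{X})_k}.
\end{equation*}

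Next I would exploit the constraint defining $\mathbf{A}_\varepsilon$. Since every summand $|\mathbb{E}_0 \mathcal{A}_0(\mathbf{X})_k - \mathbb{E}_1 \mathcal{A}_1(\mathbf{X})_k|$ is nonnegative and their sum over $k$ is at most $\varepsilon$, each individual term is also at most $\varepsilon$. Taking absolute values in the previous display and applying this bound yields, for every $k \in [K]$,
\begin{equation*}
\left\vert \frac{\sum_{\mathbf{x}_i \in \mathcal{X}_0} \mathcal{A}_0(\mathbf{x}_i)_k}{\sum_{\mathbf{x}_j \in \mathcal{X}_1} \mathcal{A}_1(\mathbf{x}_j)_k} - \frac{n_0}{n_1}\right\vert
\le \frac{n_0}{n_1}\cdot\frac{\varepsilon}{\mathbb{E}_1 \mathcal{A}_1(\mathbf{X})_k}.
\end{equation*}

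Finally I would take the maximum over $k \in [K]$ on both sides and upper-bound the right-hand side by replacing $1/\mathbb{E}_1 \mathcal{A}_1(\mathbf{X})_k$ with $\max_{k \in [K]} 1/\mathbb{E}_1 \mathcal{A}_1(\mathbf{X})_k$, which by definition equals $(n_1/n_0)c$. This produces exactly the claimed bound $c\varepsilon$. The only subtle point worth flagging is the implicit nondegeneracy assumption that $\mathbb{E}_1 \mathcal{A}_1(\mathbf{X})_k > 0$ for every $k$; otherwise the stated constant $c$ is infinite and the bound is vacuous, so no separate argument is needed.
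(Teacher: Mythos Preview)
Your proposal is correct and follows essentially the same argument as the paper: rewrite the ratio via $\sum_{\mathbf{x}_i\in\mathcal{X}_s}\mathcal{A}_s(\mathbf{x}_i)_k = n_s\,\mathbb{E}_s\mathcal{A}_s(\mathbf{X})_k$, bound each coordinate $|\mathbb{E}_0\mathcal{A}_0(\mathbf{X})_k - \mathbb{E}_1\mathcal{A}_1(\mathbf{X})_k|$ by $\varepsilon$ using the $\mathbf{A}_\varepsilon$ constraint, and then take the maximum over $k$. Your remark on the implicit positivity assumption $\mathbb{E}_1\mathcal{A}_1(\mathbf{X})_k>0$ is also apt and not made explicit in the paper.
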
    
\cref{sec:exp-main} empirically validates \cref{thm:fca-c} and \cref{thm:control-2}, by showing that the trade-off between fairness (balance) and utility (cost) is effectively controlled by $\varepsilon.$

Lastly, we establish the approximation guarantee of FCA-C in \cref{thm:approx-fcac}, similar to \citet{bera2019fair}.
The approximation error of a given algorithm is defined by an upper bound on the ratio by which the cost of the solution of the algorithm can exceed the optimal cost.
Let $\tau$ be the approximation error of a standard clustering algorithm used to find initial cluster centers without the fairness constraint.
Suppose that $\sup_{\mathbf{x} \in \mathcal{X}} \Vert \mathbf{x} \Vert^{2} \le R$ for some $R > 0.$
\begin{theorem}[Approximation guarantee of FCA-C]\label{thm:approx-fcac}
    For any given $\varepsilon,$ FCA-C algorithm returns an $(\tau+2)$-approximate solution with a violation $3 R \varepsilon$ for the optimal fair clustering, which is the solution of $\min_{\bm{\mu}, \mathcal{A}_{0}, \mathcal{A}_{1}} C(\bm{\mu}, \mathcal{A}_{0}, \mathcal{A}_{1}) $ subject to $(\mathcal{A}_{0}, \mathcal{A}_{1}) \in \mathbf{A}_{\varepsilon}.$
\end{theorem}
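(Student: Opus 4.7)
The plan is to mirror the approximation analysis of \citet{bera2019fair}, using \cref{thm:fca-c} as the bridge between the FCA-C surrogate $\tilde{C}(\mathbb{Q}, \mathcal{W}, \bm{\mu})$ and the $\varepsilon$-constrained clustering cost $C(\bm{\mu}, \mathcal{A}_{0}, \mathcal{A}_{1})$. Let $\mathrm{OPT}$ denote the optimum of the constrained problem $\min C(\bm{\mu}, \mathcal{A}_{0}, \mathcal{A}_{1})$ subject to $(\mathcal{A}_{0}, \mathcal{A}_{1}) \in \mathbf{A}_{\varepsilon}$, and let $\mathrm{OPT}^{u}$ denote the unconstrained $K$-means optimum; since dropping the fairness constraint cannot increase the optimum, $\mathrm{OPT}^{u} \le \mathrm{OPT}$. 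The $\tau$-approximate standard $K$-means used as initialization yields centers $\bm{\mu}^{(0)}$ whose unconstrained cost is at most $\tau \cdot \mathrm{OPT}^{u} \le \tau \cdot \mathrm{OPT}$, which contributes the $\tau$ portion of the claimed factor for free.

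Given $\bm{\mu}^{(0)}$, I would upper bound $\min_{\mathbb{Q},\mathcal{W}} \tilde{C}(\mathbb{Q}, \mathcal{W}, \bm{\mu}^{(0)})$ by exhibiting a concrete feasible pair $(\mathbb{Q}_{0}, \mathcal{W}_{0})$. Let $(\bm{\mu}^{\ast}, \mathcal{A}^{\ast})$ be the optimal $\varepsilon$-fair solution, and let $\mathcal{W}_{0}$ be the set of cross-group pairs on which $\mathcal{A}^{\ast}$ uses its $\varepsilon$-slack (the pairs whose per-group assignments disagree); by the definition of $\mathbf{A}_{\varepsilon}$ this set carries mass at most $\varepsilon$ under any joint distribution $\mathbb{Q}_{0}$ compatible with $\mathcal{A}^{\ast}$, and on its complement $\mathcal{W}_{0}^{c}$ we align the two groups according to $\mathcal{A}^{\ast}$. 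On $\mathcal{W}_{0}^{c}$, the FCA part of $\tilde{C}$ equals, via the decomposition in \cref{thm:unequalcase}, the perfectly-fair clustering cost of $\mathcal{A}^{\ast}$ restricted to $\mathcal{W}_{0}^{c}$ at centers $\bm{\mu}^{(0)}$; comparing this to the same expression at $\bm{\mu}^{\ast}$ via a triangle-inequality argument along the alignment direction gives a bound of $2\cdot\mathrm{OPT}$, delivering the additive $+2$ in the approximation ratio. On $\mathcal{W}_{0}$, the per-pair standard $K$-means contribution $\pi_{0}\min_{k}\|\mathbf{x}_{0}-\mu_{k}\|^{2}+\pi_{1}\min_{k}\|\mathbf{x}_{1}-\mu_{k}\|^{2}$ is bounded by $3R$ using $\|\mathbf{x}\|^{2} \le R$ and the convex-combination bound $\|\mu_{k}\|^{2} \le R$, so summing over a set of mass at most $\varepsilon$ gives the $3R\varepsilon$ slack. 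Since Phase 1 of FCA-C exactly minimizes $\tilde{C}$ over $\Gamma$ via the LP in eq.~(\ref{eq:alignment}), Phase 2 does not increase $\tilde{C}$, and Phase 3 re-selects $\mathcal{W}$ along the $\varepsilon$-upper-quantile of the per-pair FCA cost (a minimizer of $\tilde{C}$ over $\varepsilon$-feasible $\mathcal{W}$ by a standard exchange argument), the final cost is at most $\tilde{C}(\mathbb{Q}_{0}, \mathcal{W}_{0}, \bm{\mu}^{(0)}) \le (\tau+2)\,\mathrm{OPT} + 3R\varepsilon$.

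The main obstacle is establishing the exact constant $+2$ on $\mathcal{W}_{0}^{c}$ for the $K$-means (squared-distance) objective. A direct triangle inequality $\|\mathbf{x}-\mu^{(0)}_{k}\|^{2} \le 2\|\mathbf{x}-\mu^{\ast}_{k}\|^{2} + 2\|\mu^{\ast}_{k}-\mu^{(0)}_{k}\|^{2}$ is too coarse because the cross term $\|\mu^{\ast}_{k}-\mu^{(0)}_{k}\|^{2}$ is only controllable through a further $\tau$-dependent bound. To avoid this, the plan is to exploit the decomposition of \cref{thm:unequalcase} more carefully: the transport part $2\pi_{0}\pi_{1}\|\mathbf{X}_{0}-\mathbf{X}_{1}\|^{2}$ is independent of $\bm{\mu}$ and already coincides with the transport cost under $\mathcal{A}^{\ast}$, while the aligned-space clustering part $\min_{k}\|\mathbf{T}^{\textup{A}}(\mathbf{X}_{0},\mathbf{X}_{1})-\mu_{k}\|^{2}$ at $\bm{\mu}^{(0)}$ is bounded through the optimality of the pair $(\bm{\mu}^{\ast},\mathcal{A}^{\ast})$ on the aligned space, so that any $\tau$-blow-up is absorbed back into the $\tau$ we already paid for the initialization. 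A secondary technical check is verifying that the Phase 3 quantile choice of $\mathcal{W}$ indeed dominates the hand-picked $\mathcal{W}_{0}$, which follows from a simple exchange argument on $\eta$.
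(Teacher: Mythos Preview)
Your overall plan mirrors the paper's strategy, but the crucial step that delivers the $(\tau+2)$ constant is missing, and your proposed workaround does not close the gap. The paper does not try to bound the aligned-space clustering cost at $\bm{\mu}^{(0)}$ directly; instead it uses the nearest-center re-routing trick from \citet{bera2019fair} at the level of the \emph{original} cost $C$. Concretely, with $\tilde{\bm{\mu}}=\{\tilde{\mu}_{k'}\}$ the optimal fair centers and $\bm{\mu}^{\diamond}$ the $\tau$-approximate centers, define $\mathcal{A}_s^+(\mathbf{x})_k=\sum_{k':\,\tilde{\mu}_{k'}\in N(\mu_k^{\diamond})}\tilde{\mathcal{A}}_s(\mathbf{x})_{k'}$, where $N(\mu_k^{\diamond})$ is the set of optimal centers whose nearest $\bm{\mu}^{\diamond}$-center is $\mu_k^{\diamond}$. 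This merging preserves membership in $\mathbf{A}_\varepsilon$, and for each $\mathbf{x}$ one gets $\|\mathbf{x}-\mu_k^{\diamond}\|^2 \le 2\|\mathbf{x}-\tilde{\mu}_{k'}\|^2 + \|\mathbf{x}-\mu_k^{\diamond}(\mathbf{x})\|^2$, so summing yields $C(\bm{\mu}^{\diamond},\mathcal{A}^+)\le 2\,\mathrm{OPT}+C(\bm{\mu}^{\diamond},\mathcal{A}^{\diamond})\le 2\,\mathrm{OPT}+\tau\,\mathrm{OPT}^u\le(\tau+2)\,\mathrm{OPT}$. Your attempt to ``absorb $\tau$ back'' through the aligned-space decomposition cannot work: the $\tau$-guarantee bounds $\min_k\|\mathbf{x}-\mu_k^{\diamond}\|^2$ on the \emph{original} data, and there is no inequality that transfers this to $\min_k\|\mathbf{T}^{\textup{A}}(\mathbf{x}_0,\mathbf{x}_1)-\mu_k^{\diamond}\|^2$ without paying for the transport part again.

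Your accounting of the $3R\varepsilon$ term also diverges from the paper and relies on a claim that is too strong. You assert that the alternating phases globally minimize $\tilde{C}$, so that the algorithm's output dominates any hand-picked $(\mathbb{Q}_0,\mathcal{W}_0)$; but Phase~1 optimizes $\Gamma$ for a \emph{fixed} $\mathcal{W}$, and Phase~3 picks $\mathcal{W}$ by the $\varepsilon$-upper quantile of the FCA cost alone, which is not the same as minimizing $\tilde{C}$ over $(\mathbb{Q},\mathcal{W})$ jointly. The paper instead shows the algorithm's solution $(\mathbb{Q}^\star,\mathcal{W}^\star)$ is at most $3R\varepsilon$ worse than the global minimizer $(\mathbb{Q}',\mathcal{W}')$ of $\tilde{C}(\cdot,\cdot,\bm{\mu}^{\diamond})$, via the pointwise bound $\textup{FCA cost}(\mathbf{x}_0,\mathbf{x}_1,\bm{\mu})-\textup{$K$-means cost}(\mathbf{x}_0,\mathbf{x}_1,\bm{\mu})\le 3R$ integrated over a set of $\mathbb{Q}$-mass at most $\varepsilon$. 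Combining this with the re-routing bound and \cref{thm:fca-c} gives $C(\bm{\mu}^{\diamond},\mathcal{A}^\star)\le C(\bm{\mu}^{\diamond},\mathcal{A}')+3R\varepsilon\le(\tau+2)\,\mathrm{OPT}+3R\varepsilon$, after which the alternating updates only decrease the cost.
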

A more formal statement is provided in \cref{thm:appen-approx-fcac}, with the proof in \cref{sec:appen-approx-fca-c}.
For example, if we use $K$-means++ initialization \citep{10.5555/1283383.1283494}, then we have $\tau = \mathcal{O}(\log K).$
The violation term $3 R \varepsilon$ suggests that FCA-C would be more efficient when $\varepsilon$ is small. 

\begin{remark}[Comparison of the approximation rate with existing approaches]
    The approximation rate $\tau + 2$ of FCA-C is comparable to that of existing approaches.
    For example, \citet{bera2019fair} proposed an algorithm that achieves a $(\tau + 2)$-approximate solution with a violation of $3.$
    FCA-C and the algorithm of \citet{bera2019fair} provide the same rate of $\tau + 2,$ but FCA-C can attain a smaller violation when $R = 1,$ since $\varepsilon \in [0, 1].$
    Furthermore, \citet{DBLP:conf/waoa/0001SS19} provided a $(5.5\tau + 1)$-approximation algorithm, which exceeds $\tau + 2$ for $\tau > 2/9.$
\end{remark}

\begin{table*}[h]
    \centering
    \footnotesize
    \caption{
    Comparison of the trade-off between \texttt{Bal} and \texttt{Cost} on \textsc{Adult}, \textsc{Bank}, and \textsc{Census} datasets. 
    We \underline{underline} \texttt{Bal} values for the cases of near-perfect fairness (i.e., \texttt{Bal} $ \approx \texttt{Bal}^{\star}$) and use \textbf{bold face for the lowest} \texttt{Cost} value among those cases.
    Similar results when data are not $L_{2}$-normalized are presented in \cref{table:main-1-appen} in \cref{sec:main-exp-appen}.
    }
    \label{table:main-1}
    \vskip 0.1in
    \begin{tabular}{l||c|c|c|c|c|c}
        \toprule
        Dataset / $\texttt{Bal}^{\star}$ & \multicolumn{2}{c|}{\textsc{Adult} / 0.494} & \multicolumn{2}{c|}{\textsc{Bank} / 0.649} & \multicolumn{2}{c}{\textsc{Census} / 0.969}
        \\
        \midrule
        \midrule
        With $L_{2}$ normalization & \texttt{Cost} ($\downarrow$) & \texttt{Bal} ($\uparrow$) & \texttt{Cost} ($\downarrow$) & \texttt{Bal} ($\uparrow$) & \texttt{Cost} ($\downarrow$) & \texttt{Bal} ($\uparrow$)
        \\
        \midrule
        Standard (fair-unaware) & 0.295 & 0.223 & 0.208 & 0.325 & 0.403 & 0.024
        \\
        FCBC \citep{esmaeili2021fair} & 0.314 & 0.443 & 0.685 & 0.615 & 1.006 & {0.926}
        \\
        SFC \citep{backurs2019scalable} & 0.534 & \underline{0.489} & 0.410 & 0.632 & 1.015 & {0.937}
        \\
        FRAC \citep{10.1007/s10618-023-00928-6} & 0.340 & \underline{0.490} & 0.307 & \underline{0.642} & 0.537 & {0.954}
        \\
        FCA \checkmark & \textbf{0.328} & \underline{0.493} & \textbf{0.264} & \underline{0.645} & \textbf{0.477} & \underline{0.962}
        \\
        \bottomrule
    \end{tabular}
    \vskip -0.1in
\end{table*}


\subsection{Reducing computational complexity}\label{sec:partition}

As discussed in \cref{sec:algorithm}, finding the joint distribution $\mathbb{Q}$ is technically equivalent to solving the Kantorovich problem, which can be done by linear programming \citep{villani2008optimal}.
Its computational complexity is approximately $\mathcal{O}(n^{3})$ when $n_{0} = n_{1} = n$ \citep{10.1145/2070781.2024192}, indicating a high computational cost when $n$ is large.
To address this issue in practice, we randomly split each group into $L$ partitions of (approximately) same sizes:
$\{ \mathcal{X}_{0}^{(l)} \}_{l=1}^{L}$ of $\mathcal{X}_{0}$ and $\{ \mathcal{X}_{1}^{(l)} \}_{l=1}^{L}$ of $\mathcal{X}_{1}.$
We then calculate the optimal coupling $\Gamma^{(l)}$ between $\mathcal{X}_{0}^{(l)}$ and $\mathcal{X}_{1}^{(l)},$ by solving eq. (\ref{eq:alignment}).
The (full) optimal coupling between $\mathcal{X}_{0}$ and $\mathcal{X}_{1}$ is estimated by $\hat{\Gamma} := \textup{diag} ( \frac{1}{L} \Gamma^{(1)}, \ldots, \frac{1}{L} \Gamma^{(L)} ).$
Let $m = \vert \mathcal{X}_{0}^{(l)} \vert + \vert \mathcal{X}_{1}^{(l)} \vert$ be the partition size.
This technique can theoretically reduce complexity by $\mathcal{O}(n/m) \times \mathcal{O}(m^{3}) = \mathcal{O}(nm^{2}).$
In our experiments, we apply this technique with $m = 1024,$ which is significantly faster than using full datasets, while yielding similar performance (see \cref{sec:abl} for the results).

\section{Experiments}\label{sec:exp}

This section presents experiments showing that
(i) FCA outperforms baseline methods in terms of the trade-off between fairness and utility;
(ii) FCA-C effectively controls the fairness level;
and
(iii) FCA is numerically stable, efficient, and scalable.
We further illustrate the applicability of FCA to visual clustering.

\subsection{Settings}\label{sec:exp-settings}

\paragraph{Datasets and performance measures}

We use three benchmark tabular datasets, \textsc{Adult} \citep{misc_adult_2}, \textsc{Bank} \citep{misc_bank_marketing_222}, and \textsc{Census} \citep{misc_us_census_data_(1990)_116}, from the UCI Machine Learning Repository\footnote{\url{https://archive.ics.uci.edu/datasets}} \citep{Dua:2019}. 
The sensitive attribute is defined by gender, marital status, and gender for \textsc{Adult}, \textsc{Bank}, and \textsc{Census}, respectively.
The number of clusters $K$ is set to 10 for \textsc{Adult} and \textsc{Bank}, and 20 for \textsc{Census}, following \citet{ziko2021variational}.
The features of data are scaled to have zero mean and unit variance.
We then optionally apply $L_{2}$ normalization, as used in \citet{ziko2021variational}.
Further details are given in \cref{sec:appen-datasets}.

We consider two performance measures, \texttt{Cost} and \texttt{Bal}.
The former assesses clustering utility, while the latter measures fairness level.
For a given assignment function $\mathcal{A},$ let $\tilde{\mathcal{A}}_s(\mathbf{x})_{k} := \mathbbm{1} (\argmax_{k'} \mathcal{A}_s(\mathbf{x})_{k'} = k)$ be a deterministic version of $\mathcal{A}.$
Note that we use this deterministic $\tilde{\mathcal{A}}$ instead of $\mathcal{A}$ itself for fair comparison with existing algorithms.
For a given $\mathbf{x},$ let $k(\mathbf{x},s)$ be the index $k$ such that $\tilde{\mathcal{A}}_s(\mathbf{x})_{k} = 1.$
Then, for given $\bm{\mu} = \{ \mu_{k} \}_{k=1}^{K},$ \texttt{Cost} and \texttt{Bal} on $\mathcal{D}$ are defined as:
$ \texttt{Cost} = \frac{1}{n} \sum_{(\mathbf{x},s) \in \mathcal{D}} \Vert \mathbf{x} - \mu_{k(\mathbf{x},s)} \Vert^{2} $
and
$ \texttt{Bal} = \min_{k} \min \left( \tilde{r}_{k}, 1 / \tilde{r}_{k} \right),$
where
$ \tilde{r}_{k} = \sum_{\mathbf{x} \in \mathcal{X}_{0}} \tilde{\mathcal{A}}_{0}(\mathbf{x})_{k} \big/ \sum_{\mathbf{x} \in \mathcal{X}_{1}} \tilde{\mathcal{A}}_{1}(\mathbf{x})_{k}. $
Let $\texttt{Bal}^{\star} := \min(n_{0}/n_{1}, n_{1}/n_{0})$ be the balance of perfectly fair clustering for given $\mathcal{D}.$

\paragraph{Baseline algorithms and implementation details}

For the baseline algorithms compared with FCA, we consider four methods:
a pre-processing (fairlet-based) method SFC from \citet{backurs2019scalable},
two post-processing methods FCBC from \citet{esmaeili2021fair} and FRAC from \citet{10.1007/s10618-023-00928-6},
and an in-processing method VFC from \citet{ziko2021variational}, which differs from the other baselines as it is specifically designed to control the trade-off between fairness level \texttt{Bal} and clustering utility \texttt{Cost}.
For details of these baselines, refer to \cref{sec:appen-expdetails}.


When solving the linear program (i.e., finding the coupling matrix $\Gamma$), we use the \texttt{POT} library \citep{flamary2021pot}.
For finding cluster centers, we adopt the \texttt{scikit-learn} library \citep{scikit-learn} to run the $K$-means algorithm.
We specifically choose the $K$-means++ initialization \citep{10.5555/1283383.1283494} with Lloyd's algorithm \citep{1056489}.
An ablation study comparing the Lloyd's algorithm and a gradient descent-based algorithm (i.e., Adam \citep{https://doi.org/10.48550/arxiv.1412.6980}) for finding cluster centers is provided in \cref{sec:abl}.
The maximum number of iterations is set to $100,$ and we select the best iteration when \texttt{Cost} is minimized.

\subsection{Performance comparison results}\label{sec:exp-main}

\paragraph{Trade-off: fairness level vs. clustering utility}

First, we compare FC algorithms in terms of their ability to achieve reasonable trade-off between fairness level (\texttt{Bal}) and clustering utility (\texttt{Cost}).
Specifically, we compare FCA with three baselines: SFC, FCBC, and FRAC, all of which are designed to achieve perfect fairness.
\cref{table:main-1} presents \texttt{Bal} and \texttt{Cost} of the four methods, where FCA consistently attains the lowest \texttt{Cost} (i.e., the highest utility).

Notably, FCA outperforms SFC by achieving higher \texttt{Bal} and lower \texttt{Cost}, highlighting the effectiveness of in-processing approach for finding better matchings.
While FCA requires slightly more iterations until convergence (see \cref{table:appen-FCA_vs_SFC} in \cref{sec:appen-compare_compute}),
it remains superior even with a smaller number of iterations similar to SFC (see \cref{table:appen-FCA_vs_SFC2} in \cref{sec:appen-compare_compute}).
Moreover, FCA also outperforms SFC in $K$-median clustering (see \cref{table:appen-FCA_vs_SFC_median} in \cref{sec:appen-k-median}), which is the original clustering objective of SFC.
\cref{tab:vs_original_fairlet} in \cref{sec:main-exp-appen} additionally shows that FCA also outperforms the first fairlet-based method introduced by \citet{chierichetti2017fair}.
Furthermore, FCBC and FRAC offer lower \texttt{Bal} and higher \texttt{Cost} than FCA in most cases.
Specifically for a fixed $\texttt{Cost} \approx 0.314,$ FCA achieves a higher \texttt{Bal} than FCBC (see \cref{table:fca_vs_fcbc} in \cref{sec:main-exp-appen}).
These results suggest that FCA is the most effective algorithm for maximizing utility under perfect fairness.

\begin{table}[h]
    \centering
    \footnotesize
    \caption{
    Comparison of the two in-processing algorithms, VFC and FCA, in terms of numerical stability when achieving the maximum \texttt{Bal}, on \textsc{Adult}, \textsc{Bank}, and \textsc{Census} datasets with $L_{2}$ normalization.
    \textbf{Bold}-faced results are the highest \texttt{Bal}.
    Similar results without $L_{2}$ normalization are in \cref{table:main-2-appen} in \cref{sec:main-exp-appen}.
    }
    \label{table:main-2}
    \vskip 0.1in
    \begin{tabular}{l||c|c}
        \toprule
        \multicolumn{3}{c}{\texttt{Bal} (\texttt{Cost})}
        \\
        \midrule
        Dataset / $\texttt{Bal}^{\star}$ & VFC \citep{ziko2021variational} & FCA \checkmark
        \\
        \midrule
        \textsc{Adult} / 0.494 & 0.437 (0.310) & \textbf{0.493} (0.328)
        \\
        \textsc{Bank} / 0.649 & 0.568 (0.221) & \textbf{0.645} (0.264)
        \\
        \textsc{Census} / 0.969 & 0.749 (0.432) & \textbf{0.962} (0.477)
        \\
        \bottomrule
    \end{tabular}
    \vskip -0.1in
\end{table}

\paragraph{Numerical stability}\label{sec:exp-main-2}
Second, we compare the two in-processing algorithms, FCA and VFC, in terms of their ability to achieve a high (near-perfect) fairness level (i.e., $\texttt{Bal} \approx \texttt{Bal}^{\star}$) without numerical instability.
To do so, we obtain the maximum \texttt{Bal} that each algorithm can attain.
We also evaluate robustness to data pre-processing, i.e., the impact of $L_{2}$ normalization to numerical stability.

The results presented in \cref{table:main-2} show that VFC achieves \texttt{Bal} values no higher than 90\% of $\texttt{Bal}^{\star}.$
While FCA is explicitly designed to achieve perfect fairness, VFC is designed to control \texttt{Bal} using a hyper-parameter.
However, even with a large hyper-parameter (see \cref{sec:appen-expdetails} for the chosen hyper-parameter), VFC fails to achieve perfect fairness (i.e., \texttt{Bal} remains significantly lower than $\texttt{Bal}^{\star}$).
Moreover, the performance gap between FCA and VFC becomes greater without $L_{2}$ normalization than with it, and VFC further fails on \textsc{Census} dataset due to an overflow (see \cref{sec:main-exp-appen} for details on this issue).

\paragraph{Control of fairness level}\label{sec:exp-main-3}

In addition to the previous analysis for the scenario of perfect fairness (where $\texttt{Bal} \approx \texttt{Bal}^{\star}$), we also compare FCA-C and VFC in terms of controlling \texttt{Bal} lower than $\texttt{Bal}^{\star}.$
To this end, we assess the ability to achieve reasonable \texttt{Cost} while controlling \texttt{Bal}.

\begin{figure}[h]
    \vskip -0.1in
    \centering
    \includegraphics[width=0.35\textwidth]{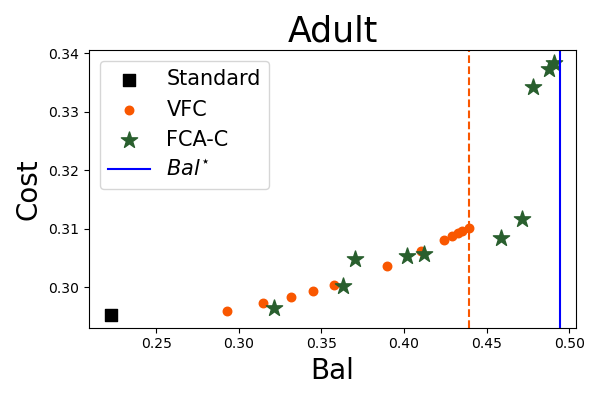}
    \vskip -0.15in
    \caption{
    \texttt{Bal} vs. \texttt{Cost} trade-offs on \textsc{Adult} dataset.
    Black square ({\tiny $\blacksquare$}) is from the standard clustering, orange circle (\textcolor{vfc}{$\bullet$}) is from VFC, green star (\textcolor{fca-c}{$\star$}) is from FCA-C, orange dashed line (\textcolor{vfc}{- -}) is the maximum of \texttt{Bal} that VFC can achieve, and blue line (\textcolor{blue}{--}) is the maximum achievable balance $\texttt{Bal}^{\star}.$
    }
    \vskip -0.1in
    \label{fig:FCA-C}
\end{figure}


\cref{fig:FCA-C} displays the performance of FCA-C and VFC across various fairness levels, on \textsc{Adult} dataset.
It shows that FCA-C achieves favorable trade-off for many values of \texttt{Bal} (by controlling $\varepsilon$), with \texttt{Cost} similar to that of VFC.
Similar results on other datasets or without $L_{2}$ normalization are given in \cref{fig:FCA-C-rest,fig:FCA-C-2} in \cref{sec:main-exp-appen}.
Furthermore, FCA-C can attain \texttt{Bal} beyond the orange dashed vertical line, which is the maximum achievable \texttt{Bal} by VFC.
Refer to \cref{sec:appen-expdetails} for details on selecting $\varepsilon$ for FCA and the hyper-parameters used in VFC.

Furthermore, the computation times of FCA-C (when $\varepsilon > 0$) and FCA (i.e., FCA-C with $\varepsilon = 0$) are compared in \cref{table:appen-FCA_vs_FCA-C} in \cref{sec:appen-compare_compute}, indicating that finding $\mathcal{W}$ in FCA-C does not require significant additional time.

\subsection{Applicability to visual clustering}\label{sec:exp-image}

We further evaluate the applicability of FCA to visual clustering using two image datasets, which are commonly used in recent  visual clustering algorithms \citep{li2020deep, zeng2023deep}: 
(i) \textsc{RMNIST}, a mixture of the original MNIST and a color-reversed version,
and (ii) \textsc{Office-31}, consisting of images from two domains. 

We compare FCA with SFC, VFC and two visual FC baselines: DFC \citep{li2020deep} and FCMI \citep{zeng2023deep}.
Note that DFC and FCMI are end-to-end algorithms that simultaneously perform clustering and learn latent space using autoencoder with fairness constraints.
In contrast, FCA, SFC, and VFC are applied to a latent space pre-trained by autoencoder without any fairness constraints.
The clustering utility is evaluated using $\texttt{ACC}$ (accuracy based on assignment indices and ground-truth labels) and $\texttt{NMI}$ (normalized mutual information between assigned cluster distribution and ground-truth label distribution), as in \citet{zeng2023deep}.

\begin{table}[h]
    \vskip -0.1in
    \centering
    \footnotesize
    \caption{
    Comparison of clustering utility (\texttt{ACC}) and fairness level (\texttt{Bal}) on two image datasets.
    `Standard (fair-unaware)' indicates autoencoder + $K$-means.
    The first-place values are \textbf{bold}, and second-place values are \underline{underlined}.
    }
    \label{table:main-image}
    \vskip 0.1in
    \begin{tabular}{l||c|c||c|c}
        \toprule
        Dataset  & \multicolumn{2}{c||}{\textsc{RMNIST}} & \multicolumn{2}{c}{\textsc{Office-31}}
        \\
        $\texttt{Bal}^{\star}$ & \multicolumn{2}{c||}{1.000} & \multicolumn{2}{c}{0.282}
        \\
        \midrule
        \texttt{A} = \texttt{ACC}, \texttt{B} = \texttt{Bal} & \texttt{A} ($\uparrow$) & \texttt{B} ($\uparrow$) & \texttt{A} ($\uparrow$) & \texttt{B} ($\uparrow$)
        \\
        \midrule
        Standard (fair-unaware) & 41.0 & 0.000 & 63.8 & 0.192
        \\
        SFC \citep{backurs2019scalable} & 51.3 & \textbf{1.000} & 61.6 & \underline{0.267}
        \\
        VFC \citep{ziko2021variational} & 38.1 & 0.000 & 64.8 & 0.212 
        \\
        DFC \citep{li2020deep} & 49.9 & 0.800 & \underline{69.0} & 0.165 
        \\
        FCMI \citep{zeng2023deep} & \underline{88.4} & \underline{0.995} & \textbf{70.0} & {0.226}
        \\
        FCA \checkmark & \textbf{89.0} & \textbf{1.000} & 67.6 & \textbf{0.270}
        \\
        \bottomrule
    \end{tabular}
    \vskip -0.05in
\end{table}

\cref{table:main-image} compares FCA with the baseline methods, showing its performance is comparable to the state-of-the-art FCMI and generally better than SFC, VFC, and DFC.
While DFC achieves slightly higher \texttt{ACC} on the \textsc{Office-31} dataset, its \texttt{Bal} is significantly lower (0.165 vs. 0.270 of FCA).
Notably, while FCA operates as a two-step approach leveraging the pre-trained latent space, while DFC and FCMI are end-to-end methods.
Moreover, FCA offers practical benefits: (i) requiring fewer hyper-parameters, reducing burden of hyper-parameter tuning in the absence of ground-truth labels,
and (ii) adaptability with any pre-trained latent space.

Further details are provided in \cref{sec:appen-exp-image}, where \cref{table:main-image-full} presents the full results of \cref{table:main-image}, which includes the similar superiority of FCA in terms of \texttt{NMI} values.
Moreover, \cref{table:appen-image} highlights FCA's additional benefits over the fairlet-based method (SFC), in terms of matching efficiency.

\subsection{Ablation studies}\label{sec:abl}

\paragraph{(1) Selection of the partition size $m$}
We empirically confirm the efficiency of the partitioning technique in \cref{sec:partition}, by investigating the convergence of \texttt{Bal} and \texttt{Cost} with respect to the partition size $m.$ 
In \cref{sec:appen-partitionsize}, \cref{fig:batch_size2} suggests that $m=1024$ yields reasonable results, while \cref{table:computation-2} shows a significant reduction in computation time.

\paragraph{(2) Optimization and initialization of cluster centers $\bm{\mu}$:}
We also analyze the stability with respect to (i) the choice of optimization algorithm for finding cluster centers, and (ii) the initialization of cluster centers.
\cref{sec:abl-appen-2} shows that FCA is robust to both factors.

\paragraph{(3) Consistent outperformance across various $K$:}
We confirm that FCA consistently outperforms baseline methods for various values of $K.$
That is, as shown in \cref{fig:K_adult} in \cref{sec:appen-K_adult}, FCA outperforms the baseline methods across all $K \in \{5, 10, 20, 40\}.$

\paragraph{(4) Scalability for large-scale data:}
We further investigate the adaptivity of FCA to large-scale data.
To do so, we compare FCA and VFC using a large-scale dataset ($n = 10^{6}$), and observe that FCA still outperforms VFC (see \cref{sec:appen-large_data}).

\paragraph{(5) Linear program vs. Sinkhorn for optimizing $\mathbb{Q}$}
In our main experiments, we solve the objective in eq. (\ref{eq:alignment}) with the linear programming.
To explore alternative approaches, we compare (i) FCA with the linear programming and (ii) FCA with the Sinkhorn algorithm, in terms of \texttt{Cost}, \texttt{Bal}, and runtime.
\cref{tab:vs_sinkhorn} in \cref{sec:appen-compare_sinkhorn} shows that their performances are comparable; however, the Sinkhorn algorithm requires tuning a regularization parameter, suggesting that solving the linear program remains a practical and reliable option.

\section{Conclusion and discussion}\label{sec:conc}

This paper has proposed FCA, an in-processing algorithm for fair clustering, inspired by the theoretical equivalence between optimal perfectly fair clustering and finding cluster centers in the aligned space.
FCA algorithm is based on well-known algorithms, the $K$-means algorithm and linear programming, making it transparent and stable.
Furthermore, we have developed FCA-C (a variant of FCA) to control fairness level, and established its approximation guarantee.
Experimental results show FCA achieves near-perfect fairness, superior clustering utility, and robustness to data pre-processing, optimization methods, and initialization.

A promising direction for future work is to apply FCA to other clustering algorithms such as model-based clustering, e.g., Gaussian mixture, which we will pursue in near future.

\clearpage
\section*{Acknowledgements}

This work was partly supported by 
the National Research Foundation of Korea(NRF) grant funded by the Korea government(MSIT) (No. 2022R1A5A7083908),
the National Research Foundation of Korea(NRF) grant funded by the Korea government(MSIT) (RS-2025-00556079),
Next-generation Intelligence semiconductor R\&D Program through the National Research Foundation of Korea(NRF) funded by the Korea government(MSIT)(RS-2024-00431718),
and by Institute of Information \& communications Technology Planning \& Evaluation (IITP) grant funded by the Korea government(MSIT) [NO.RS-2021-II211343, Artificial Intelligence Graduate School Program (Seoul National University)].



\section*{Impact Statement}

This study aims to advance the field of algorithmic fairness in machine learning.
In particular, given the growing application of clustering algorithms across diverse domains, we believe the proposed framework offers a promising approach to fair clustering.
For example, it can ensure that privileged and unprivileged groups are clustered fairly, leading to unbiased downstream decision-making.
Thus, this study is expected to yield a potentially positive impact rather than significant negative societal consequences.

Furthermore, incorporating the joint distribution in the proposed algorithm enables the identification of specific data that are matched and assigned together.
This feature can enhance interpretability of fair clustering through the lens of implicit matching.

While this work focuses on proportional (group) fairness, we acknowledge that such an approach may overlook other fairness notions, such as individual fairness or social fairness (e.g., balancing clustering costs across groups), which are not explicitly addressed within the proportional fairness framework.
We therefore encourage readers to interpret our results with this scope in mind: our goal is to improve the trade-off between proportional fairness level and clustering cost, but care must be taken when considering the broader implications for other fairness notions.
In conclusion, we anticipate future studies that integrate fair clustering research with these additional notions of fairness.

\bibliography{ref}
\bibliographystyle{icml2025}

\newpage
\appendix
\onecolumn

\section{Supplementary discussion}

\subsection{Fairlet-based methods}\label{sec:appen-fairlet}

Fairlet decomposition was first introduced in \citet{chierichetti2017fair}, providing a pre-processing method for fair clustering.
A fairlet is defined as a subset (as small as possible) of data in which the proportion of the sensitive attribute is balanced.
The dataset is then partitioned into disjoint fairlets.
It is important to note that the fairlets are not built arbitrarily; instead, the sum of distances among instances within each fairlet is minimized (i.e., each fairlet consists of similar instances).
Finding such fairlets can be done by solving the minimum cost flow problem.

Notably, when $n_{0} = n_{1},$ building fairlets is equivalent to finding the optimal coupling $\Gamma$ in the Kantorovich problem, which can be understood as a special case of minimum cost problem \citep{peyre2020computationaloptimaltransport, 9996881}.
Hence, we can say that the fairlet-based methods find the coupling only once, then apply the standard clustering algorithm.

After building the fairlets, each fairlet is then deterministically assigned to a cluster.
Since fairness is implicitly guaranteed within each fairlet, the resulting clustering directly becomes also fair.
The representatives of each fairlet can be arbitrarily chosen when $n_{0} = n_{1},$ or chosen by the medoids (or possibly the centroids) when $n_{0} \neq n_{1},$ as suggested by \citet{chierichetti2017fair}.
Then, a standard clustering algorithm is applied to the set of these chosen representatives.

On the other hand, the computational cost is high, with most of the time spent finding the fairlets due to the quadratic complexity of the minimum cost flow problem.
To address this issue, SFC \citep{backurs2019scalable} proposed a scalable algorithm for fairlet decomposition by using a reduction approach using metric embedding and trees.
See \cref{sec:append-baselines} for details on the SFC algorithm.

\subsection{Optimal transport problem}\label{sec:appen-OT}

The notion of Optimal Transport (OT) provides a geometric view of the discrepancy between two probability measures.
For two given probability measures $\mathcal{P}_{1}$ and $\mathcal{P}_{2},$
a map $\mathbf{T} : \textup{Supp}(\mathcal{P}_{1}) \rightarrow \textup{Supp}(\mathcal{P}_{2})$ is defined as `transport map'
if $\mathbf{T}_{\#} \mathcal{P}_{1} = \mathcal{P}_{2},$
where $\mathbf{T}_{\#} \mathcal{P}_{1} (A) = \mathcal{P}_{1}(\mathbf{T}^{-1}(A)), \forall A,$ is the push-forward measure.
The OT map is the minimizer of transport cost among all transport maps.
That is, the OT map from $\mathcal{P}_{1}$ to $\mathcal{P}_{2}$ is the solution of
$
\min_{ \mathbf{T}: \mathbf{T}_{\#} \mathcal{P}_{1} = \mathcal{P}_{2} } \mathbb{E}_{ \mathbf{X} \sim \mathcal{P}_{1} } \left( c \left( \mathbf{X}, \mathbf{T} ( \mathbf{X} ) \right) \right)
$
for a pre-specified cost function $c$ (e.g., $L_{2}$ distance),
which is so-called the Monge problem.

Kantorovich relaxed the Monge problem by seeking the optimal coupling (joint distribution) between two distributions.
The Kantorovich problem is mathematically formulated as
$
\inf_{\pi \in \Pi(\mathcal{P}_{1}, \mathcal{P}_{2})} \mathbb{E}_{\mathbf{X}, \mathbf{Y} \sim \pi} \left( c(\mathbf{X}, \mathbf{Y}) \right)
$ 
where $\Pi(\mathcal{P}_{1}, \mathcal{P}_{2})$ is the set of all joint measures of $\mathcal{P}_{1}$ and $\mathcal{P}_{2}.$
For two empirical measures, this problem can be solved by the use of linear programming as follows.
For given two empirical distributions on $\mathcal{X}_{0} = \{ \mathbf{x}_{i} \}_{i=1}^{n_{0}}$ and $\mathcal{X}_{1} = \{ \mathbf{x}_{j}\}_{j=1}^{n_{1}},$
the cost matrix between the two is defined by
$\mathbf{C} := [c_{i, j}] \in \mathbb{R}_{+}^{n_{0} \times n_{1}}$ where $ c_{i, j} = \Vert \mathbf{x}_{i} - \mathbf{x}_{j} \Vert^{2}.$
Then, the optimal coupling (joint distribution) is defined by the matrix $\Gamma = [\gamma_{i, j}] \in \mathbb{R}_{+}^{n_{0} \times n_{1}}$ that solves the following objective:
\begin{equation*}
    \min_{\Gamma} \Vert \mathbf{C} \odot \Gamma \Vert_{1}
    = \min_{\gamma_{i, j}} c_{i, j} \gamma_{i, j}
    \textup{ s.t. }
    \sum_{i=1}^{n_{0}} \gamma_{i, j} = \frac{1}{n_{1}},
    \sum_{j=1}^{n_{1}} \gamma_{i, j} = \frac{1}{n_{0}},
    \gamma_{i, j} \ge 0.
\end{equation*}

This problem can be solved by the use of linear programming.
For the case of large $n$ with $n_{0} \neq n_{1},$ various feasible estimators have been developed \citep{NIPS2013_af21d0c9, 10.5555/3157382.3157482}, e.g., the Sinkhorn algorithm \citep{NIPS2013_af21d0c9}.
Note only practical implementations, but theoretical aspects such as minimax estimation have also discussed deeply \citep{seguy2018large, yang2018scalable, Deb2021RatesOE, 10.1214/20-AOS1997}.
Recently, the OT map is utilized in diverse domains, for example,
economics \citep{91c82cc5-becd-3eed-8ed9-384defa435e2, RePEc:spr:joecth:v:42:y:2010:i:2:p:317-354}, domain adaptation \citep{10.1007/978-3-030-01225-0_28, pmlr-v89-forrow19a}, and computer vision \citep{7053911, salimans2018improving}.
Several studies, including \citet{pmlr-v115-jiang20a, NEURIPS2020_51cdbd26, pmlr-v97-gordaliza19a, kim2025fairness}, also employed the OT map in the field of algorithmic fairness for supervised learning.

\subsection{Extension to multiple protected groups}\label{sec:appen-extend}

In this paper, we mainly focus on the case of two protected groups, for ease of discussion.
However, it is possible to extend FCA to handle multiple (more than two) protected groups.
The idea is equivalent to the case of two protected groups: matching multiple individuals from different protected groups.

\paragraph{The case of equal sample sizes}

Let $G$ be the number of protected groups, and denote $s^{*} \in \{0, \ldots, G-1\}$ as a fixed sensitive attribute as an anchor (a reference) attribute.
For the case of $n_{0} = \ldots = n_{G-1},$ we can similarly decompose the objective function in terms of matching map, as follows.
First, we decompose the clustering objective similar to the proof of \cref{thm:equalcase}:
\begin{equation*}
    \begin{split}
        C(\bm{\mu}, \mathcal{A}_{0}, \ldots, \mathcal{A}_{G-1}) 
        & = \mathbb{E} \sum_{k=1}^{K} \mathcal{A}_{S}(\mathbf{X}) \Vert \mathbf{X} - \mu_{k} \Vert^{2}
        \\
        & = \frac{1}{G} \mathbb{E}_{s^{*}} \sum_{k=1}^{K} \mathcal{A}_{s^{*}}(\mathbf{X})_{k} \left( \Vert \mathbf{X} - \mu_{k} \Vert^{2} 
        + \sum_{s' \neq s^{*}} \Vert \mathbf{T}_{s'}(\mathbf{X}) - \mu_{k} \Vert^{2} \right),
    \end{split}
\end{equation*}
where $\mathbf{T}_{s'}$ is the one-to-one matching map from $\mathcal{X}_{s^{*}}$ to $\mathcal{X}_{s'}$ for $s' \in \{0, \ldots, G-1\}.$
Note that $\mathbf{T}_{s'}$ then becomes the identity map from $\mathcal{X}_{s^{*}}$ to $\mathcal{X}_{s^{*}}$ for $s' = s^{*}.$
Then for any given $\mathbf{x}$ and $\mu_{k},$ we can decompose $\Vert \mathbf{x} - \mu_{k} \Vert^{2}$ by
\begin{equation*}
    \begin{split}
        \left\Vert \frac{ \sum_{s' \neq s^{*}} (\mathbf{x} - \mathbf{T}_{s'}(\mathbf{x})) }{G} \right\Vert^{2} 
        + \left\Vert \frac{ \mathbf{x} + \sum_{s' \neq s^{*}} \mathbf{T}_{s'}(\mathbf{x}) }{G} - \mu_{k} \right\Vert^{2}
        \\
        + 2 \left\langle 
            \frac{ \sum_{s' \neq s^{*}} (\mathbf{x} - \mathbf{T}_{s'}(\mathbf{x})) }{G},
            \frac{ \mathbf{x} + \sum_{s \neq s^{*}} \mathbf{T}_{s'}(\mathbf{x}) }{G} - \mu_k
            \right\rangle.
    \end{split}
\end{equation*}
We also similarly decompose $\Vert \mathbf{T}_{s'}(\mathbf{x}) - \mu_{k} \Vert^{2}$ by
\begin{equation*}
    \begin{split}
        \left\Vert \frac{ \sum_{s'' \neq s', s'' \in \{0, \ldots, G-1 \}} (\mathbf{T}_{s'}(\mathbf{x}) - \mathbf{T}_{s''}(\mathbf{x}) ) }{G} \right\Vert^{2} 
        + \left\Vert \frac{ \mathbf{x} + \sum_{s' \neq s^{*}} \mathbf{T}_{s'}(\mathbf{x}) }{G} - \mu_{k} \right\Vert^{2}
        \\
        + 2 \left\langle 
            \frac{ \sum_{s'' \neq s', s'' \in \{0, \ldots, G-1 \}} (\mathbf{T}_{s'}(\mathbf{x}) - \mathbf{T}_{s''}(\mathbf{x}) ) }{G},
            \frac{ \mathbf{x} + \sum_{s' \neq s^{*}} \mathbf{T}_{s'}(\mathbf{x}) }{G} - \mu_k
            \right\rangle.
    \end{split}
\end{equation*}
By summing up the above $G$ many terms, we have the final result that
\begin{equation*}
    \begin{split}
        & C(\bm{\mu}, \mathcal{A}_{0}, \ldots, \mathcal{A}_{G-1}) 
        \\
        & = \frac{1}{G} \mathbb{E}_{s^{*}} \sum_{k=1}^{K} \mathcal{A}_{s^{*}}(\mathbf{X})_{k} 
        \left( 
        \sum_{s=0}^{G-1} \left\Vert \frac{ \sum_{s' \neq s} (\mathbf{T}_{s}(\mathbf{X}) - \mathbf{T}_{s'}(\mathbf{X})) }{G} \right\Vert^{2}
        + \left\Vert \frac{ \mathbf{X} + \sum_{s' \neq s^{*}} \mathbf{T}_{s'}(\mathbf{X}) }{G} - \mu_{k} \right\Vert^{2} 
        \right).
    \end{split}
\end{equation*}
Note that this result holds for any anchor $s^{*} \in \{0, \ldots, G-1\}.$

\paragraph{The case of unequal sample sizes}

For the case of unequal sample sizes, we also derive a similar decomposition (i.e., eq. (\ref{eq:multiple_g})).
We first define the alignment map as:
$
\mathbf{T}^{\textup{A}} (\mathbf{x}_{0}, \ldots, \mathbf{x}_{G-1})
:= \pi_{0} \mathbf{x}_{0} + \ldots + \pi_{G-1} \mathbf{x}_{G-1},
$
where $\pi_{s} = n_{s} / n$ for $s \in \{0, \ldots, G-1\}.$
Then, we find the joint distribution and cluster centers by minimizing the following objective:
\begin{equation}\label{eq:multiple_g}
    \mathbb{E}_{ (\mathbf{X}_{0}, \ldots, \mathbf{X}_{G-1}) \sim \mathbb{Q}} 
    \left( 
    \sum_{s = 0}^{G-1} G \pi^{G} \bigg\Vert \sum_{s' \neq s} \left( \mathbf{X}_{s} - \mathbf{X}_{s'} \right) \bigg\Vert^{2} 
    + \min_{k} \Vert \mathbf{T}^{\textup{A}} ( \mathbf{X}_{0}, \ldots, \mathbf{X}_{G-1} ) - \mu_{k} \Vert^{2} 
    \right),
\end{equation}
where $\pi^{G} := \prod_{s=0}^{G-1} \pi_{s}.$

The proof idea is similar to the case of two protected groups, in \cref{thm:unequalcase}.
We first recall the original $K$-means objective:
$C (\bm{\mu}, \mathcal{A}_{0}, \ldots, \mathcal{A}_{G-1})
: = \mathbb{E} \sum_{k=1}^{K} \mathcal{A}_{S}(\mathbf{X})_{k} \Vert \mathbf{X} - \mu_{k} \Vert^{2} 
= \sum_{s=0}^{G-1} \pi_{s} \mathbb{E}_{s} \sum_{k=1}^{K} \mathcal{A}_{s}(\mathbf{X}_{s})_{k} \Vert \mathbf{X}_{s} - \mu_{k} \Vert^{2}.$
Consider a set of joint distributions of $\mathbf{X}_{0}, \ldots, \mathbf{X}_{G-1}$ given $\bm{\mu}$ as
$\mathcal{Q} := \{ \mathbb{Q} ( \{ \mathbf{x}_{0}, \ldots, \mathbf{x}_{G-1} \} | \bm{\mu}): \mathbf{x}_{s} \in \mathcal{X}_{s}, s \in \{0, \ldots, G-1\} \}.$
Then, we can show that there exists a $\mathbb{Q} = \mathbb{Q} ( \{ \mathbf{x}_{0}, \ldots, \mathbf{x}_{G-1} \} | \bm{\mu} ) \in \mathcal{Q}$
satisfying
\begin{equation*}
    \begin{split}
        C(\bm{\mu}, \mathcal{A}_{0}, \ldots, \mathcal{A}_{G-1}) 
        & = \sum_{s=0}^{G-1} \pi_{s} \mathbb{E}_{s} \sum_{k=1}^{K} \mathcal{A}_{s}(\mathbf{X}_{s})_{k} \Vert \mathbf{X}_{s} - \mu_{k} \Vert^{2}
        \\
        & = \mathbb{E}_{(\mathbf{X}_{0}, \ldots, \mathbf{X}_{G-1}) \sim \mathbb{Q}}
        \left( \sum_{s=0}^{G-1} \pi_{s} \Vert \mathbf{X}_{s} - \mu_{k} \Vert^{2} \right),
    \end{split}
\end{equation*}
by using the same logic in the proof of \cref{thm:unequalcase}.
Furthermore, we can reformulate it as
\begin{equation*}
    \mathbb{E}_{ (\mathbf{X}_{0}, \ldots, \mathbf{X}_{G-1}) \sim \mathbb{Q}} 
    \left( 
    \sum_{s = 0}^{G-1} G \pi^{G} \bigg\Vert \sum_{s' \neq s} \left( \mathbf{X}_{s} - \mathbf{X}_{s'} \right) \bigg\Vert^{2} 
    + \min_{k} \Vert \mathbf{T}^{\textup{A}} ( \mathbf{X}_{0}, \ldots, \mathbf{X}_{G-1} ) - \mu_{k} \Vert^{2} 
    \right),
\end{equation*}
with the assignment functions for fair clustering given as
\begin{equation*}
    \begin{split}
        \mathcal{A}_{s}(\mathbf{x}_{s})_{k} := \mathbb{Q} \left( \argmin_{k'} \Vert \pi_{s} \mathbf{x}_{s} + \sum_{s' \neq s} \pi_{s'} \mathbf{X}_{s'} - \mu_{k'} \Vert^{2} = k \bigg\vert \mathbf{X}_{s} = \mathbf{x}_{s} \right),
        \forall s \in \{0, \ldots, G-1\}.
    \end{split}
\end{equation*}
Furthermore, since finding the joint distribution for multiple protected groups is technically similar to the case of two protected groups, the optimization of the objective in eq. (\ref{eq:multiple_g}) can be solved using a linear program.

\paragraph{Experiments}

To empirically confirm the validity of the extension, we use \textsc{Bank} dataset with three protected groups defined by dividing marital status as single/married/divorced (directly following the approach of VFC \citep{ziko2021variational}).
We then compare the performance of FCA with VFC, which also can handle multiple protected groups.
The result is presented in \cref{tab:bank_three_compare} below, showing the superior performance of FCA: achieving a lower cost ($0.222 < 0.228$) along with a higher balance ($0.182 > 0.172$).

\begin{table}[h]
    \centering
    \footnotesize
    \caption{Comparison of VFC and FCA in terms of the trade-off between \texttt{Bal} and \texttt{Cost}, on \textsc{Bank} dataset with three protected groups.}
    \label{tab:bank_three_compare}
    \vskip 0.1in
    \begin{tabular}{l||c|c}
        \toprule
        \textsc{Bank} & \multicolumn{2}{c}{3 groups (single/married/divorced)}
        \\
        \midrule
        \midrule
        $\texttt{Bal}^{\star} = 0.185$ & \texttt{Cost} ($\downarrow$) & \texttt{Bal} ($\uparrow$)
        \\
        \midrule
        VFC & 0.228 & 0.172
        \\
        FCA \checkmark & \textbf{0.222} & \textbf{0.182}
        \\
        \bottomrule
    \end{tabular}    
\end{table}

\subsection{Extension to $K$-clustering for general distance metrics}\label{sec:appen-kmedian_discuss}

Let $D: \mathbb{R}^{d} \times \mathbb{R}^{d} \to \mathbb{R}_{+}$ be a given distance satisfying
(i) $D(\mathbf{v}, \mathbf{w}) \le D(\mathbf{v}, \mathbf{u}) + D(\mathbf{u}, \mathbf{w})$,
(ii) $D(\mathbf{v}, \mathbf{w})=D(\mathbf{v} + \mathbf{u}, \mathbf{w} + \mathbf{u}),$
and
(iii) $D(\lambda \mathbf{v},\lambda \mathbf{w})=|\lambda|\,D(\mathbf{v},\mathbf{w}),$
for all $\mathbf{v}, \mathbf{w}, \mathbf{u} \in \mathbb{R}^{d}$ and $\lambda \in \mathbb{R}.$
For example, $D$ can be the $L_{p}$ norm on $\mathbb{R}^d$ for any $p \ge 1.$
Then, for the balanced case $n_0 = n_1$, we can replicate the argument of \cref{thm:equalcase} as follows:
$$
\mathbb{E} \sum_{k=1}^{K} \mathcal{A}_{S}(\mathbf{X})_{k} D (\mathbf{X}, \mu_{k})
\le \mathbb{E}_{s} \sum_{k=1}^{K} \mathcal{A}_{S}(\mathbf{X})_{k} \left( 
D \left( \frac{\mathbf{X}}{2}, \frac{\mathbf{T}(\mathbf{X})}{2} \right)
+
D \left( \frac{\mathbf{X} + \mathbf{T}(\mathbf{X})}{2}, \mu_{k} \right)
\right).
$$
Minimizing this upper bound yields a fair $K$-clustering procedure for any distance satisfying the above conditions.
Note that using the $L_{1}$ norm for $D$ corresponds to the $K$-median clustering.

See \cref{sec:appen-k-median} for experiments based on this approach, showing the outperformance FCA over SFC in view of the $K$-median clustering.

\clearpage
\section{Proofs of the theorems}\label{sec:appen-proofs}

\subsection{Proof of \cref{thm:equalcase}}\label{sec:appen-proof-prop:equalcase}

\textbf{\cref{thm:equalcase}.}
    \textit{For any given perfectly fair deterministic assignment function $\mathcal{A}$ and cluster centers $\bm{\mu}$,
    there exists a one-to-one matching map $\mathbf{T} : \mathcal{X}_{s} \rightarrow \mathcal{X}_{s'}$ such that, for any $s \in \{0, 1\}$,}
    \begin{equation}
        C(\bm{\mu}, \mathcal{A}_{0}, \mathcal{A}_{1})
        = \mathbb{E}_{s} \sum_{k=1}^{K} \mathcal{A}_{s}(\mathbf{X})_{k} \left( \frac{\|\mathbf{X}-\mathbf{T}(\mathbf{X})\|^2}{4}  + \left\Vert \frac{\mathbf{X} + \mathbf{T}(\mathbf{X})}{2}-\mu_k \right\Vert^2 \right).
    \end{equation}

\begin{proof}[Proof of \cref{thm:equalcase}]
    Without loss of generality, let $s = 0.$
    First, it is clear that we can construct a one-to-one map $\mathbf{T}$ that maps each 
    $\mathbf{x} \in \mathcal{X}^{k, 0} := \{ \mathbf{x} \in \mathcal{X}_{0} : \mathcal{A}_{0}(\mathbf{x})_{k} = 1 \}$ 
    to a unique $\mathbf{x}' \in \mathcal{X}^{k, 1} := \{ \mathbf{x}' \in \mathcal{X}_{1} : \mathcal{A}_{1}(\mathbf{x}')_{k} = 1 \}$ for all $k \in [K].$
    That is, $\{ \mathbf{T}(\mathbf{x}) : \mathbf{x} \in \mathcal{X}^{k, 0} \} = \mathcal{X}^{k, 1}, \forall k \in [K]$
    and $\{ \mathbf{T}(\mathbf{x}) : \mathbf{x} \in \mathcal{X}_{0} \} = \mathcal{X}_{1}.$

    Then, for the given $\mathbf{T},$ we can rewrite the clustering cost as
    \begin{equation}\label{eq:proof-thm:equalcase}
        \begin{split}
            C(\bm{\mu}, \mathcal{A}_{0}, \mathcal{A}_{1}) = 
            \mathbb{E} \sum_{k=1}^{K} \mathcal{A}_{S}(\mathbf{X})_{k} \Vert \mathbf{X} - \mu_{k} \Vert^{2}
            &
            = \frac{1}{2} \mathbb{E}_{0} \sum_{k=1}^{K} 
            \mathcal{A}_{0}(\mathbf{X})_{k}
            \left( 
            \Vert \mathbf{X} - \mu_{k} \Vert^{2} +
            \Vert \mathbf{T}(\mathbf{X}) - \mu_{k} \Vert^{2}
            \right).
        \end{split}
    \end{equation}

    For any given $\mathbf{x}$ and $\mu_{k},$ we decompose $\Vert \mathbf{x} - \mu_{k} \Vert^{2}$ as:
    $$
    \Vert \mathbf{x} - \mu_k \Vert^{2}
    =
    \left\Vert \mathbf{x} - \frac{\mathbf{x} + \mathbf{T}(\mathbf{x})}{2} + \frac{\mathbf{x} + \mathbf{T}(\mathbf{x})}{2} - \mu_k \right\Vert ^{2}
    $$
    $$
    = \frac{\Vert \mathbf{x} - \mathbf{T}(\mathbf{x}) \Vert^{2}}{4}
    + \left\Vert \frac{\mathbf{x} + \mathbf{T}(\mathbf{x})}{2} - \mu_k \right\Vert ^{2}
    + 2 \left\langle \mathbf{x} - \frac{\mathbf{x} + \mathbf{T}(\mathbf{x})}{2},
    \frac{\mathbf{x} + \mathbf{T}(\mathbf{x})}{2} - \mu_k
    \right\rangle.
    $$
    
    We similarly decompose $\Vert \mathbf{T}(\mathbf{x}) - \mu_{k} \Vert^{2}$ as:
    $$
    \Vert \mathbf{T}(\mathbf{x}) - \mu_k \Vert^{2}
    =
    \left\Vert \mathbf{T}(\mathbf{x}) - \frac{\mathbf{x} + \mathbf{T}(\mathbf{x})}{2} + \frac{\mathbf{x} + \mathbf{T}(\mathbf{x})}{2} - \mu_k \right\Vert ^{2}
    $$
    $$
    = \frac{\Vert \mathbf{x} - \mathbf{T}(\mathbf{x}) \Vert^{2}}{4}
    + \left\Vert \frac{\mathbf{x} + \mathbf{T}(\mathbf{x})}{2} - \mu_k \right\Vert ^{2}
    + 2 \left\langle \mathbf{T}(\mathbf{x}) - \frac{\mathbf{x} + \mathbf{T}(\mathbf{x})}{2},
    \frac{\mathbf{x} + \mathbf{T}(\mathbf{x})}{2} - \mu_k
    \right\rangle.
    $$
    
    Adding the two terms, we have 
    $$
    2 \left( \frac{\Vert \mathbf{x} - \mathbf{T}(\mathbf{x}) \Vert^{2}}{4}
    + \left\Vert \frac{\mathbf{x} + \mathbf{T}(\mathbf{x})}{2} - \mu_k \right\Vert ^{2}
    \right).
    $$

    Finally, we conclude that 
    \begin{equation}
        \begin{split}
            & \frac{1}{2} \mathbb{E}_{0} \sum_{k=1}^{K} 
            \mathcal{A}_{0}(\mathbf{X})_{k}
            \left( 
            \Vert \mathbf{X} - \mu_{k} \Vert^{2} +
            \Vert \mathbf{T}(\mathbf{X}) - \mu_{k} \Vert^{2}
            \right)
            \\
            & = \frac{1}{2} \mathbb{E}_{0} \sum_{k=1}^{K} 
            \mathcal{A}_{0}(\mathbf{X})_{k}
            \cdot 2 \left( \frac{\|\mathbf{X}-\mathbf{T}(\mathbf{X})\|^2}{4}  + \left\Vert \frac{\mathbf{X} + \mathbf{T}(\mathbf{X})}{2}-\mu_k \right\Vert^2 \right).
        \end{split}
    \end{equation}
\end{proof}

\subsection{Proof of \cref{thm:unequalcase}}\label{sec:appen-proof-thm:unequalcase}

\textbf{\cref{thm:unequalcase}.}
    \textit{Let $\bm{\mu}^{*}$ and $\mathbb{Q}^{*}$ be the cluster centers and joint distribution minimizing}
    \begin{equation}
        \begin{split}
            \min_{\bm{\mu}, \mathbb{Q}\in \mathcal{Q}} \mathbb{E}_{ (\mathbf{X}_{0}, \mathbf{X}_{1}) \sim \mathbb{Q}} \left( 2 \pi_{0} \pi_{1} \Vert \mathbf{X}_{0} - \mathbf{X}_{1} \Vert^{2} + \min_{k} \Vert \mathbf{T}^{\textup{A}} ( \mathbf{X}_{0}, \mathbf{X}_{1} ) - \mu_{k} \Vert^{2} \right).
        \end{split}
    \end{equation}
    \textit{Then, $(\bm{\mu}^*,\mathcal{A}_{0}^*, \mathcal{A}_{1}^{*})$ is the solution of the perfectly fair $K$-means clustering,
    where
    $
    \mathcal{A}_{0}^{*}(\mathbf{x})_{k} := \mathbb{Q}^{*} \left( \argmin_{k'} \Vert \mathbf{T}^{\textup{A}} ( \mathbf{x}, \mathbf{X}_{1} ) - \mu_{k'} \Vert^{2} = k \vert \mathbf{X}_{0} = \mathbf{x} \right)
    $
    and
    $
    \mathcal{A}_{1}^{*}(\mathbf{x})_{k}
    $
    is defined similarly.}

\begin{proof}[Proof of \cref{thm:unequalcase}]

    Let $\mathbf{X}_{s} = \mathbf{X} | S = s, s \in \{0, 1\}.$
    For given $(\bm{\mu}, \mathcal{A}_{0}, \mathcal{A}_{1}),$
    recall the original $K$-means objective:
    $C (\bm{\mu}, \mathcal{A}_{0}, \mathcal{A}_{1})
    : = \mathbb{E} \sum_{k=1}^{K} \mathcal{A}_{S}(\mathbf{X})_{k} \Vert \mathbf{X} - \mu_{k} \Vert^{2} 
    = \pi_{0} \mathbb{E}_{0} \sum_{k=1}^{K} \mathcal{A}_{0}(\mathbf{X}_{0})_{k} \Vert \mathbf{X}_{0} - \mu_{k} \Vert^{2} + \pi_{1} \mathbb{E}_{1} \sum_{k=1}^{K} \mathcal{A}_{1}(\mathbf{X}_{1})_{k} \Vert \mathbf{X}_{1} - \mu_{k} \Vert^{2}.$

    Consider a set of joint distributions of $\mathbf{X}_{0}, \mathbf{X}_{1}$ given $\bm{\mu}$ as
    $\mathcal{Q} := \{ \mathbb{Q} ( \{ \mathbf{x}_{0}, \mathbf{x}_{1} \} | \bm{\mu}): \mathbf{x}_{0} \in \mathcal{X}_{0}, \mathbf{x}_{1} \in \mathcal{X}_{1} \}.$
    We show that there exists a $\mathbb{Q} = \mathbb{Q} ( \{ \mathbf{x}_{0}, \mathbf{x}_{1} \} | \bm{\mu} ) \in \mathcal{Q}$
    satisfying
    \begin{equation}\label{eq:mainthm-1}
        \begin{split}
            C(\bm{\mu}, \mathcal{A}_{0}, \mathcal{A}_{1}) 
            & = \pi_{0} \mathbb{E}_{0} \sum_{k=1}^{K} \mathcal{A}_{0}(\mathbf{X}_{0})_{k} \Vert \mathbf{X}_{0} - \mu_{k} \Vert^{2} + \pi_{1} \mathbb{E}_{1} \sum_{k=1}^{K} \mathcal{A}_{1}(\mathbf{X}_{1})_{k} \Vert \mathbf{X}_{1} - \mu_{k} \Vert^{2}
            \\
            & = \mathbb{E}_{(\mathbf{X}_{0}, \mathbf{X}_{1}) \sim \mathbb{Q}}
            \big( \pi_{0} \Vert \mathbf{X}_{0} - \mu_{k} \Vert^{2} 
            + \pi_{1} \Vert \mathbf{X}_{1} - \mu_{k} \Vert^{2} \big).
        \end{split}
    \end{equation}
    Let
    \begin{equation}\label{eq:mainthm-1-1}
        \mathbb{Q} ( \{ \mathbf{x}_{0}, \mathbf{x}_{1} \} | \bm{\mu}) =
        \sum_{k=1}^{K} \frac{\mathcal{A}_{0}(\mathbf{x}_{0})_{k} \mathcal{A}_{1}(\mathbf{x}_{1})_{k}}{\mathcal{C}_{k}} \mathbb{P}_{0}( \{ \mathbf{x}_{0} \} ) \mathbb{P}_{1} ( \{ \mathbf{x}_{1} \} ),
    \end{equation}
    where $\mathcal{C}_{k} := \mathbb{E} \mathcal{A}_{S}(\mathbf{X})_{k} = \mathbb{E}_{0} \mathcal{A}_{0}(\mathbf{X}_{0})_{k} = \mathbb{E}_{1} \mathcal{A}_{1} (\mathbf{X}_{1})_{k}.$
    Then, 
    \begin{equation}\label{eq:mainthm-2}
        \begin{split}
            & \mathbb{E}_{(\mathbf{X}_{0}, \mathbf{X}_{1}) \sim \mathbb{Q}}
            \left( \pi_{0} \Vert \mathbf{X}_{0} - \mu_{k} \Vert^{2} + \pi_{1} \Vert \mathbf{X}_{1} - \mu_{k} \Vert^{2} \right)
            \\
            & = \sum_{k=1}^{K} 
            \left( 
            \sum_{\mathbf{x}_{0}} \mathcal{A}_{0}(\mathbf{x}_{0})_{k} \pi_{0} \Vert \mathbf{x}_{0} - \mu_{k} \Vert^{2} \mathbb{P}_{0}(\{ \mathbf{x}_{0} \})
            +
            \sum_{\mathbf{x}_{1}} \mathcal{A}_{1}(\mathbf{x}_{1})_{k} \pi_{1} \Vert \mathbf{x}_{1} - \mu_{k} \Vert^{2} \mathbb{P}_{1}(\{ \mathbf{x}_{1} \})
            \right)
            \\
            & = \sum_{k=1}^{K} 
            \left( 
            \mathbb{E}_{0} \pi_{0} \mathcal{A}_{0}(\mathbf{X}_{0})_{k} \Vert \mathbf{X}_{0} - \mu_{k} \Vert^{2}
            + \mathbb{E}_{1} \pi_{1} \mathcal{A}_{1}(\mathbf{X}_{1})_{k} \Vert \mathbf{X}_{1} - \mu_{k} \Vert^{2}
            \right)
            = C(\bm{\mu}, \mathcal{A}_{0}, \mathcal{A}_{1}),
        \end{split}
    \end{equation}
    which concludes eq. (\ref{eq:mainthm-1}).
    Our original aim is to find $(\bm{\mu}, \mathcal{A}_{0}, \mathcal{A}_{1})$ minimizing $C(\bm{\mu}, \mathcal{A}_{0}, \mathcal{A}_{1}).$
    Let $\mu(\mathbf{x}) = \mu_{k^{*}},$ where $k^{*} = \argmin_{k} \Vert \mathbf{x} - \mu_{k} \Vert^{2}.$
    Let $\tilde{\mu}(\mathbf{x}_{0}, \mathbf{x}_{1}) := \mu_{k'},$ 
    where $ k' = \argmin_{k} \Vert \pi_{0} \mathbf{x}_{0} + \pi_{1} \mathbf{x}_{1} - \mu_{k} \Vert^{2}.$
    For given $\mathbb{Q}$ defined in eq. (\ref{eq:mainthm-1-1}), similar to \cref{thm:equalcase}, we can reformulate
    \begin{equation}\label{eq:mainthm-3}
        \begin{split}
            & \mathbb{E}_{(\mathbf{X}_{0}, \mathbf{X}_{1}) \sim \mathbb{Q}}
            \left( \pi_{0} \Vert \mathbf{X}_{0} - \mu(\mathbf{X}_{0}) \Vert^{2} + \pi_{1} \Vert \mathbf{X}_{1} - \mu(\mathbf{X}_{1}) \Vert^{2} \right)
            \\
            & = \mathbb{E}_{(\mathbf{X}_{0}, \mathbf{X}_{1}) \sim \mathbb{Q}} \left( 2 \pi_{0} \pi_{1} \Vert \mathbf{X}_{0} - \mathbf{X}_{1} \Vert^{2} + \Vert \pi_{0} \mathbf{X}_{0} + \pi_{1} \mathbf{X}_{1} - \tilde{\mu}(\mathbf{X}_{0}, \mathbf{X}_{1}) \Vert^{2} \right).
        \end{split}
    \end{equation}

    In turn, the assignment functions are given as
    \begin{equation}
        \begin{split}
            \mathcal{A}_{0}(\mathbf{x}_{0})_{k} := \mathbb{Q} \left( \argmin_{k'} \Vert \pi_{0} \mathbf{x}_{0} + \pi_{1} \mathbf{X}_{1} - \mu_{k'} \Vert^{2} = k \bigg\vert \mathbf{X}_{0} = \mathbf{x}_{0} \right)
        \end{split}
    \end{equation}
    and
    \begin{equation}
        \begin{split}
            \mathcal{A}_{1}(\mathbf{x}_{1})_{k} := \mathbb{Q} \left( \argmin_{k'} \Vert \pi_{0} \mathbf{X}_{0} + \pi_{1} \mathbf{x}_{1} - \mu_{k'} \Vert^{2} = k \bigg\vert \mathbf{X}_{1} = \mathbf{x}_{1} \right).
        \end{split}
    \end{equation}

    Hence, 
    $
    C(\bm{\mu}, \mathcal{A}_{0}, \mathcal{A}_{1}) = \mathbb{E}_{(\mathbf{X}_{0}, \mathbf{X}_{1}) \sim \mathbb{Q}} \left( 2 \pi_{0} \pi_{1} \Vert \mathbf{X}_{0} - \mathbf{X}_{1} \Vert^{2} + \min_{k} \Vert \pi_{0} \mathbf{X}_{0} + \pi_{1} \mathbf{X}_{1} - \mu_{k} \Vert^{2} \right).
    $
\end{proof}

\subsection{Proof of \cref{thm:fca-c}}\label{sec:thm:fca-c_proof}

\textbf{\cref{thm:fca-c}.}
    \textit{Minimizing FCA-C objective $\tilde{C}(\mathbb{Q}, \mathcal{W}, \bm{\mu})$
    with the corresponding assignment function defined in eq. (\ref{eq:fca-c_assign}),
    is equivalent to minimizing $C(\bm{\mu}, \mathcal{A}_{0}, \mathcal{A}_{1})$
    subject to $(\mathcal{A}_{0}, \mathcal{A}_{1}) \in \mathbf{A}_{\varepsilon}.$}
\begin{proof}[Proof of \cref{thm:fca-c}]

    It suffices to show the followings: A and B.
    
    \begin{enumerate}
        \item[\textup{A}.] 
        For given $\mathbb{Q}, \mathcal{W}, \bm{\mu},$
        let $\mathcal{A}_{0}$ and $\mathcal{A}_{1}$ be the constructed assignment functions, defined in eq. (\ref{eq:fca-c_assign}).
        Then, we have $ \tilde{C}(\mathbb{Q}, \mathcal{W}, \bm{\mu}) = C (\bm{\mu}, \mathcal{A}_{0}, \mathcal{A}_{1})$
        and $(\mathcal{A}_{0}, \mathcal{A}_{1}) \in \mathbf{A}_{\varepsilon}.$

        \item[\textup{B}.] 
            (i)
            For given $\bm{\mu}$ and $(\mathcal{A}_{0}, \mathcal{A}_{1}) \in \mathbf{A}_{\varepsilon},$
            there exist $\mathbb{Q}$ and $\mathcal{W}$ s.t. 
            $ \tilde{C}(\mathbb{Q}, \mathcal{W}, \bm{\mu}) \le C (\bm{\mu}, \mathcal{A}_{0}, \mathcal{A}_{1}). $
    \end{enumerate}

    $\bullet$ \textit{Proof of A}.

    For given $\mathbb{Q}, \mathcal{W}, \bm{\mu},$ we construct the assignment functions as in eq. (\ref{eq:fca-c_assign}).
    In other words, we define
    \begin{eqnarray*}
     \mathcal{A}_{0}(\mathbf{x}_{0})_{k} &= \mathbb{P}_{1} ( \left\{ \argmin_{k'} \|\mathbf{T}^{\textup{A}}(\mathbf{x}_{0}, \mathbf{X}_{1})-\mu_{k'}\|^2 = k, (\mathbf{x}_{0}, \mathbf{X}_{1} )\in \mathcal{W}^{c} \right\} )
     \\
     & + \mathbbm{1}\left(\argmin_{k'} \|\mathbf{x}_{0}-\mu_{k'}\|^2 = k\right) \cdot \mathbb{P}_{1} ( \{  (\mathbf{x}_{0}, \mathbf{X}_{1} ) \in \mathcal{W} \} ),
    \end{eqnarray*}
    and the assignment function $\mathcal{A}_{1}$ is defined similarly.
    Then, we have $ \tilde{C}(\mathbb{Q}, \mathcal{W}, \bm{\mu}) = C (\bm{\mu}, \mathcal{A}_{0}, \mathcal{A}_{1}) $ by its definition.

    Furthermore, 
    \begin{equation}
        \begin{split}
            \sum_{k=1}^{K} & \vert \mathbb{E}_{0} \mathcal{A}_{0}(\mathbf{X})_{k} - \mathbb{E}_{1} \mathcal{A}_{1}(\mathbf{X})_{k} \vert
            \\
            = \sum_{k=1}^{K} \bigg\vert &
            \mathbb{E}_{0} \mathbb{Q}_{1} \left( \argmin_{k'} \Vert \mathbf{T}^{\textup{A}}(\mathbf{X}_{0}, \mathbf{X}_{1}) - \mu_{k'} \Vert^{2} = k, (\mathbf{X}_{0}, \mathbf{X}_{1}) \in \mathcal{W}^{c} \right)
            \\
            & - \mathbb{E}_{1} \mathbb{Q}_{0} \left( \argmin_{k'} \Vert \mathbf{T}^{\textup{A}}(\mathbf{X}_{0}, \mathbf{X}_{1}) - \mu_{k'} \Vert^{2} = k, (\mathbf{X}_{0}, \mathbf{X}_{1}) \in \mathcal{W}^{c} \right)
            \\
            & + 
            \mathbb{E}_{0} \mathbb{Q}_{1} \left( (\mathbf{X}_{0}, \mathbf{X}_{1} ) \in \mathcal{W} \right) \mathbbm{1}(\argmin_{k'} \Vert \mathbf{X}_{0} - \mu_{k'} \Vert^{2} = k)
            \\
            & - 
            \mathbb{E}_{1} \mathbb{Q}_{0} \left( (\mathbf{X}_{0}, \mathbf{X}_{1} ) \in \mathcal{W} \right) \mathbbm{1}(\argmin_{k'} \Vert \mathbf{X}_{1} - \mu_{k'} \Vert^{2} = k)
            \bigg\vert
            \\
            = \sum_{k=1}^{K} \bigg\vert &
            \mathbb{E}_{0} \mathbb{Q}_{1} \left( (\mathbf{X}_{0}, \mathbf{X}_{1} ) \in \mathcal{W} \right) \mathbbm{1}(\argmin_{k'} \Vert \mathbf{X}_{0} - \mu_{k'} \Vert^{2} = k)
            \\
            & - 
            \mathbb{E}_{1} \mathbb{Q}_{0} \left( (\mathbf{X}_{0}, \mathbf{X}_{1} ) \in \mathcal{W} \right) \mathbbm{1}(\argmin_{k'} \Vert \mathbf{X}_{1} - \mu_{k'} \Vert^{2} = k)
            \bigg\vert
            \\
            = \sum_{k=1}^{K} \bigg\vert &
            \mathbb{E}_{\mathbf{X}_{0}, \mathbf{X}_{1}} 
            \mathbbm{1}( (\mathbf{X}_{0}, \mathbf{X}_{1} ) \in \mathcal{W} )
            \cdot
            \left( \mathbbm{1}(\argmin_{k'} \Vert \mathbf{X}_{0} - \mu_{k'} \Vert^{2} = k) - \mathbbm{1}(\argmin_{k'} \Vert \mathbf{X}_{1} - \mu_{k'} \Vert^{2} = k) \right)
            \bigg\vert
            \\
            \le \sum_{k=1}^{K} &
            \mathbb{E}_{\mathbf{X}_{0}, \mathbf{X}_{1}} 
            \left(
            \left\vert \mathbbm{1}(\argmin_{k'} \Vert \mathbf{X}_{0} - \mu_{k'} \Vert^{2} = k) - \mathbbm{1}(\argmin_{k'} \Vert \mathbf{X}_{1} - \mu_{k'} \Vert^{2} = k)
            \right\vert 
            \vert (\mathbf{X}_{0}, \mathbf{X}_{1} ) \in \mathcal{W} \right)
            \\
            & \cdot
            \mathbb{P}_{\mathbf{X}_{0}, \mathbf{X}_{1}}
            \bigg\vert 
            \mathbbm{1}( (\mathbf{X}_{0}, \mathbf{X}_{1} ) \in \mathcal{W} )
            \bigg\vert 
            \\
            \le & \varepsilon,
        \end{split}
    \end{equation}
    which implies $( \mathcal{A}_{0}, \mathcal{A}_{1} ) \in \mathbf{A}_{\varepsilon}.$

    $\bullet$ \textit{Proof of B}.
    
    For each $k \in [K],$
    let $\delta_k = \min\{\mathbb{E}_{0} ( \mathcal{A}_{0}(\mathbf{X}) )_{k},\mathbb{E}_{1} ( \mathcal{A}_{1}(\mathbf{X}) )_{k}\}.$
    We decompose $\mathcal{A}_{s}(\cdot)_{k} = \tilde{\mathcal{A}}_{s}(\cdot)_{k} + \mathcal{A}_{s}^{c} (\cdot)_{k},$ where
    $\tilde{\mathcal{A}}_{s}(\cdot)_{k}=\delta_k \mathcal{A}_{s}(\cdot)_{k}/\mathbb{E}_{s} ( \mathcal{A}_{s}(\mathbf{X}) )_{k}.$
    Then, $\mathbb{E}_0 \tilde{\mathcal{A}}_{0}(\mathbf{X})_{k} = \mathbb{E}_1 \tilde{\mathcal{A}}_{1}(\mathbf{X})_{k}$
    for all $k\in [K].$
    Define $\tilde{\mathcal{A}}_{s}(\mathbf{X})_{K+1} := \sum_{k=1}^{K} \mathcal{A}_{s}^{c}(\mathbf{X})_{k}.$
    Note that $\mathbb{E}_s \tilde{\mathcal{A}}_{s}(\mathbf{X})_{K+1} \le \varepsilon.$
    
    
    Now, for given $\bm{\mu},$ we define a probability measure on $\mathcal{X}_0\times \mathcal{X}_1$ by
    \begin{equation*}
        \begin{split}
            \mathbb{Q}(d\mathbf{x}_0, d\mathbf{x}_1 | \bm{\mu})
            = \sum_{k=1}^{K} \frac{ \tilde{\mathcal{A}}_{0}(\mathbf{x}_{0})_{k} \tilde{\mathcal{A}}_{1}(\mathbf{x}_{1})_{k} }{ \delta_{k} } 
            p_0( \mathbf{x}_{0} ) p_1( \mathbf{x}_{1} )
            + \frac{ \tilde{\mathcal{A}}_{0}(\mathbf{x}_{0})_{K+1} \tilde{\mathcal{A}}_{1}(\mathbf{x}_{1})_{K+1} }{ \delta_{K+1}} 
            p_0( \mathbf{x}_{0} ) p_1( \mathbf{x}_{1} ),
        \end{split}
    \end{equation*}
    where $\delta_{K+1}=1-\sum_{k=1}^K \delta_k$ and $p_0, p_1$ are the densities of $\mathbb{P}_0,\mathbb{P}_1,$ respectively.
    
    Note that $\delta_{K+1}=\mathbb{E}_s \tilde{\mathcal{A}}_{s}(\mathbf{X})_{K+1}$ and thus
    $\delta_{K+1}\le \varepsilon.$
    In addition, for given $(\mathbf{x}_0,\mathbf{x}_1)\in \mathcal{X}_0\times \mathcal{X}_1,$ we define a binary
    random variable $R(\mathbf{x}_0,\mathbf{x}_1)$ such that
    $$\Pr(R(\mathbf{x}_0,\mathbf{x}_1)=1) = \frac{\frac{ \tilde{\mathcal{A}}_{0}(\mathbf{x}_{0})_{K+1} \tilde{\mathcal{A}}_{1}(\mathbf{x}_{1})_{K+1} }{ \delta_{K+1}} p_0( \mathbf{x}_{0} ) p_1( \mathbf{x}_{1} )}{\mathbb{Q}(d\mathbf{x}_0, d\mathbf{x}_1 | \bm{\mu})}.
    $$

    For given $\mathbf{x}\in \mathcal{X}_s,$ let $\mu(\mathbf{x})=\mu_{k^*}$, where $k^*={\rm argmin}_{k} \|\mathbf{x}-\mu_k\|^2.$
    For given $(\mathbf{x}_0,\mathbf{x}_1)\in \mathcal{X}_0\times \mathcal{X}_1,$ let $\tilde{\mu}(\mathbf{x}_0, \mathbf{x}_1)=\mu_{k'},$ where 
    $k' = \argmin_{k} \Vert \pi_{0} \mathbf{x}_0 + \pi_{1} \mathbf{x}_1 - \mu_k \Vert^2.$
    Then, it holds that
    $ \tilde{C}(\mathbb{Q}, R, \bm{\mu}) \le  C (\bm{\mu}, \mathcal{A}_{0}, \mathcal{A}_{1}), $
    where
    \begin{equation*}
        \begin{split}
            \tilde{C}(\mathbb{Q}, R, \bm{\mu})
            & := \mathbb{E}_{\mathbb{Q},R} \big[ \big\{ 
            2 \pi_{0} \pi_{1} \Vert \mathbf{X}_0 - \mathbf{X}_{1} \Vert^2
            + \Vert \pi_{0} \mathbf{X}_0 + \pi_{1} \mathbf{X}_{1} - \tilde{\mu}(\mathbf{X}_0,\mathbf{X}_1) \Vert^2
            \big\} 
            (1-R(\mathbf{X}_0,\mathbf{X}_1)) \big]
            \\
            & + \mathbb{E}_{\mathbb{Q},R} \left[ \left\{ \pi_{0} \Vert \mathbf{X}_0- \mu(\mathbf{X}_0) \Vert^2
            + \pi_{1} \Vert \mathbf{X}_1- \mu(\mathbf{X}_1) \Vert^2 \right\}  R(\mathbf{X}_0,\mathbf{X}_1)\right].
        \end{split}
    \end{equation*}
    
    The final mission is to find $\mathcal{W}\subset \mathcal{X}_0\times \mathcal{X}_1$ such that $ R(\mathbf{x}_0,\mathbf{x}_1) = \mathbbm{1}((\mathbf{x}_0,\mathbf{x}_1)\in \mathcal{W}).$
    For this, define
    $
    \eta(\mathbf{x}_0, \mathbf{x}_1) = 2 \pi_{0} \pi_{1} \Vert \mathbf{x}_{0} - \mathbf{x}_{1} \Vert^{2} + \min_{k} \|\mathbf{T}^{\textup{A}}(\mathbf{x}_{0}, \mathbf{x}_{1}) - \mu_{k} \Vert^{2}.
    $
    Let $\eta_{\varepsilon}$ be the $\varepsilon$th upper quantile of $\eta(\mathbf{X}_0,\mathbf{X}_1)$ and let
    \begin{equation}
        \tilde{\mathcal{W}}=\{(\mathbf{x}_0,\mathbf{x}_1): \eta(\mathbf{x}_0,\mathbf{x}_1) > \eta_\varepsilon\}.
    \end{equation}
    Then, we can find $\mathcal{W}\supset \tilde{\mathcal{W}} $ such that
    $\tilde{C}(\mathbb{Q},\mathcal{W}, \bm{\mu}) \le \tilde{C}(\mathbb{Q}, R, \bm{\mu})$ and $\mathbb{Q}(\mathcal{W})=\varepsilon$
    (see \cref{rmk:FCA-C} below), which completes the proof.

\end{proof}

\begin{remark}[The case when the densities do not exist]\label{rmk:FCA-C}
    When the distribution of $\eta(\mathbf{X}_0,\mathbf{X}_1)$ is strictly increasing, we have
    $\mathbb{Q}(\mathcal{W})=\varepsilon.$ If not, we can find $\mathcal{W}$ such that 
    $\mathcal{W}\supset \{(\mathbf{x}_0,\mathbf{x}_1): \eta(\mathbf{x}_0,\mathbf{x}_1) \ge \varepsilon\}$
    with $\mathbb{Q}(\mathcal{W})=\varepsilon$ when $\mathbb{Q}$ has its density. 

    When $\mathbb{Q}$ is discrete, the situation is tricky. When $n_0=n_1,$
    the measure $\mathbb{Q}$ minimizing $\tilde{C}(\mathbb{Q},\mathcal{W},\bm{\mu})$ 
    has masses $1/n_0$ on $n_0$ many pairs of $(\mathbf{x}_0,\mathbf{x}_1)$ among $\mathcal{X}_0\times \mathcal{X}_1.$
    In this case, whenever $n_0 \varepsilon$ is an integer, we can find $\mathcal{W}$ such that $\mathbb{Q}(\mathcal{W})=\varepsilon.$

    Otherwise, we could consider a random assignment. Let $F_\eta$ be the distribution of $\eta(\mathbf{X}_0,\mathbf{X}_1).$
    Suppose that $F_\eta$ has a jump at $\eta_\varepsilon.$ In that case, $\mathbb{Q}(\mathcal{W}) <\varepsilon.$
    Let $(\mathbf{x}_0^*,\mathbf{x}_1^*)$ be the element such that $\eta(\mathbf{x}_0^*,\mathbf{x}_1^*)=\eta_\varepsilon.$
    Then, we can let $R(\mathbf{x}_0^*,\mathbf{x}_1^*)=1$ with probability 
    $(\varepsilon-\mathbb{Q}(\mathcal{W})) / \mathbb{Q}(\{\mathbf{x}_0^*,\mathbf{x}_1^*\}),$
    $R(\mathbf{x}_0,\mathbf{x}_1)=1$ for $(\mathbf{x}_0,\mathbf{x}_1)\in \mathcal{W}$
    and $R(\mathbf{x}_0,\mathbf{x}_1)=0$ for $(\mathbf{x}_0,\mathbf{x}_1)\in \left(\mathcal{W}\cup \{\mathbf{x}_0^*,\mathbf{x}_1^*\}\right)^c.$
    The current FCA-C algorithm can be modified easily for this random assignment.
\end{remark}

\subsection{Proof of \cref{thm:control-2}}\label{sec:thm:control-2_proof}

\textbf{\cref{thm:control-2}.}
    \textit{For any assignment function $(\mathcal{A}_{0}, \mathcal{A}_{1}) \in \mathbf{A}_{\varepsilon},$
    we have}
    \begin{equation}
        \max_{k \in [K]} \left\vert \frac{ \sum_{\mathbf{x}_{i} \in \mathcal{X}_{0}} \mathcal{A}_{0}(\mathbf{x}_{i})_{k}}{\sum_{\mathbf{x}_{i} \in \mathcal{X}_{1}} \mathcal{A}_{1}(\mathbf{x}_{i})_{k}}
        - \frac{n_{0}}{n_{1}} \right\vert
        \le c \varepsilon
    \end{equation}
    \textit{where $c = \frac{n_{0}}{n_{1}} \max_{k \in [K]} \frac{1}{\mathbb{E}_{1} \mathcal{A}_{1}(\mathbf{X})_{k}}.$}

\begin{proof}[Proof of \cref{thm:control-2}]

    On the other hand, by the definition of $\mathbf{A}_{\varepsilon},$ any $(\mathcal{A}_{0}, \mathcal{A}_{1}) \in \mathbf{A}_{\varepsilon}$ satisfies
    $$
    \sum_{k=1}^{K} 
    \left\vert 
    \frac{1}{n_{0}} \sum_{\mathbf{x}_{i} \in \mathcal{X}_{0}} \mathcal{A}_{0}(\mathbf{x}_{i})_{k}
    - \frac{1}{n_{1}} \sum_{\mathbf{x}_{j} \in \mathcal{X}_{1}} \mathcal{A}_{1}(\mathbf{x}_{j})_{k}
    \right\vert \le \varepsilon,
    $$
    which implies
    $$
    \left\vert
    \frac{1}{n_{0}} \sum_{\mathbf{x}_{i} \in \mathcal{X}_{0}} \mathcal{A}_{0}(\mathbf{x}_{i})_{k}
    - \frac{1}{n_{1}} \sum_{\mathbf{x}_{j} \in \mathcal{X}_{1}} \mathcal{A}_{1}(\mathbf{x}_{j})_{k}
    \right\vert 
    = \left\vert 
    \mathbb{E}_{0} \mathcal{A}_{0}(\mathbf{X})_{k} - \mathbb{E}_{1} \mathcal{A}_{1}(\mathbf{X})_{k}
    \right\vert
    \le \varepsilon
    $$
    for all $k \in [K].$
    Then, we have
    \begin{equation}
        \begin{split}
            & \left\vert
            \frac{ \sum_{\mathbf{x}_{i} \in \mathcal{X}_{0}} \mathcal{A}_{0}(\mathbf{x}_{i})_{k}}{\sum_{\mathbf{x}_{i} \in \mathcal{X}_{1}} \mathcal{A}_{1}(\mathbf{x}_{i})_{k}}
            - \frac{n_{0}}{n_{1}} \right\vert
            = \frac{n_{0}}{n_{1}}
            \left\vert 
            \frac{ \sum_{\mathbf{x}_{i} \in \mathcal{X}_{0}} \mathcal{A}_{0}(\mathbf{x}_{i})_{k} / n_{0}}{\sum_{\mathbf{x}_{i} \in \mathcal{X}_{1}} \mathcal{A}_{1}(\mathbf{x}_{i})_{k} / n_{1} }
            - 1 \right\vert
            = \frac{n_{0}}{n_{1}}
            \left\vert 
            \frac{\mathbb{E}_{0} \mathcal{A}_{0}(\mathbf{X})_{k}}{\mathbb{E}_{1} \mathcal{A}_{1}(\mathbf{X})_{k}}
            - 1 \right\vert
            \\
            & = \frac{n_{0}}{n_{1}}
            \frac{1}{\mathbb{E}_{1} \mathcal{A}_{1}(\mathbf{X})_{k}}
            \left\vert 
            \mathbb{E}_{0} \mathcal{A}_{0}(\mathbf{X})_{k} - \mathbb{E}_{1} \mathcal{A}_{1}(\mathbf{X})_{k}
            \right\vert
            \le \frac{n_{0}}{n_{1}} \frac{1}{\mathbb{E}_{1} \mathcal{A}_{1}(\mathbf{X})_{k}} \varepsilon,
        \end{split}
    \end{equation}
    for all $k \in [K].$
    Letting $c := \frac{n_{0}}{n_{1}} \max_{k \in [K]} \frac{1}{\mathbb{E}_{1} \mathcal{A}_{1}(\mathbf{X})_{k}}$ concludes the proof.

\end{proof}

\subsection{Proof of \cref{thm:approx-fcac}}\label{sec:appen-approx-fca-c}

We establish the approximation guarantee of our proposed algorithm:
{The cost of the solution obtained by our FCA-C algorithm in \cref{sec:control} is at most $\tau + 2$ times of the cost of the global optimal fair clustering solution} (where $\tau$ is the approximation error of the algorithm used for finding initial cluster centers without the fairness constraint), with an additional violation of $\mathcal{O}(\varepsilon).$
In other words, for a given fairness level $\varepsilon,$ FCA-C has an approximation error of $\tau + 2$ with an additional violation of $\mathcal{O}(\varepsilon).$

First, for a given fairness level $\varepsilon,$ we define several notations to be used as follows.
\begin{itemize}
    \item $\bm{\mu}^{\triangle}, \mathcal{A}_{0}^{\triangle}, \mathcal{A}_{1}^{\triangle}$: the solution of FCA-C algorithm for fairness level $\varepsilon,$ given initial cluster centers obtained by a $\tau$-approximation algorithm for $K$-means clustering.
    
    \item $\tilde{\bm{\mu}}, \tilde{\mathcal{A}}_{0}, \tilde{\mathcal{A}}_{1}$: the optimal fair clustering solution of $\min_{\bm{\mu}, \mathcal{A}_{0}, \mathcal{A}_{1}} C(\bm{\mu}, \mathcal{A}_{0}, \mathcal{A}_{1})$ subject to $(\mathcal{A}_{0}, \mathcal{A}_{1}) \in \mathbf{A}_{\varepsilon},$ for a given fairness level $\varepsilon.$

    \item $\bm{\mu}^{\diamond}, \mathcal{A}_{0}^{\diamond}, \mathcal{A}_{1}^{\diamond}$: the (approximated) clustering solution, given a $\tau$-approximation algorithm for (standard) $K$-means clustering.
\end{itemize}

Then, we can prove \cref{thm:appen-approx-fcac} below, which is a formal version of \cref{thm:approx-fcac}, showing the approximation error of FCA-C algorithm.

\begin{theorem}[A formal version of \cref{thm:approx-fcac}]\label{thm:appen-approx-fcac}
    Suppose that $\sup_{\mathbf{x} \in \mathcal{X}} \Vert \mathbf{x} \Vert^{2} \le R$ for some $R > 0.$
    Given initial cluster centers obtained by a $\tau$-approximation algorithm ,
    FCA-C returns a $(\tau + 2)$-approximate solution with a violation $3 R \varepsilon$ for optimal fair clustering,    
    i.e.,
    $C( \bm{\mu}^{\triangle}, \mathcal{A}_{0}^{\triangle}, \mathcal{A}_{1}^{\triangle} ) \le (\tau + 2) C(\tilde{\bm{\mu}}, \tilde{\mathcal{A}}_{0}, \tilde{\mathcal{A}}_{1}) + 3 R \varepsilon.$
\end{theorem}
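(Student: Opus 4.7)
The plan is to combine three ingredients: the monotone descent of the FCA-C iterates, the equivalence between $\tilde{C}$ and the constrained $C$ from \cref{thm:fca-c}, and a Huygens-plus-triangle-inequality comparison between the fair assignment built on $\bm{\mu}^\diamond$ and the global optimum $(\tilde{\bm{\mu}},\tilde{\mathcal{A}}_0,\tilde{\mathcal{A}}_1)$, in the spirit of \citet{bera2019fair}. First I would reduce the statement to a bound on a fixed-center fair cost: each of the three phases of \cref{alg:fca_c} is a coordinate-wise minimization of $\tilde{C}$ (Phase 1 over $\mathbb{Q}$, Phase 2 over $\bm{\mu}$, and Phase 3 over $\mathcal{W}$ subject to $\mathbb{Q}(\mathcal{W})\le\varepsilon$), so starting from $\bm{\mu}^\diamond$ gives $\tilde{C}(\mathbb{Q}^\triangle,\mathcal{W}^\triangle,\bm{\mu}^\triangle)\le\tilde{C}(\mathbb{Q},\mathcal{W},\bm{\mu}^\diamond)$ for every feasible $(\mathbb{Q},\mathcal{W})$. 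By \cref{thm:fca-c}, the left side equals $C(\bm{\mu}^\triangle,\mathcal{A}^\triangle_0,\mathcal{A}^\triangle_1)$ and the right side coincides with $C(\bm{\mu}^\diamond,\mathcal{A}'_0,\mathcal{A}'_1)$ for the induced $(\mathcal{A}'_0,\mathcal{A}'_1)\in\mathbf{A}_\varepsilon$, so it suffices to exhibit one such fair assignment with $C(\bm{\mu}^\diamond,\mathcal{A}'_0,\mathcal{A}'_1)\le(\tau+2)\,C(\tilde{\bm{\mu}},\tilde{\mathcal{A}}_0,\tilde{\mathcal{A}}_1)+3R\varepsilon$.

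Next I would construct the candidate $\mathcal{A}^\pi:=\tilde{\mathcal{A}}\circ\pi^{-1}$, where $\pi:[K]\to[K]$ is a bijection pairing each $\tilde{\mu}_k$ with a nearby $\mu^\diamond_{\pi(k)}$; relabelling cluster indices preserves fairness, so $\mathcal{A}^\pi\in\mathbf{A}_0\subset\mathbf{A}_\varepsilon$. Applying Huygens' identity and using that $\tilde{\mu}_k$ is the centroid of the $\tilde{\mathcal{A}}$-cluster $k$ (by optimality of $\tilde{\bm{\mu}}$) yields
\[
C(\bm{\mu}^\diamond,\mathcal{A}^\pi)=C(\tilde{\bm{\mu}},\tilde{\mathcal{A}})+\sum_{k=1}^K w_k\,\|\tilde{\mu}_k-\mu^\diamond_{\pi(k)}\|^2,
\]
where $w_k$ is the mass of cluster $k$ under $\tilde{\mathcal{A}}$. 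I would then control $\|\tilde{\mu}_k-\mu^\diamond_{\pi(k)}\|$ via $\|\tilde{\mu}_k-\mu^\diamond_{\pi(k)}\|\le\|x-\tilde{\mu}_k\|+\|x-\mu^\diamond_{k^\diamond(x)}\|$ for a representative $x$ in cluster $k$ (with $k^\diamond(x)$ the closest $\bm{\mu}^\diamond$-center), square and average over $x$ in the cluster, and combine with the bound $\tfrac{1}{n}\sum_x\|x-\mu^\diamond_{k^\diamond(x)}\|^2\le\tau\cdot C(\tilde{\bm{\mu}},\tilde{\mathcal{A}})$ (since the unconstrained $K$-means optimum cannot exceed the fair one). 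Retaining the Huygens cross term rather than applying $(a+b)^2\le 2a^2+2b^2$ bluntly should yield the sharp leading constant $\tau+2$.

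For the additive term $3R\varepsilon$, I would exploit that $\mathcal{A}^\pi$ is already perfectly fair and so the budget $\varepsilon$ in $\mathbf{A}_\varepsilon$ is unused. Under $\sup_{\mathbf{x}\in\mathcal{X}}\|\mathbf{x}\|^2\le R$, every squared distance $\|\mathbf{x}-\mu^\diamond_k\|^2$ with $\mu^\diamond_k$ a convex combination of data is at most $3R$. Re-routing up to an $\varepsilon$-mass of pairs to their own nearest $\bm{\mu}^\diamond$-center---precisely the $K$-means component on $\mathcal{W}$ in the FCA-C objective---therefore inflates the cost by at most $3R\varepsilon$, producing the claimed additive violation.

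The hard part is the second step: recovering the sharp factor $\tau+2$ rather than a naive $2\tau+3$ coming from crude applications of $(a+b)^2\le 2a^2+2b^2$ inside $\|\tilde{\mu}_k-\mu^\diamond_{\pi(k)}\|^2$. The remedy, adapted from the LP-rounding analysis of \citet{bera2019fair}, is to preserve the Huygens cross term $2\langle x-\tilde{\mu}_k,\tilde{\mu}_k-\mu^\diamond_{\pi(k)}\rangle$ intact and cancel it against the centroid identity, so that only a single squared-distance term absorbs the factor $\tau$. A secondary subtlety is ensuring $\pi$ is a genuine bijection, which can be handled by a Hall-type matching argument or by breaking ties consistently when several $\tilde{\mu}_k$'s share the same nearest $\bm{\mu}^\diamond$-center.
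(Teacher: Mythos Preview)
Your first-paragraph reduction contains the main gap: coordinate descent does not yield $\tilde{C}(\mathbb{Q}^\triangle,\mathcal{W}^\triangle,\bm{\mu}^\triangle)\le\tilde{C}(\mathbb{Q},\mathcal{W},\bm{\mu}^\diamond)$ for \emph{every} feasible $(\mathbb{Q},\mathcal{W})$. Phase~1 minimizes over $\mathbb{Q}$ with $\mathcal{W}$ held fixed and Phase~3 minimizes over $\mathcal{W}$ with $\mathbb{Q}$ held fixed; alternating these never certifies joint optimality over $(\mathbb{Q},\mathcal{W})$ at a given $\bm{\mu}$. If your chain were valid, then combining it with the direction of \cref{thm:fca-c} that manufactures a $(\mathbb{Q},\mathcal{W})$ from any fair assignment would already give $C(\bm{\mu}^\triangle,\mathcal{A}^\triangle)\le(\tau+2)\,C(\tilde{\bm{\mu}},\tilde{\mathcal{A}})$ via your perfectly fair $\mathcal{A}^\pi\in\mathbf{A}_0\subset\mathbf{A}_\varepsilon$, with \emph{no} additive term---strictly stronger than the theorem. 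Your third paragraph then tries to produce the missing $3R\varepsilon$ by ``re-routing'' an $\varepsilon$-mass to nearest centers, but that operation \emph{decreases} cost; it cannot supply an additive penalty.

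In the paper the $3R\varepsilon$ is precisely the price of the non-joint-optimality you assumed away. The argument is split in two: Claim~A establishes the $(\tau+2)$ factor at the \emph{joint} minimizer $(\mathbb{Q}',\mathcal{W}')=\argmin_{\mathbb{Q},\mathcal{W}}\tilde{C}(\mathbb{Q},\mathcal{W},\bm{\mu}^\diamond)$, and Claim~B bounds the gap between that joint minimum and the specific $(\mathbb{Q}^\star,\mathcal{W}^\star)$ the algorithm actually produces (Kantorovich $\mathbb{Q}^\star$ for the pure FCA cost, then quantile-based $\mathcal{W}^\star$), via the uniform pairwise bound $\textup{FCA cost}(\mathbf{x}_0,\mathbf{x}_1,\bm{\mu}^\diamond)-\textup{$K$-means cost}(\mathbf{x}_0,\mathbf{x}_1,\bm{\mu}^\diamond)\le 3R$ multiplied by the mass $\mathbb{Q}(\mathcal{W})\le\varepsilon$. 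A secondary point: your insistence on a bijection $\pi$ is an unnecessary obstacle. Claim~A uses the many-to-one map sending each $\tilde{\mu}_{k'}$ to its nearest $\mu^\diamond_k$; merging balanced clusters stays balanced, so the induced assignment remains in $\mathbf{A}_\varepsilon$, and the nearest-neighbor property makes the triangle comparison immediate without any Hall-type matching.
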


\begin{proof}[Proof of \cref{thm:appen-approx-fcac}]
    
    Let
    $\mathbb{Q}', \mathcal{W}' = \argmin_{\mathbb{Q}, \mathcal{W}} \tilde{C}(\mathbb{Q}, \mathcal{W}, \bm{\mu}^{\diamond}),$
    and $\mathcal{A}_{0}', \mathcal{A}_{1}'$ are the assignment functions corresponding to $\mathbb{Q}'$ and $\mathcal{W}'.$
    
    It suffices to show the following two claims:
    \begin{enumerate}
        \item (\textit{Claim A}):
        Given initial cluster centers obtained by a $\tau$-approximation algorithm for $K$-means clustering,
        we have $C( \bm{\mu}^{\diamond}, \mathcal{A}_{0}', \mathcal{A}_{1}' ) \le (\tau + 2) C( \tilde{\bm{\mu}}, \tilde{\mathcal{A}}_{0}, \tilde{\mathcal{A}}_{1} ).$
        
        \item (\textit{Claim B}): 
        There exist $\mathbb{Q}^{\star}, \mathcal{W}^{\star}$ such that
        $C(\bm{\mu}^{\diamond}, \mathcal{A}_{0}^{\star}, \mathcal{A}_{1}^{\star}) \le C(\bm{\mu}^{\diamond}, \mathcal{A}_{0}', \mathcal{A}_{1}') + 3 R \varepsilon,$
        where $\mathcal{A}_{0}^{\star}$ and $\mathcal{A}_{1}^{\star}$ are the fair assignment functions corresponding to $\mathbb{Q}^{\star}$ and $\mathcal{W}^{\star}.$
       
    \end{enumerate}
    
    Then, by combining Claim A and B, we have that $C(\bm{\mu}^{\diamond}, \mathcal{A}_{0}^{\star}, \mathcal{A}_{1}^{\star}) \le (\tau + 2) C( \tilde{\bm{\mu}}, \tilde{\mathcal{A}}_{0}, \tilde{\mathcal{A}}_{1} ) + 3 R \varepsilon.$
    Note that $\mathcal{A}_{0}^{\star}$ and $\mathcal{A}_{1}^{\star}$ are the feasible solution of FCA-C, while $\mathcal{A}_{0}'$ and $\mathcal{A}_{1}'$ are the global optimal fair assignment function given $\bm{\mu}^{\diamond}.$
    Consequently, we can clearly get the same approximation error for $(\bm{\mu}^{\triangle}, \mathcal{A}_{0}^{\triangle}, \mathcal{A}_{1}^{\triangle}),$ by iterating the process (finding new cluster centers minimizing the cost and updating the assignment functions), which completes the proof.
    See \cref{rmk:approx_err} for details.

    \begin{itemize}
        \item \textit{Proof of Claim A}:
        Let $C^{*} = \min_{\bm{\mu}, \mathcal{A}_{0}, \mathcal{A}_{1}} C(\bm{\mu}, \mathcal{A}_{0}, \mathcal{A}_{1})$ represent the global optimal clustering cost without fairness constraint.
        Then, we have $C(\bm{\mu}^{\diamond}, \mathcal{A}_{0}^{\diamond}, \mathcal{A}_{1}^{\diamond}) \le \tau C^{*}.$

        We show that for given initial cluster centers $\mu^{\diamond},$
        there exist $\mathcal{A}_{0}^{+}$ and $\mathcal{A}_{1}^{+}$ that satisfy
        the following conditions:
        (i) $(\mathcal{A}_{0}^{+}, \mathcal{A}_{1}^{+}) \in \mathbf{A}_{\varepsilon}$,
        (ii) $C( \bm{\mu}^{\diamond}, \mathcal{A}_{0}^{+}, \mathcal{A}_{1}^{+} ) \le (\tau + 2) C( \tilde{\bm{\mu}}, \tilde{\mathcal{A}}_{0}, \tilde{\mathcal{A}}_{1} ),$
        and
        (iii) $C( \bm{\mu}^{\diamond}, \mathcal{A}_{0}', \mathcal{A}_{1}' ) \le C( \bm{\mu}^{\diamond}, \mathcal{A}_{0}^{+}, \mathcal{A}_{1}^{+} ),$
        where $\mathcal{A}_{0}', \mathcal{A}_{1}'$ are the fair assignment functions corresponding to
        $\mathbb{Q}', \mathcal{W}' = \argmin_{\mathbb{Q}, \mathcal{W}} \tilde{C}(\mathbb{Q}, \mathcal{W}, \bm{\mu}^{\diamond}).$

        (i)
        Recall that $\tilde{\bm{\mu}}= \{ \tilde{\mu}_{k} \}_{k=1}^{K} $ and $\bm{\mu}^{\diamond} = \{ \mu_{k}^{\diamond} \}_{k=1}^{K}.$
        For all $k \in [K], $ we define the set of nearest neighbors of $\mu_{k}^{\diamond}$ as $N(\mu_{k}^{\diamond}) := \{ \tilde{\mu}_{k} \in \tilde{\bm{\mu}}: \argmin_{\mu_{k'}^{\diamond} \in \bm{\mu}^{\diamond}} \Vert \tilde{\mu}_{k} - \mu_{k'}^{\diamond} \Vert^{2} = \mu_{k}^{\diamond} \}.$
        For $\mathbf{x} \in \mathcal{X},$ we define $\mathcal{A}_{s}^{+}(\mathbf{x})_{k} := \sum_{k': \tilde{\mu}_{k'} \in N(\mu_{k}^{\diamond}) } \tilde{\mathcal{A}}_{s}(\mathbf{x})_{k'}.$
        Then, $\mathcal{A}_{s}^{+}$ also becomes an assignment function since $\sum_{k=1}^{K} \mathcal{A}_{s}^{+}(\mathbf{x}) = 1$ for all $\mathbf{x} \in \mathcal{X}.$
        
        Since $ ( \tilde{\mathcal{A}}_{0}, \tilde{\mathcal{A}}_{1} ) \in \mathbf{A}_{\varepsilon},$ we have the fact that
        $
        \sum_{k=1}^{K} \left\vert \frac{1}{n_{0}} \sum_{i=1}^{n_{0}} \tilde{\mathcal{A}}_{0}(\mathbf{x}_{i})_{k} - \frac{1}{n_{1}} \sum_{j=1}^{n_{1}} \tilde{\mathcal{A}}_{1} (\mathbf{x}_{j})_{k} \right\vert \le \varepsilon.
        $
        Therefore, we obtain 
        $$
        \sum_{k=1}^{K} \left\vert \frac{1}{n_{0}}  \sum_{i=1}^{n_{0}} \mathcal{A}_{0}^{+}(\mathbf{x}_{i})_{k} - \frac{1}{n_{1}} \sum_{j=1}^{n_{1}} \mathcal{A}_{1}^{+} (\mathbf{x}_{j})_{k} \right\vert
        =
        \sum_{k=1}^{K} \left\vert \sum_{k': \tilde{\mu}_{k'} \in N(\mu_{k}^{\diamond})} \left( \frac{1}{n_{0}} \sum_{i=1}^{n_{0}} \tilde{\mathcal{A}}_{0}(\mathbf{x}_{i})_{k'} - \frac{1}{n_{1}} \sum_{j=1}^{n_{1}} \tilde{\mathcal{A}}_{1} (\mathbf{x}_{j})_{k'}  \right) \right\vert
        $$
        $$
        \le
        \sum_{k=1}^{K} \sum_{k': \tilde{\mu}_{k'} \in N(\mu_{k}^{\diamond})} \left\vert \frac{1}{n_{0}} \sum_{i=1}^{n_{0}} \tilde{\mathcal{A}}_{0}(\mathbf{x}_{i})_{k'} - \frac{1}{n_{1}} \sum_{j=1}^{n_{1}} \tilde{\mathcal{A}}_{1} (\mathbf{x}_{j})_{k'} \right\vert
        =
        \sum_{k'=1}^{K} \left\vert \frac{1}{n_{0}} \sum_{i=1}^{n_{0}} \tilde{\mathcal{A}}_{0}(\mathbf{x}_{i})_{k'} - \frac{1}{n_{1}} \sum_{j=1}^{n_{1}} \tilde{\mathcal{A}}_{1} (\mathbf{x}_{j})_{k'} \right\vert \le \varepsilon,
        $$
        where the last equality holds because $\cup_{k=1}^{K} \cup_{k': \tilde{\mu}_{k'} \in N(\mu_{k}^{\diamond})} \{ k' \} = \{ k' \}_{k'=1}^{K}.$
        Thus, the condition (i) is satisfied.

        (ii) 
        For a given $\mathbf{x} \in \mathcal{X},$ let $\mu_{k}^{\diamond}(\mathbf{x}) := \argmin_{\mu_{k}^{\diamond} \in \mu^{\diamond}} \Vert \mathbf{x} - \mu_{k}^{\diamond} \Vert^{2}$ be the initial cluster center closest to $\mathbf{x}.$
        We then have:
        \begin{equation}
            \begin{split}
                \sum_{k=1}^{K} \mathcal{A}_{s}^{+}(\mathbf{x})_{k} \Vert \mathbf{x} - \mu_{k}^{\diamond} \Vert^{2}
                & = \sum_{k=1}^{K} \sum_{k': \tilde{\mu}_{k'} \in N(\mu_{k}^{\diamond}) } \tilde{\mathcal{A}}_{s}(\mathbf{x})_{k'} 
                \Vert \mathbf{x} - \mu_{k}^{\diamond} \Vert^{2}
                \\
                & \le \sum_{k=1}^{K} \sum_{k': \tilde{\mu}_{k'} \in N(\mu_{k}^{\diamond}) } \tilde{\mathcal{A}}_{s}(\mathbf{x})_{k'} 
                \left( \Vert \mathbf{x} - \tilde{\mu}_{k'} \Vert^{2} + \Vert \tilde{\mu}_{k'} - \mu_{k}^{\diamond} \Vert^{2} \right)
                \\
                & \le \sum_{k=1}^{K} \sum_{k': \tilde{\mu}_{k'} \in N(\mu_{k}^{\diamond}) } \tilde{\mathcal{A}}_{s}(\mathbf{x})_{k'} 
                \left( \Vert \mathbf{x} - \tilde{\mu}_{k'} \Vert^{2} + \Vert \tilde{\mu}_{k'} - \mu_{k}^{\diamond}(\mathbf{x}) \Vert^{2} \right)
                \\
                & \le \sum_{k=1}^{K} \sum_{k': \tilde{\mu}_{k'} \in N(\mu_{k}^{\diamond}) } \tilde{\mathcal{A}}_{s}(\mathbf{x})_{k'} 
                \left( 2 \Vert \mathbf{x} - \tilde{\mu}_{k'} \Vert^{2} + \Vert \mathbf{x} - \mu_{k}^{\diamond}(\mathbf{x}) \Vert^{2} \right)
                \\
                & = 2 \sum_{k'=1}^{K} \tilde{\mathcal{A}}_{s}(\mathbf{x})_{k'} 
                \Vert \mathbf{x} - \tilde{\mu}_{k'} \Vert^{2}
                + \sum_{k=1}^{K} \sum_{k': \tilde{\mu}_{k'} \in N(\mu_{k}^{\diamond}) } \tilde{\mathcal{A}}_{s}(\mathbf{x})_{k'} 
                \Vert \mathbf{x} - \mu_{k}^{\diamond}(\mathbf{x}) \Vert^{2}
                \\
                & = 2 \sum_{k'=1}^{K} \tilde{\mathcal{A}}_{s}(\mathbf{x})_{k'} 
                \Vert \mathbf{x} - \tilde{\mu}_{k'} \Vert^{2}
                + 
                \Vert \mathbf{x} - \mu_{k}^{\diamond}(\mathbf{x}) \Vert^{2}.
            \end{split}
        \end{equation}
        Summing over all $\mathbf{x}$ and dividing by $n,$ we obtain:
        $$
        C( \bm{\mu}^{\diamond}, \mathcal{A}_{0}^{+}, \mathcal{A}_{1}^{+} )
        \le 2 C(\tilde{\bm{\mu}}, \tilde{\mathcal{A}}_{0}, \tilde{\mathcal{A}}_{1}) 
        + \frac{1}{n} \sum_{\mathbf{x} \in \mathcal{X}} \min_{k} \Vert \mathbf{x} - \mu_{k}^{\diamond} \Vert^{2}
        = 2 C(\tilde{\bm{\mu}}, \tilde{\mathcal{A}}_{0}, \tilde{\mathcal{A}}_{1}) 
        + C(\bm{\mu}^{\diamond}, \mathcal{A}_{0}^{\diamond}, \mathcal{A}_{1}^{\diamond})
        $$
        $$
        \le 
        2 C(\tilde{\bm{\mu}}, \tilde{\mathcal{A}}_{0}, \tilde{\mathcal{A}}_{1})
        + \tau C^{*}
        \le 
        2 C(\tilde{\bm{\mu}}, \tilde{\mathcal{A}}_{0}, \tilde{\mathcal{A}}_{1})
        + \tau C(\tilde{\bm{\mu}}, \tilde{\mathcal{A}}_{0}, \tilde{\mathcal{A}}_{1})
        = (\tau + 2) C(\tilde{\bm{\mu}}, \tilde{\mathcal{A}}_{0}, \tilde{\mathcal{A}}_{1}),
        $$
        which concludes (ii).

        (iii)
        It is clear that $C( \bm{\mu}^{\diamond}, \mathcal{A}_{0}', \mathcal{A}_{1}' ) \le C( \bm{\mu}^{\diamond}, \mathcal{A}_{0}^{+}, \mathcal{A}_{1}^{+} ),$
        since $\mathcal{A}_{0}'$ and $\mathcal{A}_{1}'$ are the minimizers of $C(\bm{\mu}, \mathcal{A}_{0}, \mathcal{A}_{1})$ subject to $(\mathcal{A}_{0}, \mathcal{A}_{1}) \in \mathbf{A}_{\varepsilon}$ given $\bm{\mu} = \bm{\mu}^{\diamond}$ (by Theorem 4.1).
        Thus, the condition (iii) is satisfied.

        Let $\mu_{k}' := \sum_{i=1}^{n} \mathcal{A}_{s}'(\mathbf{x}_{i})_{k} \mathbf{x}_{i} / \sum_{i=1}^{n} \mathcal{A}_{s}'(\mathbf{x}_{i})_{k}, $ which is the minimizer of $\min_{\bm{\mu}} C(\bm{\mu}, \mathcal{A}_{0}', \mathcal{A}_{1}')$ given $\mathcal{A}_{0}'$ and $\mathcal{A}_{1}'.$
        Then, it is clear that $C(\bm{\mu}', \mathcal{A}_{0}', \mathcal{A}_{1}') \le C(\bm{\mu}^{\diamond}, \mathcal{A}_{0}', \mathcal{A}_{1}') \le (\tau + 2) C(\tilde{\bm{\mu}}, \tilde{\mathcal{A}}_{0}, \tilde{\mathcal{A}}_{1}).$

        \item 
        \textit{Proof of Claim B}:
        Note that we can rewrite
        \begin{equation}
            \begin{split}
                \tilde{C}(\mathbb{Q}, \mathcal{W}, \bm{\mu})
                & =
                \mathbb{E}_{\mathbf{X}_{0}, \mathbf{X}_{1} \sim \mathbb{Q}}
                \big( \textup{FCA cost} (\mathbf{X}_{0}, \mathbf{X}_{1}, \bm{\mu}) \big)
                \\
                & - \mathbb{E}_{\mathbf{X}_{0}, \mathbf{X}_{1} \sim \mathbb{Q}} 
                \big( \textup{FCA cost} (\mathbf{X}_{0}, \mathbf{X}_{1}, \bm{\mu}) - \textup{K-means cost} (\mathbf{X}_{0}, \mathbf{X}_{1}, \bm{\mu}) \big)
                \mathbbm{1}\left((\mathbf{X}_{0}, \mathbf{X}_{1}) \in \mathcal{W} \right),
            \end{split}
        \end{equation}
        where $\textup{FCA cost} (\mathbf{X}_{0}, \mathbf{X}_{1}, \bm{\mu}) = \big( 2 \pi_{0} \pi_{1} \Vert \mathbf{X}_{0} - \mathbf{X}_{1} \Vert^{2} + \min_{k} \Vert \mathbf{T}^{\textup{A}}(\mathbf{X}_{0}, \mathbf{X}_{1}) - \mu_{k} \Vert^{2} \big)$
        and $\textup{K-means cost} (\mathbf{X}_{0}, \mathbf{X}_{1}, \bm{\mu}) = \min_{k} \left( \pi_{0} \Vert \mathbf{X}_{0} - \mu_{k} \Vert^{2} \right) + \min_{k} \left( \pi_{1} \Vert \mathbf{X}_{1} - \mu_{k} \Vert^{2} \right),$ 
        which are defined in eq. (5).
    
        (i):
        For given $\bm{\mu}^{\diamond},$
        define $\mathbb{Q}^{\star}$ as the solution of
        $
        \min_{\mathbb{Q} \in \mathcal{Q}} \mathbb{E}_{\mathbf{X}_{0}, \mathbf{X}_{1} \sim \mathbb{Q}}
        (\textup{FCA cost} (\mathbf{X}_{0}, \mathbf{X}_{1}, \bm{\mu}^{\diamond})),
        $
        which can be found by solving the Kantorovich problem.
        Then, we have $\mathbb{E}_{\mathbf{X}_{0}, \mathbf{X}_{1} \sim \mathbb{Q}^{\star}}
        \big( \textup{FCA cost} (\mathbf{X}_{0}, \mathbf{X}_{1}, \bm{\mu}^{\diamond}) \big) \le \mathbb{E}_{\mathbf{X}_{0}, \mathbf{X}_{1} \sim \mathbb{Q}'}
        \big( \textup{FCA cost} (\mathbf{X}_{0}, \mathbf{X}_{1}, \bm{\mu}^{\diamond}) \big).$
    
        (ii): Let $\eta(\mathbf{x}_{0}, \mathbf{x}_{1}) := \textup{FCA cost}(\mathbf{x}_{0}, \mathbf{x}_{1}, \bm{\mu}^{\diamond}) - \textup{K-means cost}(\mathbf{x}_{0}, \mathbf{x}_{1}, \bm{\mu}^{\diamond}) $ 
        and $\eta_\varepsilon$ be the $\varepsilon$th upper quantile.
        Define $\mathcal{W}^{\star}=\{ (\mathbf{x}_{0}, \mathbf{x}_{1}) \in \mathcal{X}_{0} \times \mathcal{X}_{1}: \eta(\mathbf{x}_{0}, \mathbf{x}_{1})> \eta_\varepsilon\}.$
        Then, using (i), we have
        \begin{equation}
        \begin{split}
            \tilde{C}(\mathbb{Q}^{\star}, \mathcal{W}^{\star}, \bm{\mu}^{\diamond})
            & \le 
            \tilde{C}(\mathbb{Q}^{\star}, \emptyset, \bm{\mu}^{\diamond})
            = 
            \mathbb{E}_{\mathbf{X}_{0}, \mathbf{X}_{1} \sim \mathbb{Q}^{\star}} (\textup{FCA-cost}(\mathbf{X}_{0}, \mathbf{X}_{1}, \bm{\mu}^{\diamond}))
            \\
            & = \mathbb{E}_{\mathbf{X}_{0}, \mathbf{X}_{1} \sim \mathbb{Q}^{\star}} (\textup{FCA-cost}(\mathbf{X}_{0}, \mathbf{X}_{1}, \bm{\mu}^{\diamond}))
            \\
            & - \sup_{\mathbb{Q}, \mathcal{W} : \mathbb{Q}(\mathcal{W}) \le \varepsilon} \mathbb{E}_{\mathbf{X}_{0}, \mathbf{X}_{1} \sim \mathbb{Q}} 
            \big( \textup{FCA cost} (\mathbf{X}_{0}, \mathbf{X}_{1}, \bm{\mu}) - \textup{K-means cost} (\mathbf{X}_{0}, \mathbf{X}_{1}, \bm{\mu}) \big)
            \mathbbm{1}\left((\mathbf{X}_{0}, \mathbf{X}_{1}) \in \mathcal{W} \right)
            \\
            & + \sup_{\mathbb{Q}, \mathcal{W} : \mathbb{Q}(\mathcal{W}) \le \varepsilon} \mathbb{E}_{\mathbf{X}_{0}, \mathbf{X}_{1} \sim \mathbb{Q}} 
            \big( \textup{FCA cost} (\mathbf{X}_{0}, \mathbf{X}_{1}, \bm{\mu}) - \textup{K-means cost} (\mathbf{X}_{0}, \mathbf{X}_{1}, \bm{\mu}) \big)
            \mathbbm{1}\left((\mathbf{X}_{0}, \mathbf{X}_{1}) \in \mathcal{W} \right)
            \\
            & \le \mathbb{E}_{\mathbf{X}_{0}, \mathbf{X}_{1} \sim \mathbb{Q}'} (\textup{FCA-cost}(\mathbf{X}_{0}, \mathbf{X}_{1}, \bm{\mu}^{\diamond}))
            \\
            & - \mathbb{E}_{\mathbf{X}_{0}, \mathbf{X}_{1} \sim \mathbb{Q}'} 
            \big( \textup{FCA cost} (\mathbf{X}_{0}, \mathbf{X}_{1}, \bm{\mu}) - \textup{K-means cost} (\mathbf{X}_{0}, \mathbf{X}_{1}, \bm{\mu}) \big)
            \mathbbm{1}\left((\mathbf{X}_{0}, \mathbf{X}_{1}) \in \mathcal{W}' \right)
            \\
            & + \sup_{\mathbb{Q}, \mathcal{W} : \mathbb{Q}(\mathcal{W}) \le \varepsilon} \mathbb{E}_{\mathbf{X}_{0}, \mathbf{X}_{1} \sim \mathbb{Q}} 
            \big( \textup{FCA cost} (\mathbf{X}_{0}, \mathbf{X}_{1}, \bm{\mu}) - \textup{K-means cost} (\mathbf{X}_{0}, \mathbf{X}_{1}, \bm{\mu}) \big)
            \mathbbm{1}\left((\mathbf{X}_{0}, \mathbf{X}_{1}) \in \mathcal{W} \right)
            \\
            & = \tilde{C}(\mathbb{Q}', \mathcal{W}', \bm{\mu}^{\diamond}) 
            \\
            & + 
            \sup_{\mathbb{Q}, \mathcal{W} : \mathbb{Q}(\mathcal{W}) \le \varepsilon} \mathbb{E}_{\mathbf{X}_{0}, \mathbf{X}_{1} \sim \mathbb{Q}} 
            \big( \textup{FCA cost} (\mathbf{X}_{0}, \mathbf{X}_{1}, \bm{\mu}) - \textup{K-means cost} (\mathbf{X}_{0}, \mathbf{X}_{1}, \bm{\mu}) \big)
            \mathbbm{1}\left((\mathbf{X}_{0}, \mathbf{X}_{1}) \in \mathcal{W} \right).
        \end{split}
        \end{equation}
    
        (iii)
        The last term of the right-hand-side can be bounded as follows:
        
        First, let ${\mu}^{\diamond}(\mathbf{x}) := \argmin_{\mu^{\diamond}_{k} \in \bm{\mu}^{\diamond}} \Vert \mathbf{x} - \mu^{\diamond}_{k} \Vert^{2}$ for $\mathbf{x} \in \mathcal{X}$
        and
        ${\mu}^{\diamond}(\mathbf{x}_{0}, \mathbf{x}_{1}) := \argmin_{\mu^{\diamond}_{k} \in \bm{\mu}^{\diamond}} \Vert \mathbf{T}^{\textup{A}}(\mathbf{x}_{0}, \mathbf{x}_{1}) - \mu^{\diamond}_{k} \Vert^{2}$
        for $(\mathbf{x}_{0}, \mathbf{x}_{1}) \in \mathcal{X}_{0} \times \mathcal{X}_{1}.$
        Then,
        $\forall (\mathbf{x}_{0}, \mathbf{x}_{1}) \in \mathcal{X}_{0} \times \mathcal{X}_{1},$
        it holds that
        \begin{equation}
            \begin{split}
                & \textup{FCA cost}(\mathbf{x}_{0}, \mathbf{x}_{1}, \bm{\mu}^{\diamond}) - \textup{K-means cost}(\mathbf{x}_{0}, \mathbf{x}_{1}, \bm{\mu}^{\diamond}) 
                \\
                & = 2 \pi_{0} \pi_{1} \Vert \mathbf{x}_{0} - \mathbf{x}_{1} \Vert^{2} 
                + \Vert \mathbf{T}^{\textup{A}}(\mathbf{x}_{0}, \mathbf{x}_{1}) - \mu^{\diamond}(\mathbf{x}_{0}, \mathbf{x}_{1}) \Vert^{2}
                - \left( \pi_{0} \Vert \mathbf{x}_{0} - \mu^{\diamond}(\mathbf{x}_{0}) \Vert^{2} \right) - \left( \pi_{1} \Vert \mathbf{x}_{1} - \mu^{\diamond}(\mathbf{x}_{1}) \Vert^{2} \right)
                \\
                & \le 2 \pi_{0} \pi_{1} \Vert \mathbf{x}_{0} - \mathbf{x}_{1} \Vert^{2}
                + \Vert \mathbf{T}^{\textup{A}}(\mathbf{x}_{0}, \mathbf{x}_{1}) - \mu^{\diamond}(\mathbf{x}_{0}) \Vert^{2}
                - \left( \pi_{0} \Vert \mathbf{x}_{0} - \mu^{\diamond}(\mathbf{x}_{0}) \Vert^{2} \right) - \left( \pi_{1} \Vert \mathbf{x}_{1} - \mu^{\diamond}(\mathbf{x}_{1}) \Vert^{2} \right)
                \\
                & \le 2 \pi_{0} \pi_{1} \Vert \mathbf{x}_{0} - \mathbf{x}_{1} \Vert^{2}
                + \pi_{0} \Vert \mathbf{x}_{0} - \mu^{\diamond}(\mathbf{x}_{0}) \Vert^{2} + \pi_{1} \Vert \mathbf{x}_{1} - \mu^{\diamond}(\mathbf{x}_{0}) \Vert^{2}
                \\
                & - \left( \pi_{0} \Vert \mathbf{x}_{0} - \mu^{\diamond}(\mathbf{x}_{0}) \Vert^{2} \right) - \left( \pi_{1} \Vert \mathbf{x}_{1} - \mu^{\diamond}(\mathbf{x}_{1}) \Vert^{2} \right)
                \\
                & = 2 \pi_{0} \pi_{1} \Vert \mathbf{x}_{0} - \mathbf{x}_{1} \Vert^{2}
                + \pi_{1} ( \Vert \mathbf{x}_{1} - \mu^{\diamond}(\mathbf{x}_{0}) \Vert^{2} - \Vert \mathbf{x}_{1} - \mu^{\diamond}(\mathbf{x}_{1}) \Vert^{2} )
                \\
                & \le 2 \pi_{0} \pi_{1} \Vert \mathbf{x}_{0} - \mathbf{x}_{1} \Vert^{2}
                + \pi_{1} \Vert \mathbf{x}_{1} - \mu^{\diamond}(\mathbf{x}_{0}) \Vert^{2}
                \\
                & \le \frac{1}{2} \Vert \mathbf{x}_{0} - \mathbf{x}_{1} \Vert^{2} + \Vert \mathbf{x}_{1} - \mu^{\diamond}(\mathbf{x}_{0}) \Vert^{2}
                \le \frac{1}{2} 2 R + 2R = 3R.
            \end{split}
        \end{equation}
    
        Hence, we conclude that
        $$
        \sup_{\mathbb{Q}, \mathcal{W} : \mathbb{Q}(\mathcal{W}) \le \varepsilon} \mathbb{E}_{\mathbf{X}_{0}, \mathbf{X}_{1} \sim \mathbb{Q}} 
        \big( \textup{FCA cost} (\mathbf{X}_{0}, \mathbf{X}_{1}, \bm{\mu}) - \textup{K-means cost} (\mathbf{X}_{0}, \mathbf{X}_{1}, \bm{\mu}) \big)
        \mathbbm{1}\left((\mathbf{X}_{0}, \mathbf{X}_{1}) \in \mathcal{W} \right)
        $$
        $$
        \le 3 R \sup_{\mathbb{Q}, \mathcal{W} : \mathbb{Q}(\mathcal{W}) 
        = \varepsilon} \mathbb{E}_{\mathbf{X}_{0}, \mathbf{X}_{1} \sim \mathbb{Q}} (\mathbbm{1}\left((\mathbf{X}_{0}, \mathbf{X}_{1}) \in \mathcal{W} \right)) = 3 R \mathbb{Q}(\mathcal{W})
        = 3 R \varepsilon.
        $$
    
        Finally, combining (ii) and (iii), we have
        $ \tilde{C}(\mathbb{Q}^{\star}, \mathcal{W}^{\star}, \bm{\mu}^{\diamond}) 
        \le 
        \tilde{C}(\mathbb{Q}', \mathcal{W}', \bm{\mu}^{\diamond}) + 3 R \varepsilon,$
        which implies
        $C(\bm{\mu}^{\diamond}, \mathcal{A}_{0}^{\star}, \mathcal{A}_{1}^{\star}) \le C(\bm{\mu}^{\diamond}, \mathcal{A}_{0}', \mathcal{A}_{1}') + 3 R \varepsilon,$
        where $\mathcal{A}_{0}^{\star}$ and $\mathcal{A}_{1}^{\star}$ are the fair assignment functions corresponding to $\mathbb{Q}^{\star}$ and $\mathcal{W}^{\star}.$


    \end{itemize}

    \begin{remark}\label{rmk:approx_err}
        Finally, FCA-C algorithm iterates the above process.
        Using FCA-C, we find $\mathcal{A}_{0}''$ and $\mathcal{A}_{1}''$, which minimizes the cost given $\bm{\mu}',$
        where $\mu_{k}' := \sum_{i=1}^{n} \mathcal{A}_{s}'(\mathbf{x}_{i})_{k} \mathbf{x}_{i} / \sum_{i=1}^{n} \mathcal{A}_{s}'(\mathbf{x}_{i})_{k},$ i.e., the minimizer of $\min_{\bm{\mu}} C(\bm{\mu}, \mathcal{A}_{0}', \mathcal{A}_{1}')$ given $\mathcal{A}_{0}'$ and $\mathcal{A}_{1}'.$
        Again, let $\mu_{k}'' := \sum_{i=1}^{n} \mathcal{A}_{s}''(\mathbf{x}_{i})_{k} \mathbf{x}_{i} / \sum_{i=1}^{n} \mathcal{A}_{s}''(\mathbf{x}_{i})_{k},$ which is the minimizer of $\min_{\bm{\mu}} C(\bm{\mu}, \mathcal{A}_{0}'', \mathcal{A}_{1}'')$ given $\mathcal{A}_{0}''$ and $\mathcal{A}_{1}''.$
        
        This iteration process results in the following inequality:
        $C(\bm{\mu}'', \mathcal{A}_{0}'', \mathcal{A}_{1}'') \le C(\bm{\mu}', \mathcal{A}_{0}'', \mathcal{A}_{1}'') \le C(\bm{\mu}', \mathcal{A}_{0}', \mathcal{A}_{1}') \le (\tau + 2) C(\tilde{\bm{\mu}}, \tilde{\mathcal{A}}_{0}, \tilde{\mathcal{A}}_{1}) + 3R\varepsilon.$
        Hence, iterating this process until convergence guarantees the approximation error of $\tau + 2$ with additional violation of $3R\varepsilon,$
        which concludes that $C( \bm{\mu}^{\triangle}, \mathcal{A}_{0}^{\triangle}, \mathcal{A}_{1}^{\triangle} ) \le (\tau + 2) C(\tilde{\bm{\mu}}, \tilde{\mathcal{A}}_{0}, \tilde{\mathcal{A}}_{1}) + 3 R \varepsilon.$
    \end{remark}

\end{proof}

\clearpage

\section{Experiments}

\subsection{Datasets}\label{sec:appen-datasets}

The three datasets used in our experiments can be found in the UCI Machine Learning Repository\footnote{\url{https://archive.ics.uci.edu/datasets}}.

\begin{itemize}
    \item 
    \textsc{Adult} dataset \citep{misc_adult_2} can be downloaded from \url{https://archive.is.uci/ml/datasets/adult}.
    We use 5 continuous variables
    (\texttt{age}, \texttt{fnlwgt}, \texttt{education}, \texttt{capital-gain}, and \texttt{hours-per-week}).
    The total sample size is 32,561.
    The sample size for $S=0$ and $S=1$ (resp. female and male) are 10,771 and 21,790, respectively.

    \item 
    \textsc{Bank} dataset \citep{misc_bank_marketing_222} can be downloaded from \url{https://archive.ics.uci.edu/ml/datasets/Bank+Marketing}.
    We use 6 continuous variables
    (\texttt{age}, \texttt{duration}, \texttt{euribor3m}, \texttt{nr.employed}, \texttt{cons.price.idx}, and \texttt{campaign}).
    The total sample size is 41,108.
    The sample size for $S=0$ and $S=1$ (resp. not married and married) are 16,180 and 24,928, respectively.

    \item 
    \textsc{Census} dataset \citep{misc_us_census_data_(1990)_116} can be downloaded from \url{https://archive.ics.uci.edu/ml/datasets/US+Census+Data+(1990)}.
    We use 66 continuous variables
    (\texttt{dAncstry1}, \texttt{dAncstry2}, \texttt{iAvail}, \texttt{iCitizen}, \texttt{iClass}, \texttt{dDepart}, \texttt{iDisabl1}, \texttt{iDisabl2}, \texttt{iEnglish}, \texttt{iFeb55}, \texttt{iFertil}, \texttt{dHispanic}, \texttt{dHour89}, \texttt{dHours}, \texttt{iImmigr}, \texttt{dIncome1}, \texttt{dIncome2}, \texttt{dIncome3}, \texttt{dIncome4}, \texttt{dIncome5}, \texttt{dIncome6}, \texttt{dIncome7}, \texttt{dIncome8}, \texttt{dIndustry}, \texttt{iKorean}, \texttt{iLang1}, \texttt{iLooking}, \texttt{iMarital}, \texttt{iMay75880}, \texttt{iMeans}, \texttt{iMilitary}, \texttt{iMobility}, \texttt{iMobillim}, \texttt{dOccup}, \texttt{iOthrserv}, \texttt{iPerscare}, \texttt{dPOB}, \texttt{dPoverty}, \texttt{dPwgt1}, \texttt{iRagechld}, \texttt{dRearning}, \texttt{iRelat1}, \texttt{iRelat2}, \texttt{iRemplpar}, \texttt{iRiders}, \texttt{iRlabor}, \texttt{iRownchld}, \texttt{dRpincome}, \texttt{iRPOB}, \texttt{iRrelchld}, \texttt{iRspouse}, \texttt{iRvetserv}, \texttt{iSchool}, \texttt{iSept80}, \texttt{iSubfam1}, \texttt{iSubfam2}, \texttt{iTmpabsnt}, \texttt{dTravtime}, \texttt{iVietnam}, \texttt{dWeek89}, \texttt{iWork89}, \texttt{iWorklwk}, \texttt{iWWII}, \texttt{iYearsch}, \texttt{iYearwrk}, and \texttt{dYrsserv}).
    We subsample 20,000 instances, as done in \citet{chierichetti2017fair, bera2019fair, esmaeili2021fair}.
    The sample size for $S=0$ and $S=1$ (not married and married, respectively) are 9,844 and 10,156 respectively.
\end{itemize}

Note that, for each dataset, we use only the continuous (numerical) variables as introduced above, consistent with previous studies, e.g., \citet{backurs2019scalable, bera2019fair, esmaeili2021fair, ziko2021variational}, to name a few.
Particularly, the features we consider in this paper are the same as those selected in a baseline method VFC \citep{ziko2021variational}.
The variables of all datasets are scaled with zero mean and unit variance.

\subsection{Implementation details}\label{sec:appen-expdetails}

\subsubsection{Computing resources}

The computation is performed on several Intel Xeon Silver CPU cores and an additional RTX 4090 GPU processor.

\subsubsection{Proposed algorithms}

\paragraph{FCA}

We use the \texttt{POT} library \citep{flamary2021pot} to find the optimal joint distribution $\mathbb{Q}$ (i.e., $\Gamma$).
For updating the cluster centers, we adopt the $K$-means++ algorithm \citep{10.5555/1283383.1283494} from the implementation of \texttt{scikit-learn} package \citet{scikit-learn}.
The iterative process of updating the cluster centers and the joint distribution is run for 100 iterations, with the result where \texttt{Cost} is minimized being selected.

\paragraph{FCA-C}

To control \texttt{Bal}, we vary $\varepsilon$, i.e., the size of $\mathcal{W}$.
The value of $\varepsilon$ is sweeped in increments of 0.05, ranging from 0.1 to 0.9.
We also use a similar partitioning technique in \cref{sec:partition} for FCA-C with $m = 2048$.
For stability, at the first iteration step, we find $\mathcal{W}$ based on fixed cluster centers $\bm{\mu}$, initialized using the $K$-means++ algorithm.

\subsubsection{Baseline algorithms}\label{sec:append-baselines}

\paragraph{Scalable Fair Clustering (SFC) \citep{backurs2019scalable}}

We directly use the official source code of SFC, available on the authors' GitHub\footnote{\url{https://github.com/talwagner/fair_clustering}}.
SFC provides a fast and scalable algorithm for fairlet decomposition, which builds the fairlets in nearly linear time.
The given data are first embedded to a tree structure (hierarchically well-separated tree; HST) using probabilistic metric embedding, where we seek for optimal edges (as well as their nodes) to be activated that satisfy $\texttt{Bal} \approx \texttt{Bal}^{\star}.$
Then, the linked nodes are then aggregated as a fairlet.
This process can build fairlets in nearly linear time while minimizing the cost of building them.
After building these fairlets using SFC, we apply the standard $K$-means algorithm on the fairlet space (i.e., the set of representatives of the obtained fairlets).

\paragraph{Fair Clustering under a Bounded Cost (FCBC) \citep{esmaeili2021fair}}

We use the official source code of FCBC, available on the authors' GitHub\footnote{\url{https://github.com/Seyed2357/Fair-Clustering-Under-Bounded-Cost}}.
FCBC maximizes \texttt{Bal} under a cost constraint, where the cost constraint is defined by the Price of Fairness (PoF) = `cost of fair clustering (the solution) / cost of standard clustering'.
However, since the authors mentioned the constrained optimization problem is NP-hard, they reduced the problem to a post-processing approach (i.e., fairly assigning data under the cost constraint, with pre-specified centers opened by a standard clustering algorithm).
We set the value of PoF to 1.2 to achieve $\texttt{Bal} \approx \texttt{Bal}^{*}.$

\paragraph{Fair Round-Robin Algorithm for Clustering (FRAC) \citep{10.1007/s10618-023-00928-6}}

We directly follow the official source code of FRAC, available on the authors' GitHub\footnote{\url{https://github.com/shivi98g/Fair-k-means-Clustering-via-Algorithmic-Fairness}}.
FRAC provides an in-loop post-processing approach to fairly assign data from each protected group to given cluster centers found by a standard clustering algorithm.
In other words, the fair assignment problem is solved at each iteration of standard clustering algorithm.

\paragraph{Variational Fair Clustering (VFC) \citep{ziko2021variational}}

We use the official source code of VFC, available on the authors' GitHub\footnote{\url{https://github.com/imtiazziko/Variational-Fair-Clustering}}.
The overall objective of VFC is
$
\mathbb{E} \min_{k} \Vert \mathbf{X} - \mu_{k} \Vert^{2} 
+ \lambda \cdot \sum_{k=1}^{K}
\textup{KL} \left( 
[\pi_{0}, \pi_{1}]
\Vert
\left[ \frac{ \pi_{0} \mathbb{E}_{0} \mathcal{A}_{0}(\mathbf{X})_{k} }{ \mathbb{E} \mathcal{A}_{S}(\mathbf{X})_{k} }, \frac{ \pi_{1} \mathbb{E}_{1} \mathcal{A}_{1}(\mathbf{X})_{k} }{ \mathbb{E} \mathcal{A}_{S}(\mathbf{X})_{k} }\right]
\right),
$
where $\lambda$ is a hyper-parameter to control \texttt{Bal}.
A higher $\lambda$ results in higher \texttt{Bal}, with $\textup{KL} = 0 \iff \texttt{Bal} = 1.$
We run the code with multiple trials, by varying the values of $\lambda.$
For \cref{table:main-2}, we select the best $\lambda$ that achieves the highest \texttt{Bal} and report the corresponding performance for the chosen hyper-parameter, as the authors have done.
For \cref{fig:FCA-C}, we present all the results obtained using the various values of $\lambda.$
\cref{table:vfc} below provides the values of VFC's hyper-parameters used in our experiments.

\begin{table}[h]
    \centering
    \footnotesize
    \caption{
    The hyper-parameters used in VFC for searching a clustering with maximum achievable \texttt{Bal} for each dataset.
    The \textbf{bold} faces are the recommended ones by the authors.
    The \underline{underlined} values are the ones we use for maximum achievable \texttt{Bal}.
    }
    \label{table:vfc}
    \vskip 0.1in
    \begin{tabular}{c|c||c}
        \toprule
        Dataset & $L_{2}$ normalization & Hyper-parameters
        \\
        \midrule
        \midrule
        \multirow{2}{*}{\textsc{Adult}} & O & $ \{ 5000, 7000, \textbf{9000}, 10000, 11000, 12000, 13000, 13600, \underline{14200} \} $
        \\
        & X & $ \{ 5000, 7000, 9000, 10000, 11000, 12000, 13000, 15000, 20000, 22000, \underline{23000} \} $
        \\
        \midrule
        \multirow{2}{*}{\textsc{Bank}} & O & $ \{ 5000, \textbf{6000}, 7000, 9000, 10000, 11000, 12000, \underline{12300} \} $
        \\
        & X & $ \{ 10000, 12000, 13000, 15000, 17000, 19000, 25200, \underline{26000} \} $
        \\
        \midrule
        \multirow{2}{*}{\textsc{Census}} & O & $ \{ 100, 200, 500, 700, 1000, 1500, \underline{2000} \} $
        \\
        & X & \textit{Failed}
        \\
        \bottomrule
    \end{tabular}
\end{table}

Note that VFC's superior performance over other two well-known FC algorithms from \citet{bera2019fair, kleindessner2019guarantees} was already shown in \citet{ziko2021variational}, which is why we omit these two methods as baselines in our experiments.

\clearpage
\subsection{Omitted experimental results}\label{sec:appen-expresults}

\subsubsection{Performance comparison results (\cref{sec:exp-main})}\label{sec:main-exp-appen}

\paragraph{Trade-off: fairness level vs. clustering utility}

\cref{table:main-1-appen} presents the comparison results for the trade-off between \texttt{Bal} and \texttt{Cost} without $L_{2}$ normalization, which is similar to \cref{table:main-1}.

\begin{table*}[h]
    \centering
    \footnotesize
    \caption{
    Comparison of the trade-off between \texttt{Bal} and \texttt{Cost} on \textsc{Adult}, \textsc{Bank}, and \textsc{Census} datasets, when data are not $L_{2}$-normalized. 
    We \underline{underline \texttt{Bal}} values for the cases of near-perfect fairness (i.e., \texttt{Bal} $ \approx \texttt{Bal}^{\star}$) and use \textbf{bold face for the lowest} \texttt{Cost} value among those cases.
    }
    \label{table:main-1-appen}
    \vskip 0.1in
    \begin{tabular}{l||c|c|c|c|c|c}
        \toprule
        Dataset / $\texttt{Bal}^{\star}$ & \multicolumn{2}{c|}{\textsc{Adult} / 0.494} & \multicolumn{2}{c|}{\textsc{Bank} / 0.649} & \multicolumn{2}{c}{\textsc{Census} / 0.969}
        \\
        \midrule
        \midrule
        Without $L_{2}$ normalization & \texttt{Cost} ($\downarrow$) & \texttt{Bal} ($\uparrow$) & \texttt{Cost} ($\downarrow$) & \texttt{Bal} ($\uparrow$) & \texttt{Cost} ($\downarrow$) & \texttt{Bal} ($\uparrow$)
        \\
        \midrule
        Standard (fair-unaware) & 1.620 & 0.206 & 1.510 & 0.391 & 28.809 & 0.030 
        \\
        FCBC \citep{esmaeili2021fair} & 1.851 & 0.460 & 2.013 & 0.610 & 59.988 & \underline{0.925}
        \\
        SFC \citep{backurs2019scalable} & 3.399 & \underline{0.471} & 3.236 & \underline{0.622} & 69.437 & \underline{0.940}
        \\
        FRAC \citep{10.1007/s10618-023-00928-6} & 2.900 & \underline{0.488} & 2.716 & \underline{0.646} & 38.430 & \underline{0.962}
        \\
        FCA \checkmark & \textbf{1.875} & \underline{0.492} & \textbf{1.859} & \underline{0.647} & \textbf{33.472} & \underline{0.959}
        \\
        \bottomrule
    \end{tabular}
    \vskip -0.05in
\end{table*}

On the other hand, as shown in \cref{table:main-1}, FCBC attains a slightly lower \texttt{Cost} (0.314) than FCA, whereas FCA yields a higher fairness level.
To fairly compare the two methods at equal \texttt{Cost}, we run FCA-C targeting $\texttt{Cost}\approx0.314$, i.e., the \texttt{Cost} of FCBC.
Under this constraint, FCA-C achieves a \texttt{Balance} of 0.473, compared to 0.443 of FCBC, demonstrating that FCA offers a superior trade-off between \texttt{Bal} and \texttt{Cost}.

\begin{table}[h]
    \centering
    \footnotesize
    \caption{
    Comparison of FCBC and FCA-C in terms of the trade-off between \texttt{Bal} and \texttt{Cost} on \textsc{Adult} dataset, when data are $L_{2}$-normalized.
    \texttt{Cost} is fixed near $0.314$ for a fair comparison.
    }
    \label{table:fca_vs_fcbc}
    \vskip 0.1in
    \begin{tabular}{l||c|c}
        \toprule
        \multicolumn{3}{c}{\textsc{Adult}}
        \\
        \midrule
        \midrule
        $\texttt{Bal}^{\star} = 0.494$ & \texttt{Cost} ($\downarrow$) & \texttt{Bal} ($\uparrow$)
        \\
        \midrule
        FCBC \citep{esmaeili2021fair} & 0.314 & 0.443
        \\
        FCA-C \checkmark & 0.313 & \textbf{0.473}
        \\
        \bottomrule
    \end{tabular}
\end{table}

For our main experiments in \cref{sec:exp}, we mainly compare FCA with the scalable fairlet-based method of \citet{backurs2019scalable}.
Here, we additionally compare FCA with the original fairlet-based approach introduced by \citet{chierichetti2017fair}.
\cref{tab:vs_original_fairlet} shows that FCA outperforms the fairlet-based method of \citet{chierichetti2017fair} as well as SFC.

\begin{table*}[h]
    \centering
    \footnotesize
    \caption{Comparison of \citet{chierichetti2017fair}, SFC and FCA in terms of the trade-off between \texttt{Bal} and \texttt{Cost} on \textsc{Adult}, \textsc{Bank}, and \textsc{Census} datasets, when data are $L_{2}$-normalized.
    The \textbf{bold}-faced results indicate the bests.}
    \label{tab:vs_original_fairlet}
    \vskip 0.1in
    \begin{tabular}{l||c|c|c|c|c|c}
        \toprule
        Dataset / $\texttt{Bal}^{\star}$ & \multicolumn{2}{c|}{\textsc{Adult} / 0.494} & \multicolumn{2}{c|}{\textsc{Bank} / 0.649} & \multicolumn{2}{c}{\textsc{Census} / 0.969}
        \\
        \midrule
        \midrule
        With $L_{2}$ normalization & \texttt{Cost} ($\downarrow$) & \texttt{Bal} ($\uparrow$) & \texttt{Cost} ($\downarrow$) & \texttt{Bal} ($\uparrow$) & \texttt{Cost} ($\downarrow$) & \texttt{Bal} ($\uparrow$)
        \\
        \midrule
        Chietichetti et al. (2017) & 0.507 & 0.488 & 0.378 & 0.639 & 1.124 & 0.941
        \\
        SFC & 0.534 & 0.489 & 0.410 & 0.632 & 1.015 & {0.937}
        \\
        FCA \checkmark & \textbf{0.328} & \textbf{0.493} & \textbf{0.264} & \textbf{0.645} & \textbf{0.477} & \textbf{0.962}
        \\
        \bottomrule
    \end{tabular}
    
\end{table*}

\clearpage
\paragraph{Numerical stability}

\cref{table:main-2-appen} presents the comparison results of FCA and VFC in terms of numerical stability without $L_{2}$ normalization, which is similar to \cref{table:main-2}.
In specific, on \textsc{Census} dataset without $L_{2}$ normalization, VFC fails, i.e., it does not achieve higher \texttt{Bal} than the standard clustering for any $\lambda$.
We find that the failure of VFC on \textsc{Census} dataset (without $L_2$ normalization) is due to explosion of an exponential term calculated in its algorithm. 
There exists an exponential term with respect to the clustering cost in the calculation of the optimal assignment vector in the VFC algorithm, and thus when the clustering cost becomes too large, VFC fails due to an overflow. 
Note that the input dimension of \textsc{Census} dataset is 66, while those of \textsc{Adult} and \textsc{Bank} are 5 and 6, respectively.
While the clustering cost with $L_2$ normalization is bounded regardless of the dimension, the clustering cost without $L_2$ normalization is proportional to the input dimension.
This is why VFC fails only for \textsc{Census} dataset without $L_{2}$ normalization.
In contrast, as FCA does not fail at all regardless of the $L_{2}$ normalization because there is no exponential term in the algorithm, meaning that it is numerically more stable than VFC.

\begin{table*}[h]
    \centering
    \footnotesize
    \caption{
    Comparison of the two in-processing algorithms, VFC \citep{ziko2021variational} and FCA, in terms of numerical stability when achieving the maximum \texttt{Bal}, on \textsc{Adult}, \textsc{Bank}, and \textsc{Census} datasets without $L_{2}$ normalization.
    \textbf{Bold}-faced results are the highest values of \texttt{Bal}.
    }
    \label{table:main-2-appen}
    \vskip 0.1in
    \begin{tabular}{l||c|c}
        \toprule
        \multicolumn{3}{c}{\texttt{Bal} (\texttt{Cost})}
        \\
        \midrule
        Dataset / $\texttt{Bal}^{\star}$ & VFC \citep{ziko2021variational} & FCA \checkmark
        \\
        \midrule
        \textsc{Adult} / 0.494 & 0.310 (1.688) & \textbf{0.493} (1.875)
        \\
        \textsc{Bank} / 0.649 & 0.505 (1.549) & \textbf{0.647} (1.859)
        \\
        \textsc{Census} / 0.969 & Failed & \textbf{0.959} (33.472)
        \\
        \bottomrule
    \end{tabular}
\end{table*}

\paragraph{Control of fairness level}

We also provide the full results (i.e., \cref{fig:FCA-C-rest,fig:FCA-C-2} where the data are $L_{2}$-normalized and not $L_{2}$-normalized, respectively) showing the trade-off of VFC and FCA-C for various fairness levels, on \textsc{Adult}, \textsc{Bank} and \textsc{Census} datasets.
FCA and VFC show similar trade-offs, while VFC cannot achieve sufficiently high values of \texttt{Bal} (the orange dashed lines are the limit values of balance that VFC can achieve).

\begin{figure}[h]
    \vskip -0.1in
    \centering
    \includegraphics[width=0.33\textwidth]{figures/Adult/Adult_FCA-C_L2True-new_2.png}
    \includegraphics[width=0.33\textwidth]{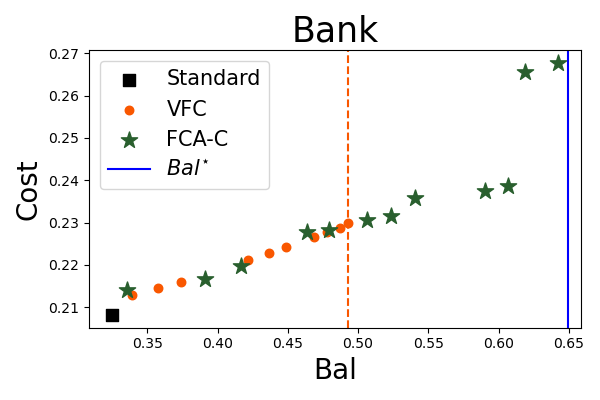}
    \includegraphics[width=0.33\textwidth]{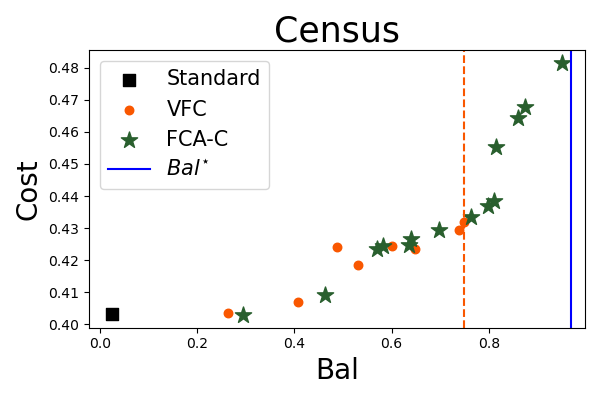}
    \caption{
    \texttt{Bal} vs. \texttt{Cost} trade-offs on (left) \textsc{Adult}, (center) \textsc{Bank} and (right) \textsc{Census} datasets.
    Black square ($\blacksquare$) is from the standard clustering, orange circle (\textcolor{vfc}{$\bullet$}) is from VFC, green star (\textcolor{fca-c}{$\star$}) is from FCA-C, orange dashed line (\textcolor{vfc}{- -}) is the maximum of \texttt{Bal} that VFC can achieve, and blue line (\textcolor{blue}{--}) is the maximum achievable balance $\texttt{Bal}^{\star}.$
    }
    \vskip -0.1in
    \label{fig:FCA-C-rest}
\end{figure}

\begin{figure}[h]
    \centering
    \includegraphics[width=0.32\textwidth]{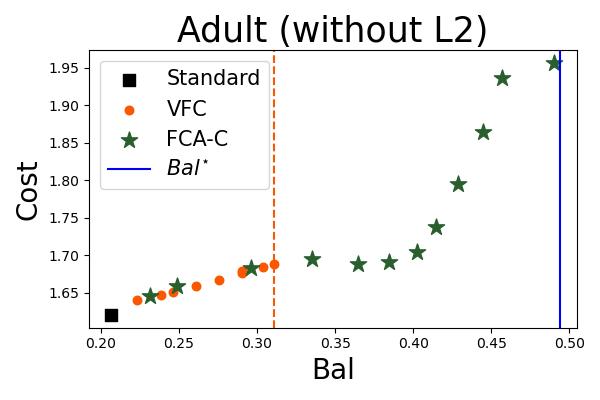}
    \includegraphics[width=0.32\textwidth]{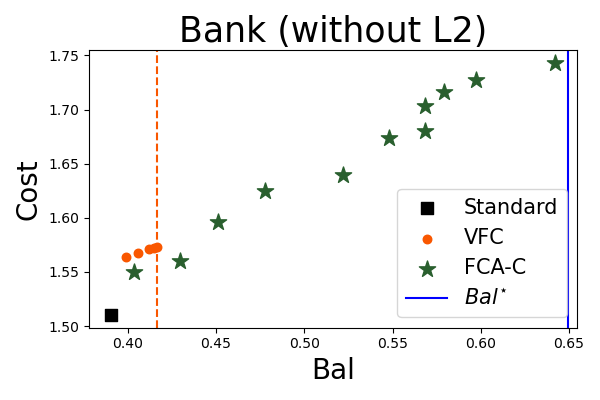}
    \includegraphics[width=0.32\textwidth]{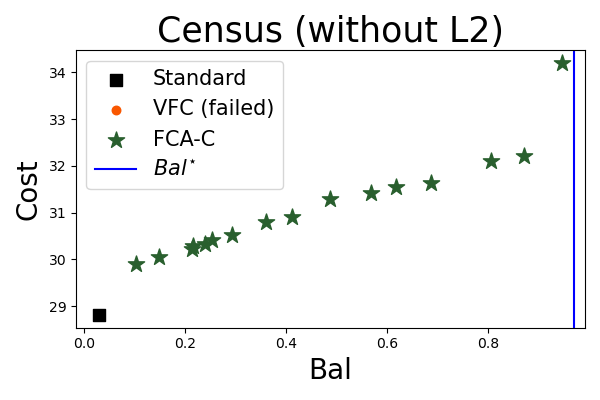}
    \caption{
    \texttt{Bal} vs. \texttt{Cost} trade-offs on (left) \textsc{Adult}, (center) \textsc{Bank} and (right) \textsc{Census} datasets.
    Black square ($\blacksquare$) is from the standard clustering, orange circle (\textcolor{vfc}{$\bullet$}) is from VFC, green star (\textcolor{fca-c}{$\star$}) is from FCA-C, orange dashed line (\textcolor{vfc}{- -}) is the maximum of \texttt{Bal} that VFC can achieve, and blue line (\textcolor{blue}{--}) is the maximum achievable balance $\texttt{Bal}^{\star}.$
    The data are not $L_{2}$-normalized.
    }
    \label{fig:FCA-C-2}
\end{figure}

\paragraph{Comparison in terms of the silhouette score}

As measuring \texttt{Cost} alone may not fully capture the clustering quality, we additionally consider another measure, called the silhouette score.
The silhouette score (we abbreviate by \texttt{Silhouette}) is computed as the average of $ (d_{\text{near}} - d_{\text{intra}}) / \max(d_{\text{intra}}, d_{\text{near}})$ over all data points, where $d_{\text{intra}}$ denotes the intra-cluster distance and $d_{\text{near}}$ represents the average distance to the nearest neighboring cluster.
Notably, the results in \cref{tab:sil_score} show that FCA is also superior or competitive to baselines in terms of the silhouette score.

\begin{table*}[h]
    \centering
    \footnotesize
    \caption{Comparison of the \texttt{Silhouette} and \texttt{Bal} on \textsc{Adult} dataset, when the data are $L_{2}$-normalized.}
    \label{tab:sil_score}
    \vskip 0.1in
    \begin{tabular}{l||c|c}
        \toprule
        Dataset / $\texttt{Bal}^{\star}$ & \multicolumn{2}{c}{\textsc{Adult} / 0.494} 
        \\
        \midrule
        \midrule
        With $L_{2}$ normalization & \texttt{Silhouette} ($\uparrow$) & \texttt{Bal} ($\uparrow$)
        \\
        \midrule
        Standard (fair-unaware) & 0.227 & 0.223
        \\
        FCBC & 0.173 & 0.443
        \\
        SFC & 0.071 & 0.489
        \\
        FRAC & 0.156 & 0.490
        \\
        FCA \checkmark & \textbf{0.176} & \textbf{0.493}
        \\
        \bottomrule
    \end{tabular}
    
\end{table*}

\paragraph{Analysis on an additional dataset}

We additionally conduct an analysis on \textsc{CreditCard} dataset ($n = 30000$) from \citet{YEH20092473}, which was also used in \citet{bera2019fair, NEURIPS2020_a6d259bf}.
We directly follow the data preprocessing of \citet{bera2019fair}.
We use gender as the sensitive attribute and set $K = 10.$
The results are provided in \cref{tab:creditcard_result} below, where we can observe that FCA outperforms the baseline methods on \textsc{CreditCard} dataset as well.

\begin{table*}[h]
    \centering
    \footnotesize
    \caption{Comparison of the \texttt{Cost} and \texttt{Bal} on \textsc{CreditCard} dataset, when the data are $L_{2}$-normalized.}
    \label{tab:creditcard_result}
    \vskip 0.1in
    \begin{tabular}{l||c|c}
        \toprule
        Dataset / $\texttt{Bal}^{\star}$ & \multicolumn{2}{c}{\textsc{CreditCard} / 0.656} 
        \\
        \midrule
        \midrule
        With $L_{2}$ normalization & \texttt{Cost} ($\downarrow$) & \texttt{Bal} ($\uparrow$)
        \\
        \midrule
        Standard (fair-unaware) & 0.392 & 0.506
        \\
        FCBC & 0.492 & 0.629
        \\
        SFC & 0.682 & \textbf{0.653}
        \\
        FRAC & 0.510 & 0.649 
        \\
        FCA \checkmark & \textbf{0.402} & \textbf{0.653}
        \\
        \bottomrule
    \end{tabular}
\end{table*}

\clearpage
\subsubsection{Applicability to visual clustering (\cref{sec:exp-image})}\label{sec:appen-exp-image}

\paragraph{Settings: datasets, baselines, and measures}
\textsc{RMNIST} is a mixture of two image digit datasets: the original MNIST and a color-reversed version (where black and white are swapped).
\textsc{Office-31} is a mixture of two datasets from two domains (amazon and webcam) with 31 classes.
Both datasets are used in state-of-the-art visual FC methods \citep{li2020deep, zeng2023deep}.

For the baseline, we consider a state-of-the-art FC method in vision domain called FCMI from \citet{zeng2023deep}.
FCMI learns a fair autoencoder with two additional loss terms: (i) clustering loss on the latent space and (ii) mutual information between latent vector and color.
While FCMI is an end-to-end method that learns the fair latent vector and perform clustering on the fair latent space simultaneously,
FCA is applied to a pre-trained latent space obtained by learning an autoencoder with the reconstruction loss only.
We also report the performances of DFC \citep{li2020deep} which was the main baseline in the FCMI paper,
along with SFC \citep{backurs2019scalable} and VFC \citep{ziko2021variational}, even though these two methods are not specifically designed for the vision domain.

The clustering performance for the two image datasets is evaluated by two classification measures $\texttt{ACC}$ (accuracy calculated based on assigned cluster indices and ground-truth classification labels) and $\texttt{NMI}$ (normalized mutual information between ground-truth label distribution and assigned cluster distribution), which are consistently used in \citet{li2020deep, zeng2023deep}, as datasets involve ground-truth classification labels (e.g., $\{ 0, 1, \ldots, 9 \}$ for \textsc{RMNIST} and the 31 classes for \textsc{Office-31}).
The fairness level is also evaluated by \texttt{Bal}.

\paragraph{Results}
\cref{table:main-image-full} shows that FCA performs similar to FCMI, which is the state-of-the-art, while outperforming the other baselines with large margins in terms of \texttt{Bal}.
Note that SFC, VFC, and FCA are two-step methods, i.e., they find fair clustering on the pre-trained (fair-unaware) latent space, and FCA is the best among those.
Furthermore, on the other hand, DFC and FCMI are end-to-end methods so it is noteworthy that FCA outperforms DFC and performs similarly to FCMI.

\begin{table*}[h]
    \centering
    \footnotesize
    \caption{
    Comparison of clustering utility (\texttt{ACC} and \texttt{NMI}) and fairness level (\texttt{Bal}) on two image datasets.
    `Standard (fair-unaware)' indicates autoencoder + $K$-means \citep{JMLR:v11:vincent10a}.
    First-place values are \textbf{bold}, and second-place values are \underline{underlined}.
    The performances of the baselines reflects the better results between our re-implementation and the one reported by \citet{zeng2023deep}.
    }
    \label{table:main-image-full}
    \vskip 0.1in
    \begin{tabular}{l||c|c|c||c|c|c}
        \toprule
        Dataset / $\texttt{Bal}^{\star}$ & \multicolumn{3}{c||}{\textsc{RMNIST} / 1.000} & \multicolumn{3}{c}{\textsc{Office-31} / 0.282}
        \\
        \midrule
        Performance & \texttt{ACC} ($\uparrow$) & \texttt{NMI} ($\uparrow$) & \texttt{Bal} ($\uparrow$) & \texttt{ACC} ($\uparrow$) & \texttt{NMI} ($\uparrow$) & \texttt{Bal} ($\uparrow$)
        \\
        \midrule
        Standard (fair-unaware) & 41.0 & 52.8 & 0.000 & 63.8 & 66.8 & 0.192
        \\
        SFC \citep{backurs2019scalable} & 51.3 & 49.1 & \textbf{1.000} & 61.6 & 61.2 & \underline{0.267}
        \\
        VFC \citep{ziko2021variational} & 38.1 & 42.7 & 0.000 & 64.8 & 70.4 & 0.212 
        \\
        DFC \citep{li2020deep} & 49.9 & 68.9 & 0.800 & \underline{69.0} & \underline{70.9} & 0.165 
        \\
        FCMI \citep{zeng2023deep} & \underline{88.4} & \textbf{86.4} & \underline{0.995} & \textbf{70.0} & \textbf{71.2} & {0.226}
        \\
        FCA \checkmark & \textbf{89.0} & \underline{79.0} & \textbf{1.000} & 67.6 & 70.5 & \textbf{0.270}
        \\
        \bottomrule
    \end{tabular}
    \vskip -0.1in
\end{table*}

\paragraph{Further comparison between the fairlet-based method and FCA in visual clustering}
We compare the fairlet-based method and FCA using \textsc{Office-31} dataset, considering (i) not only the overall clustering utility, (ii) but also the similarity of matched features.
For a clear comparison, we sample a balanced subset (with respect to the label and sensitive attribute) from the original dataset, which is imbalanced.
That is, we ensure that the number of samples with the same label is equal across the two protected groups (i.e., the two domains).
As a result, we obtain 795 images from both the amazon and webcam domains.
We then find fairlets or apply FCA on this balanced dataset.
Note that finding fairlets when $n_{0} = n_{1}$ is equivalent to finding the optimal transport map \citep{villani2008optimal, chierichetti2017fair, peyre2020computationaloptimaltransport}.

\begin{table}[h]
    \vskip -0.05in
    \centering
    \footnotesize
    \caption{
    Comparison of the fairlet-based method and FCA.
    `Matching cost' is defined by the average distance between two matched features.
    `Matching = Label' is defined by the average ratio of images with the same label being matched.
    \textbf{Bold}-faced values indicate the best performance.
    }
    \label{table:appen-image}
    \vskip 0.1in
    \begin{tabular}{l||c|c||c|c|c}
        \toprule
        \multirow{2}{*}{Matching method} & \multicolumn{2}{c||}{Matching performance} & \multicolumn{3}{c}{Clustering performance}
        \\
        & Matching cost & Matching = Label ($\uparrow$) & \texttt{Cost} ($\downarrow$) & \texttt{ACC} ($\uparrow$) & \texttt{NMI} ($\uparrow$)
        \\
        \midrule
        Fairlet-based & \textbf{0.211} & 0.595 & 0.278 & 65.8 & 71.0
        \\
        FCA \checkmark & 0.241 & \textbf{0.631} & \textbf{0.269} & \textbf{69.3} & \textbf{72.2}
        \\
        \bottomrule
    \end{tabular}
    \vskip -0.05in
\end{table}

\cref{table:appen-image} presents the comparison results using various measures, including performance measures with respect to matching (the matching cost and how much images with the same label are matched) and clustering (\texttt{Cost}, \texttt{ACC}, and \texttt{NMI}).
While the fairlets tend to match more similar features (i.e., the matching cost is lower) as expected by the definition of fairlets, FCA exhibits a better ability to collect images with same labels into clusters (i.e., lower \texttt{Cost}, higher \texttt{ACC}, and higher \texttt{NMI}).
Moreover, in visual clustering, matching similar features in the pre-trained latent space does not always guarantee that images with the same label are matched (FCA achieves a higher proportion of matchings where images share the same label compared to fairlets).
These results suggest that while fairlets provide optimal matchings in terms of feature similarity, however, they may be suboptimal in terms of label similarity and overall clustering utility.

\clearpage
\subsubsection{Ablation study: (1) selection of the partition size $m$ (\cref{sec:abl})}\label{sec:appen-partitionsize}

This section provided empirical evidence for the partitioning technique introduced in \cref{sec:partition}.
First, \cref{fig:batch_size2} indicates that using a partition size of around 1000 yields reasonable results.
Specifically, using $m$ values greater than 1000 shows similar performance compared to those obtained with $m = 1024.$

\begin{figure}[h]
    \centering
    \includegraphics[width=0.3\textwidth]{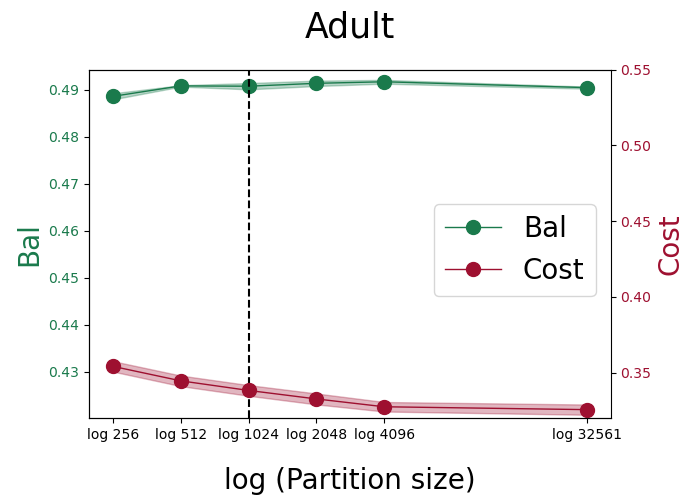}
    \includegraphics[width=0.3\textwidth]{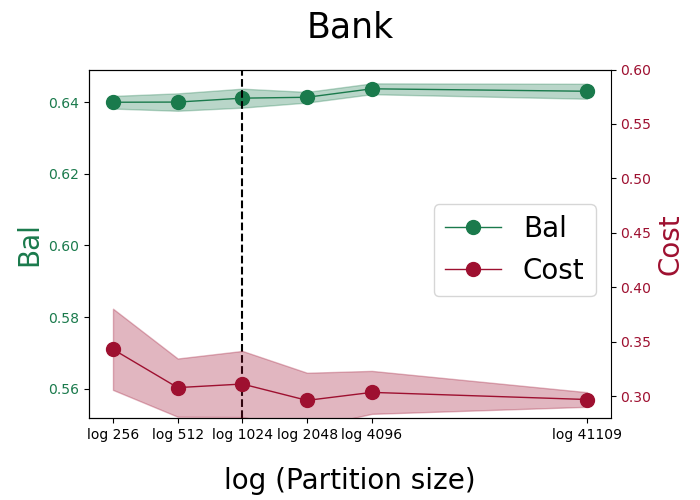}
    \includegraphics[width=0.3\textwidth]{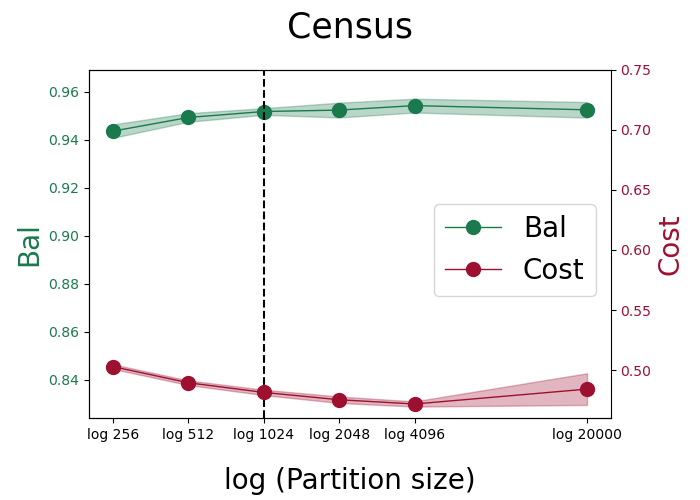}
    \\
    \includegraphics[width=0.3\textwidth]{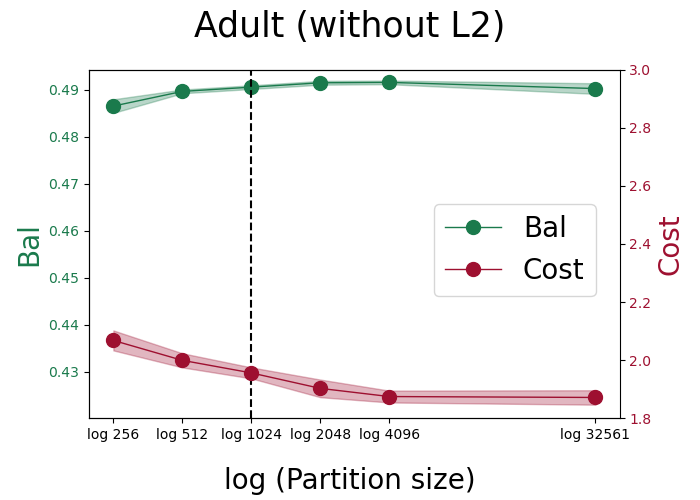}
    \includegraphics[width=0.3\textwidth]{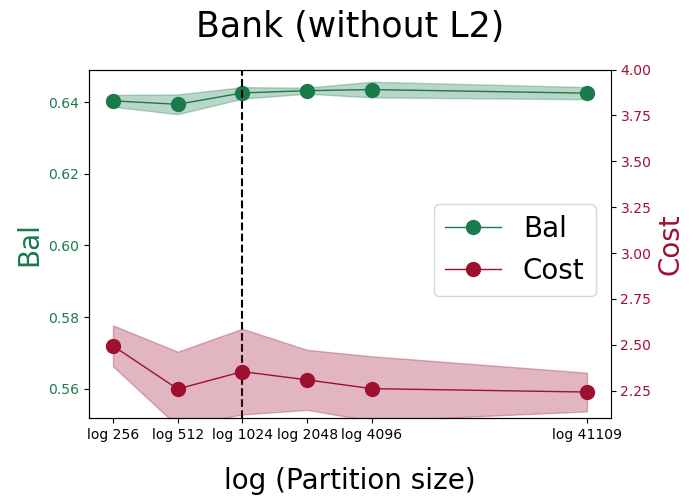}
    \includegraphics[width=0.3\textwidth]{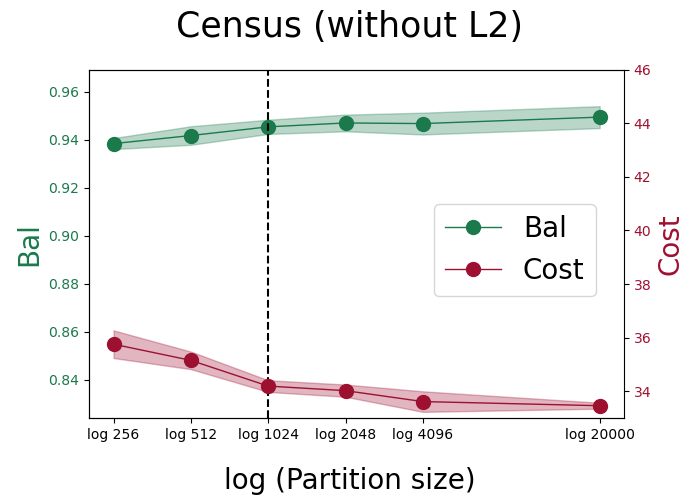}
    \caption{Variations of \texttt{Cost} and \texttt{Bal} with respect to the partition size.
    (Left, Center, Right) = (\textsc{Adult}, \textsc{Bank}, \textsc{Census}).
    (Top, Bottom) = (With $L_{2}$ normalization, Without $L_{2}$ normalization).
    }
    \label{fig:batch_size2}
\end{figure}

We further provide the elapsed computation time for various partition sizes, up to using the full dataset.
Using $m = 1024$ as the baseline, we calculate the relative computation time (\%) for other partition sizes.
The comparison, presented in \cref{table:computation-2} below, shows that using $m = 1024$ leads to a significant reduction in computation time.

\begin{table}[h]
    \centering
    \footnotesize
    \caption{
    Comparison of computation time with different partition sizes, up to using the full dataset.
    For each partition size and dataset, we provide the averaged relative elapsed time per iteration, when compared to computation time for $m = 1024.$
    }
    \label{table:computation-2}
    \vskip 0.1in
    \begin{tabular}{l||c|c|c|c|c|c}
        \toprule
        \multirow{2}{*}{(Relative) elapsed time per iteration} & \multicolumn{6}{c}{Partition size $m$}
        \\
        & 256 & 512 & 1024 \checkmark & 2048 & 4096 & Full
        \\
        \midrule
        \midrule
        \textsc{Adult} ($n = 32561, d = 5$) & 17\% & 58\% & 100\% & 140\% & 344\% & 3,184\%
        \\
        \textsc{Bank} ($n = 41108, d = 6$) & 14\% & 43\% & 100\% & 161\% & 288\% & 3,308\%
        \\
        \textsc{Census} ($n = 20000, d = 66$) & 23\% & 52\% & 100\% & 176\% & 375\% & 1,064\%
        \\
        \bottomrule
    \end{tabular}
\end{table}

Additionally, we observe that computation time is linear in $m^{2},$ as shown in \cref{fig:batch_size4} below.
This numerical result can support the theoretical computational complexity $\mathcal{O}(n m^{2})$ described in \cref{sec:partition}.
See \cref{table:appen-FCA_time_large} in \cref{sec:appen-large_data} for another result showing that the computation time is also linear in $n.$

\begin{figure}[h]
    \centering
    \includegraphics[width=0.3\textwidth]{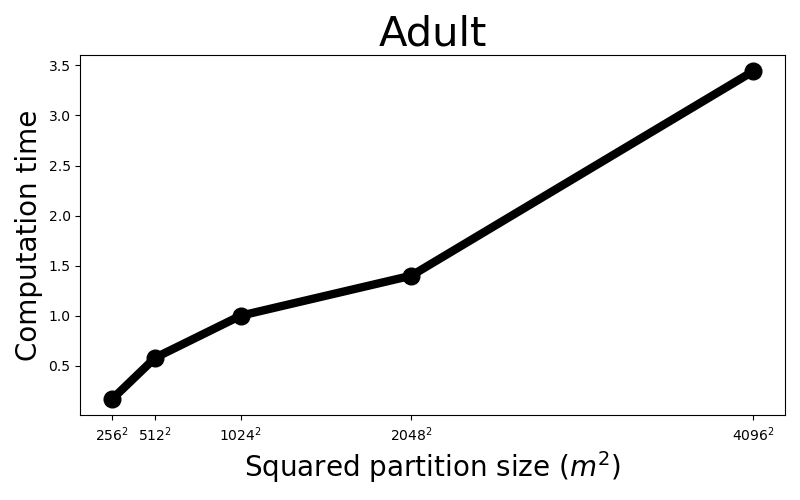}
    \includegraphics[width=0.3\textwidth]{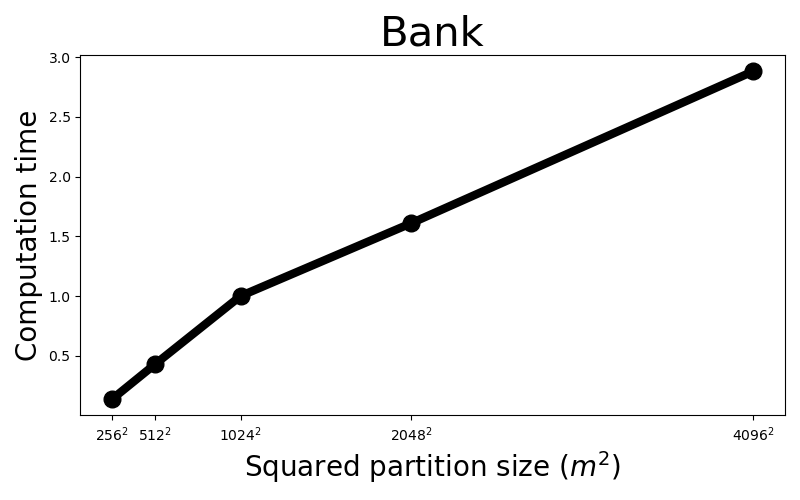}
    \includegraphics[width=0.3\textwidth]{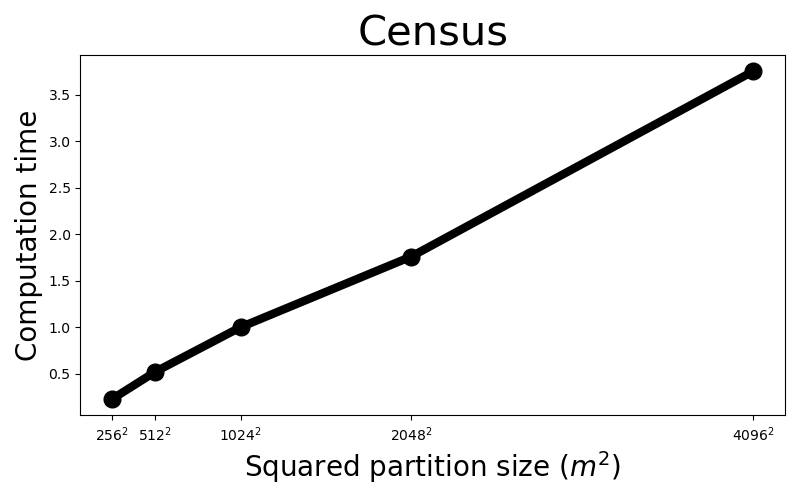}
    \caption{
    Squared partition size $m^{2}$ vs. Relative computation time.
    (Left, Center, Right) = (\textsc{Adult}, \textsc{Bank}, \textsc{Census}).
    }
    \label{fig:batch_size4}
\end{figure}

\clearpage
\subsubsection{Ablation study: (2) optimization and initialization of cluster centers (\cref{sec:abl})}\label{sec:abl-appen-2}

\paragraph{Optimization algorithm of cluster centers}

We analyze how the performance of FCA varies depending on the optimization algorithm for finding cluster centers.

(i) For the $K$-means algorithm whose results are reported in the main body, we use the $K$-means++ initialization \citep{10.5555/1283383.1283494} from the implementation of \texttt{scikit-learn} package \citep{scikit-learn}.
Note that we use the algorithm of \citet{1056489} for the $K$-means algorithm. 
    
(ii) We additionally consider random initialization of cluster centers with Lloyd's algorithm.

(iii) For the gradient-based algorithm, we use Adam optimizer \citep{https://doi.org/10.48550/arxiv.1412.6980}.
We set a learning rate of 0.005 for \textsc{Census} dataset with $L_{2}$ normalization, and 0.05 for all other cases.
To accelerate convergence, 20 gradient steps of updating the centers are performed per iteration.

\cref{table:alg_center} presents the results comparing these three approaches, showing that FCA is robust to the choice of the optimization algorithm for finding cluster centers.
Note that, while the gradient-based algorithm is also effective and accurate, it requires additional practical considerations such as selections of the learning rate and optimizer.
Furthermore, the slight outperformance of $K$-means++ initialization over random initialization suggests that an efficient initialization algorithm can enhance the final performance of FCA.
This is also theoretically examined through the approximation error in \cref{sec:appen-approx-fca-c}, which depends on the approximation error of the standard algorithm for finding cluster centers without fairness constraints.
However, since the margins are small, FCA can be considered empirically robust to the initialization.

\begin{table}[h]
    \centering
    \footnotesize
    \caption{
    Comparison of performance with respect to optimization algorithms for finding cluster centers with $L_{2}$ normalization (top) and without $L_{2}$ normalization (bottom).
        `$K$-means++' indicates that the initial centers are set according to the $K$-means++ initialization in the first iteration, then apply the algorithm of \citet{1056489}.
        `$K$-means random' indicates that the initial centers are set randomly at the first iteration, then apply the algorithm from \citet{1056489}.
        `Gradient-based' indicates that the initial centers are set randomly, and the centers are subsequently updated using the Adam optimizer.
    }
    \label{table:alg_center}
    \vskip 0.1in
    \begin{tabular}{l||c|c|c|c|c|c}
        \toprule
        Dataset / $\texttt{Bal}^{\star}$ & \multicolumn{2}{c|}{\textsc{Adult} / 0.494} & \multicolumn{2}{c|}{\textsc{Bank} / 0.649} & \multicolumn{2}{c}{\textsc{Census} / 0.969}
        \\
        \midrule
        \midrule
        With $L_{2}$ normalization & \texttt{Cost} & \texttt{Bal} & \texttt{Cost} & \texttt{Bal} & \texttt{Cost} & \texttt{Bal}
        \\
        \midrule
        FCA ($K$-means++) & 0.328 & 0.493 & 0.264 & 0.645 & 0.477 & 0.962
        \\
        FCA ($K$-means random) & 0.331 & 0.490 & 0.275 & 0.646 & 0.477 & 0.955
        \\
        FCA (Gradient-based) & 0.339 & 0.492 & 0.254 & 0.640 & 0.478 & 0.957
        \\
        \bottomrule
    \end{tabular}
    \vskip 0.1in
    \begin{tabular}{l||c|c|c|c|c|c}
        \toprule
        Dataset / $\texttt{Bal}^{\star}$ & \multicolumn{2}{c|}{\textsc{Adult} / 0.494} & \multicolumn{2}{c|}{\textsc{Bank} / 0.649} & \multicolumn{2}{c}{\textsc{Census} / 0.969}
        \\
        \midrule
        \midrule
        Without $L_{2}$ normalization & \texttt{Cost} & \texttt{Bal} & \texttt{Cost} & \texttt{Bal} & \texttt{Cost} & \texttt{Bal}
        \\
        \midrule
        FCA ($K$-means++) & 1.875 & 0.492 & 1.859 & 0.647 & 33.472 & 0.959
        \\
        FCA ($K$-means random) & 1.882 & 0.489 & 1.864 & 0.644 & 32.913 & 0.960
        \\
        FCA (Gradient-based) & 1.943 & 0.490 & 1.967 & 0.646 & 34.121 & 0.962
        \\
        \bottomrule
    \end{tabular}
    \vskip -0.1in
\end{table}

\clearpage
\paragraph{Initialization of cluster centers}

Moreover, we empirically assess the stability of FCA and FCA-C with respect to the different initial cluster centers (while keeping the initialization algorithm fixed to $K$-means++), and compare them with the standard $K$-means++ algorithm.
For FCA and the standard $K$-means++ algorithm, we run each algorithm five times with five different random initial centers.
For FCA-C, we use five random initial centers for each $\varepsilon \in \{0.1, 0.15, \ldots, 0.85, 0.9\}$ and compute the averages as well as standard deviations.
Then, we divide the standard deviation by average 17 times (corresponding to 17 $\varepsilon$s), and take average.

\cref{table:std} below reports the coefficient of variation (= standard deviation $\div$ average) of \texttt{Cost} and \texttt{Bal}. 
The results show that the variations of all the three algorithms are similar, indicating that FCA and FCA-C are as stable as the standard $K$-means++ with respect to the choice of initial cluster centers.
That is, aligning data (i.e., finding the optimal coupling matrix $\Gamma$) to build fair clustering does not affect the stability of the overall algorithm.

\begin{table}[h]
    \centering
    \footnotesize
    \caption{
    Standard deviations divided by averages (i.e., coefficient of variation) with respect to five random different choices of initial centers.
    }
    \label{table:std}
    \vskip 0.1in
    \begin{tabular}{l||c|c|c|c|c|c}
        \toprule
        \multirow{2}{*}{FCA} & \multicolumn{2}{c}{\textsc{Adult}} & \multicolumn{2}{c}{\textsc{Bank}} & \multicolumn{2}{c}{\textsc{Census}}
        \\
        & \texttt{Cost} & \texttt{Bal} & \texttt{Cost} & \texttt{Bal} & \texttt{Cost} & \texttt{Bal}
        \\
        \midrule
        \midrule
        with $L_{2}$ & 0.012 & 0.001 & 0.093 & 0.006 & 0.004 & 0.001
        \\
        without $L_{2}$ & 0.010 & 0.001 & 0.081 & 0.003 & 0.006 & 0.003
        \\
        \bottomrule
    \end{tabular}
    \begin{tabular}{l||c|c|c|c|c|c}
        \toprule
        \multirow{2}{*}{FCA-C} & \multicolumn{2}{c}{\textsc{Adult}} & \multicolumn{2}{c}{\textsc{Bank}} & \multicolumn{2}{c}{\textsc{Census}}
        \\
        & \texttt{Cost} & \texttt{Bal} & \texttt{Cost} & \texttt{Bal} & \texttt{Cost} & \texttt{Bal}
        \\
        \midrule
        \midrule
        with $L_{2}$ & 0.015 & 0.001 & 0.083 & 0.007 & 0.011 & 0.004
        \\
        without $L_{2}$ & 0.011 & 0.001 & 0.088 & 0.002 & 0.010 & 0.002
        \\
        \bottomrule
    \end{tabular}
    \begin{tabular}{l||c|c|c|c|c|c}
        \toprule
        \multirow{2}{*}{$K$-means++} & \multicolumn{2}{c}{\textsc{Adult}} & \multicolumn{2}{c}{\textsc{Bank}} & \multicolumn{2}{c}{\textsc{Census}}
        \\
        & \texttt{Cost} & \texttt{Bal} & \texttt{Cost} & \texttt{Bal} & \texttt{Cost} & \texttt{Bal}
        \\
        \midrule
        \midrule
        with $L_{2}$ & 0.009 & 0.001 & 0.078 & 0.005 & 0.008 & 0.002
        \\
        without $L_{2}$ & 0.011 & 0.000 & 0.090 & 0.004 & 0.010 & 0.002
        \\
        \bottomrule
    \end{tabular}
\end{table}

\subsubsection{Ablation study: (3) consistent outperformance across various $K$ (\cref{sec:abl})}\label{sec:appen-K_adult}

We analyze the impact of $K$ to the performance of four FC algorithms (FCBC, SFC, FRAC, and FCA).
On \textsc{Adult} dataset, we evaluate the FC algorithms with $K \in \{ 5, 10, 20, 40 \}.$
The results are presented in \cref{fig:K_adult} below, which show that FCA outperforms existing FC algorithms across all values of $K.$
Specifically, FCA achieves lower values of \texttt{Cost} than baselines for most values of $K,$ while maintaining the highest fairness level: $\texttt{Bal} \approx \texttt{Bal}^{\star}.$

\begin{figure}[h]
    \centering
    \includegraphics[width=0.49\textwidth]{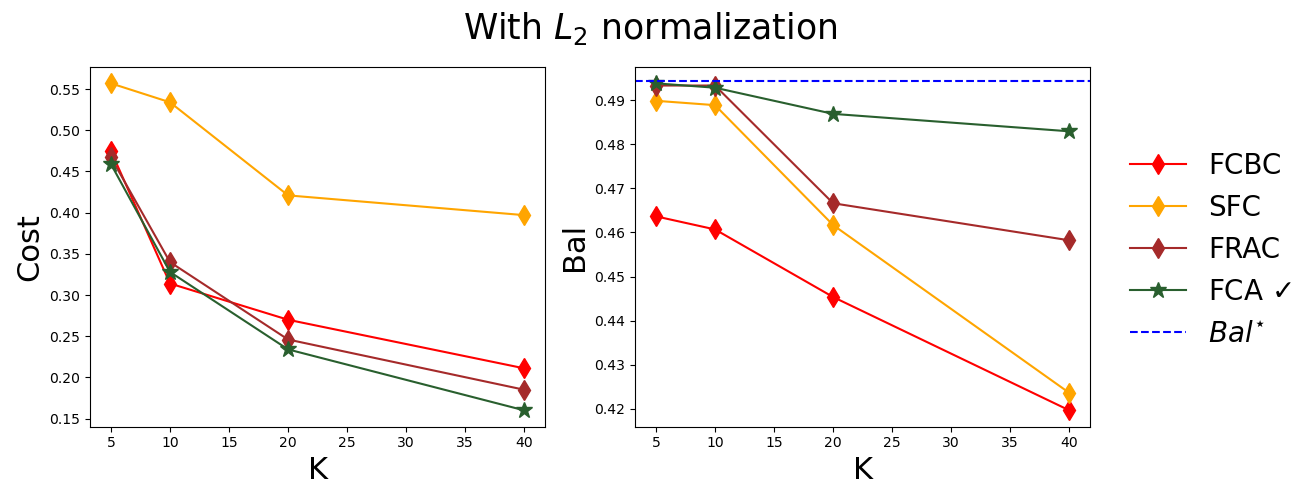}
    \includegraphics[width=0.49\textwidth]{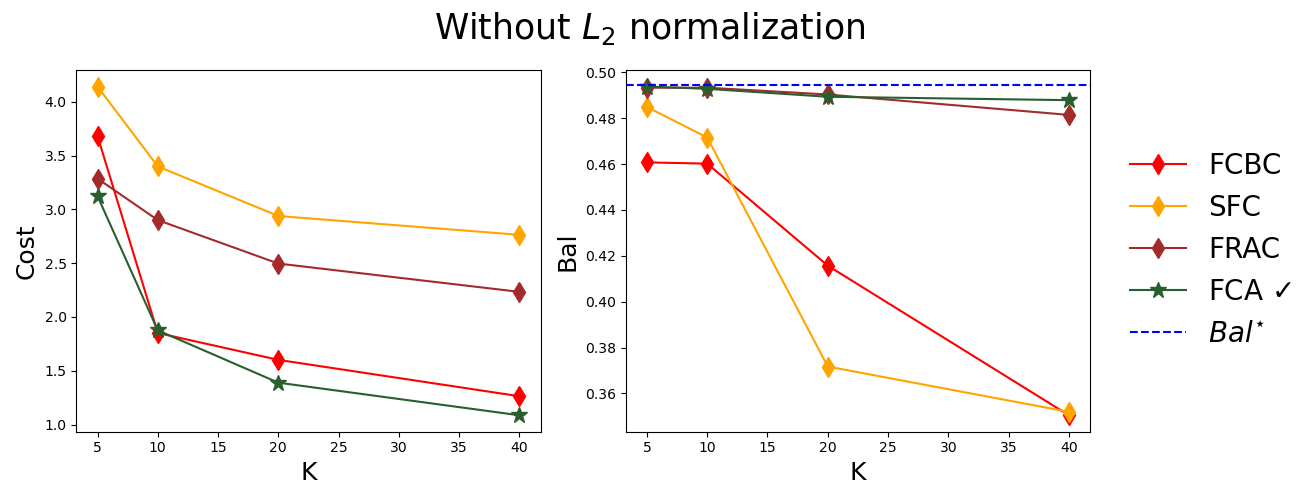}
    \caption{
    Performance comparison of FC algorithms in terms of \texttt{Cost} and \texttt{Bal} for $K \in \{5, 10, 20, 40\}.$
    (Left two, Right two) = (With $L_{2}$ normalization, Without $L_{2}$ normalization).
    }
    \label{fig:K_adult}
\end{figure}

\clearpage
\subsubsection{Ablation study: (4) scalability for large-scale data (\cref{sec:abl})}\label{sec:appen-large_data}

    In this section, we evaluate the scalability of FCA for larger datasets, while the main experiments in \cref{sec:exp} are conducted on real datasets with sample sizes of around $20,000$ to $40,000.$
    In specific, we apply FCA on a synthetic dataset with an extremely large sample size (around a million).
    
    \paragraph{Large-scale dataset generation}
    We generate a large ($n = 10^{6}$) synthetic dataset in $\mathbb{R}^{d}$ using a $J$-component Gaussian mixture, as follows:
    \begin{enumerate}
        \item
        (Mean vectors)
        We sample $J$ many $d$-dimensional vectors $m_{j}, j \in \{1, \ldots, J\}$ from a uniform distribution $\textup{Unif}(-20, 20).$
        To ensure diversity, the distance between any two vectors is constrained to be at least 1.
        These vectors are used as the mean vectors for the Gaussian components.

        \item
        (Covariance matrices)
        Each $j$th Gaussian component is assigned a covariance matrix $\Sigma_{j} = \sigma_{j}^{2} \mathbbm{I},$ where $\sigma_{j} \sim \textup{Unif}(1, 3).$

        \item 
        (Weights)
        Component weights, denoted as $\phi_{j}, j \in \{1, \ldots, J\},$ are sampled from a Dirichlet distribution $\textup{Dirichlet}(\alpha_{1}, \ldots, \alpha_{J})$ for given parameters $\alpha_{1}, \ldots, \alpha_{J}.$

        \item
        (Completion)
        The Gaussian mixture model is completed as $\sum_{j=1}^{J} \phi_{j} \mathcal{N}(m_{j}, \Sigma_{j}).$
        We set $J$ as an even number, and sample data for $S = 0$ from $J/2$ components and for $S = 1$ from the remaining $J/2$ components.
    \end{enumerate}
    Using this procedure, we construct a dataset with $n = 10^{6}, d = 2, J = 20,$ and $\alpha_{j} = 1, \forall j \in \{1, \ldots, J\}.$
    The resulting generated dataset contains 320,988 samples for $S = 0$ and 679,012 samples for $S = 1.$
    The features are then scaled to have zero mean and unit variance.
    \cref{fig:large_synthetic_data} provides a visualization of this synthetically generated dataset.
    \begin{figure}[h]
        \centering
        \includegraphics[width=0.4\textwidth]{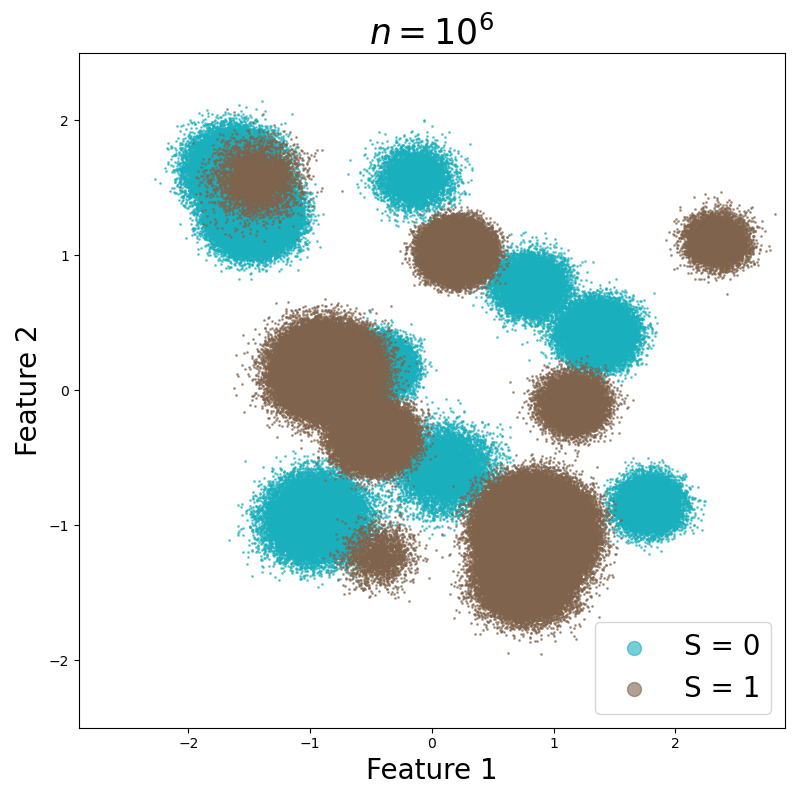}
        \caption{The large synthetic dataset generated with $n = 10^{6}, d = 2, J = 20,$ and $\alpha_{j} = 1, \forall j \in \{1, \ldots, J\}.$}
        \label{fig:large_synthetic_data}
        \vskip -0.2in
    \end{figure}

    \clearpage
    \paragraph{Results}
    We fix the number of clusters to $K = 10.$
    In this analysis, we compare only FCA and VFC, as other baselines incur extremely high computational costs for this dataset.
    \cref{table:appen-synthetic_large} presents the results, demonstrating that FCA is easily scaled-up and remains a favorable FC algorithm for this large-scale dataset.
    That is, FCA successfully achieves near-perfect fairness (i.e., $\texttt{Bal} = 0.472 \approx 0.473$).
    In contrast, VFC fails to achieve near-perfect fairness, with a limit of \texttt{Bal} 0.114.
    Meanwhile, FCA-C achieves a lower \texttt{Cost} than VFC ($0.107 < 0.111$), while attaining the maximum achievable \texttt{Bal} for VFC ($\approx 0.114$).
    
    \begin{table}[h]
        \vskip -0.05in
        \centering
        \footnotesize
        \caption{
            Comparison of $\texttt{Cost}$ and \texttt{Bal} between FCA (or FCA-C) and VFC on the large synthetic dataset in \cref{fig:large_synthetic_data}.
        }
        \label{table:appen-synthetic_large}
        \vskip 0.1in
        \begin{tabular}{l||c|c}
            \toprule
            $\texttt{Bal}^{\star} = 0.473$ & $\texttt{Cost}$ ($\downarrow$) & \texttt{Bal} ($\uparrow$) 
            \\
            \midrule
            Standard (fair-unaware) & 0.058 & 0.000 
            \\
            VFC \citep{ziko2021variational} ($\lambda = 51000$) & 0.111 & 0.114
            \\
            FCA-C ($\varepsilon = 0.65$) \checkmark & 0.107 & 0.115
            \\
            FCA \checkmark & 0.669 & 0.472 
            \\
            \bottomrule
        \end{tabular}
        \vskip -0.05in
    \end{table}

    We further analyze the computation times of FCA on four datasets with different $n$s, including this large-scale dataset.
    \cref{table:appen-FCA_time_large} shows that the computation time scales linearly with $n$ (i.e., Average time / $n$ is nearly constant).
    This observation aligns with our discussion on the computational complexity of $\mathcal{O}(nm^{2})$ when applying the partitioning technique with a fixed $m=1024$ (see \cref{sec:partition}).
    These results can further highlight the empirical efficiency of the partitioning technique.

    \begin{table}[h]
        \vskip -0.05in
        \centering
        \footnotesize
        \caption{
            Comparison of total computation times (seconds) of FCA, on four different datasets with different $n$s.
            The reported results are averages and standard deviations, based on five runs.
        }
        \label{table:appen-FCA_time_large}
        \vskip 0.1in
        \begin{tabular}{l||c|c|c|c}
            \toprule
            & {\textsc{Adult}} & {\textsc{Bank}} & {\textsc{Census}} & Large synthetic (\cref{fig:large_synthetic_data})
            \\
            \midrule
            \midrule
            Partition size $m$ & 1024 & 1024 & 1024 & 1024
            \\
            \midrule
            $n$ & 32,561 & 41,108 & 20,000 & 1,000,000
            \\
            \midrule
            Average time (Standard deviation) & 56.7 (3.9) & 73.1 (7.8) & 32.2 (5.4) & 1410.1 (115.8)
            \\
            \midrule
            \midrule
            Average time / $n$ & $1.7 \times 10^{-3}$ & $1.8 \times 10^{-3}$ & $1.6 \times 10^{-3}$ & $1.4 \times 10^{-3}$
            \\
            \bottomrule
        \end{tabular}
        \vskip -0.05in
    \end{table}


\subsubsection{Ablation study: (5) linear program vs. Sinkhorn for optimizing $\mathbb{Q}$ (\cref{sec:abl})}\label{sec:appen-compare_sinkhorn}

We evaluate an alternative algorithm for finding the coupling matrix $\Gamma$.
Specifically, the Sinkhorn algorithm of \citet{NIPS2013_af21d0c9} optimizes $ \mathbf{C} + \lambda \,\mathrm{ent}(\Gamma), $ where $\mathbf{C}$ is the cost matrix defined in Phase 1 of \cref{sec:algorithm}, $\lambda$ is a regularization parameter, and $\mathrm{ent}(\Gamma)$ denotes the entropy of $\Gamma$.
Note that, as $\lambda$ increases, $\Gamma$ approaches a uniform matrix.
It is well-known that the Sinkhorn algorithm generally yields a more relaxed solution with reduced runtime compared to solving the Kantorovich problem via linear programming.

We compare (i) the original FCA (using linear programming) and (ii) FCA with the Sinkhorn algorithm for $\lambda \in \{0.01, 0.1, 1.0\}.$
\cref{tab:vs_sinkhorn} shows the results, suggesting:
(i) A small regularization ($\lambda = 0.01$) achieves performance comparable to linear programming with a slight runtime reduction ($\approx 2\%$).
(ii) A large regularization ($\lambda = 1.0$) significantly degrades performance while reducing runtime ($\approx 10\%$).
Overall, careful tuning of $\lambda$ is critical when using the Sinkhorn algorithm, while the runtime reduction may not be significant in practice.
Therefore, we recommend solving the linear program for FCA.

\begin{table}[h]
    \centering
    \footnotesize
    \caption{Comparison of using the Sinkhorn algorithm and solving the linear program, in terms of \texttt{Cost}, \texttt{Bal}, and runtime per iteration.}
    \label{tab:vs_sinkhorn}
    \vskip 0.1in
    \begin{tabular}{l||c|c|c}
        \toprule
        \multicolumn{4}{c}{\textsc{Adult}}
        \\
        \midrule
        \midrule
        $\texttt{Bal}^{\star} = 0.494$ & \texttt{Cost} ($\downarrow$) & \texttt{Bal} ($\uparrow$) & Runtime / iteration (sec)
        \\
        \midrule
        FCA (Sinkhorn, $\lambda = 1.0$) & 0.350 & 0.271 & 4.98
        \\
        FCA (Sinkhorn, $\lambda = 0.1$) & 0.315 & 0.463 & 5.12
        \\
        FCA (Sinkhorn, $\lambda = 0.01$) & 0.330 & 0.491 & 5.55
        \\
        FCA (Linear program) & 0.328 & 0.493 & 5.67
        \\
        \bottomrule
    \end{tabular}    
\end{table}

\clearpage
\subsubsection{Comparisons of computation time}\label{sec:appen-compare_compute}
    
    \paragraph{FCA versus FCA-C}

    We compare the computation times of FCA and FCA-C, as FCA-C technically involves an additional step of optimizing $\mathcal{W}.$
    \cref{table:appen-FCA_vs_FCA-C} below shows that while FCA-C requires slightly more computation time than FCA, the increase is not substantial (a maximum of 3.7\%).

    \begin{table}[h]
        \vskip -0.05in
        \centering
        \footnotesize
        \caption{
            Comparison of computation times (seconds) of FCA and FCA-C per each iteration.
            The averages and standard deviations are calculated based on five runs.
            The data are $L_{2}$-normalized and the batch size is fixed as 1024.
        }
        \label{table:appen-FCA_vs_FCA-C}
        \vskip 0.1in
        \begin{tabular}{l||c|c|c}
            \toprule
            Average (Standard deviation) & {\textsc{Adult}} & {\textsc{Bank}} & {\textsc{Census}}
            \\
            \midrule
            FCA & 5.67 (0.39) & 7.31 (0.78) & 16.10 (2.70)
            \\
            FCA-C & 5.72 (0.47) & 7.58 (0.32) & 16.46 (1.23)
            \\
            \midrule
            Increase (FCA-C / FCA) & 0.5\% $\uparrow$ & 3.7\% $\uparrow$ & 2.2\% $\uparrow$
            \\
            \bottomrule
        \end{tabular}
        \vskip -0.05in
    \end{table}

    \paragraph{FCA versus SFC}

    We here additionally consider two more scenarios to compare FCA and SFC.
    In this analysis, the data are $L_{2}$-normalized and the batch size is fixed as 1024 for FCA.
    Note that FCA consists of an outer iteration (updating the cluster centers and joint distribution alternately), and an inner iteration when applying the $K$-means algorithm in the aligned space.
    \begin{enumerate}
        \item
        We compare the number of iterations until convergence.
        For SFC, we calculate the number of iterations in the $K$-means algorithm (after finding fairlets).
        For FCA, we calculate the sum of the number of iterations (inner iteration) in the $K$-means algorithm for each outer iteration (updating the cluster centers and joint distribution).
        In this analysis, note that we report the elapsed time for FCA with 10 iterations of the outer iteration, because the performance of FCA with 10 iterations of the outer iteration is not significantly different from FCA with 100 iterations of the outer iteration.
        
        As a result, SFC requires smaller number of iterations, compared to FCA (see \cref{table:appen-FCA_vs_SFC}), primarily because FCA involves the outer iterations.
        On the other hand, in FCA, the number of total iterations is almost linear with respect to the number of iterations in each inner loop.

        \begin{table}[h]
            \vskip -0.1in
            \centering
            \footnotesize
            \caption{
                Comparison of computational costs between FCA and SFC: the total number of iterations of the standard $K$-means algorithm in SFC and FCA.
            }
            \label{table:appen-FCA_vs_SFC}
            \vskip 0.1in
            \begin{tabular}{l||c|c|c}
                \toprule
                Total number of iterations & {\textsc{Adult}} & {\textsc{Bank}} & {\textsc{Census}}
                \\
                \midrule
                SFC \citep{backurs2019scalable} & 15 & 10 & 32
                \\
                FCA & 57 & 110 & 93
                \\
                \bottomrule
            \end{tabular}
            \vskip -0.1in
        \end{table}
        
        \item
        To further analyze whether applying an early-stopping to FCA can maintain reasonable performance, we conduct an additional experiment:
        we fix the number of $K$-means iterations to 1 per outer iteration of FCA, then perform a total of 10 outer iterations.
        With this setup, the total number of iterations for FCA becomes 10, which is comparable to or smaller than that of SFC (15 for Adult, 10 for Bank, and 32 for Census dataset, as shown in \cref{table:appen-FCA_vs_SFC}).
        We observe that the performance of FCA with this early-stopping is slightly worse than the original FCA (where the $K$-means algorithm runs until convergence), but it still outperforms SFC (see \cref{table:appen-FCA_vs_SFC2}).
        However, this early-stopping approach would be not recommended, at least for the datasets used in our experiments.
        This is because running the $K$-means algorithm takes less than a second or a few seconds, while the computation of finding the joint distribution dominates the overall runtime.
        Additionally, the original FCA slightly outperforms the early-stopped version with only 10 iterations.

        \begin{table}[h]
            \vskip -0.05in
            \centering
            \footnotesize
            \caption{
                Performance comparison of SFC, FCA with a total of 10 iterations, and the original FCA (updating until convergence).
                The data are not $L_{2}$-normalized.
            }
            \label{table:appen-FCA_vs_SFC2}
            \vskip 0.1in
            \begin{tabular}{l||c|c|c|c|c|c}
                \toprule
                Dataset / $\texttt{Bal}^{\star}$ & \multicolumn{2}{c|}{\textsc{Adult} / 0.494} & \multicolumn{2}{c|}{\textsc{Bank} / 0.649} & \multicolumn{2}{c}{\textsc{Census} / 0.969}
                \\
                \midrule
                \midrule
                Performance & $\texttt{Cost}$ ($\downarrow$) & \texttt{Bal} ($\uparrow$) & $\texttt{Cost}$ ($\downarrow$) & \texttt{Bal} ($\uparrow$) & $\texttt{Cost}$ ($\downarrow$) & \texttt{Bal} ($\uparrow$)
                \\
                \midrule
                SFC \citep{backurs2019scalable} & 3.399 & 0.471 & 3.236 & 0.622 & 69.437 & 0.940
                \\
                FCA (total 10 iterations) & 1.923 & 0.489 & 1.992 & 0.644 & 33.967 & 0.955
                \\
                FCA (original) & 1.875 & 0.492 & 1.859 & 0.647 & 33.472 & 0.959
                \\
                \bottomrule
            \end{tabular}
            \vskip -0.05in
        \end{table}
    
    \end{enumerate}

\clearpage
\subsubsection{Comparison of FCA and SFC based on $K$-median clustering cost}\label{sec:appen-k-median}

    As SFC is originally designed for the $K$-median clustering objective, we compare FCA and SFC based on the $K$-median clustering cost (i.e., $L_{1}$ cost) for a more fair comparison.
    In specific, FCA for $K$-median clustering cost is modified as follows:
    (i) The $L_{2}$ norm in eq. (\ref{eq:main_result}) is replaced by the $L_{1}$ norm.
    (ii) The cluster centers are found by minimizing the $L_{1}$ distance, as we discuss in \cref{sec:appen-kmedian_discuss}.

    The results are presented in \cref{table:appen-FCA_vs_SFC_median}, which shows that FCA still outperforms SFC.
    It implies that the fairlet-based method still may not always find the most effective matching in view of clustering utility (cost), even when the given clustering objective more suited to fairlet-based approaches (e.g., $L_{1}$ norm) is considered.

    Let $ \texttt{Cost}_{1} = \frac{1}{n} \sum_{(\mathbf{x},s) \in \mathcal{D}} \Vert \mathbf{x} - \mu_{k(\mathbf{x},s)} \Vert_{1} $ be the $K$-median clustering cost.

    \begin{table}[h]
        \vskip -0.05in
        \centering
        \footnotesize
        \caption{
            Comparison of $\texttt{Cost}_{1}$ and \texttt{Bal} of FCA and SFC.
            The data are not $L_{2}$-normalized.
        }
        \label{table:appen-FCA_vs_SFC_median}
        \vskip 0.1in
        \begin{tabular}{l||c|c|c|c|c|c}
            \toprule
            Dataset / $\texttt{Bal}^{\star}$ & \multicolumn{2}{c|}{\textsc{Adult} / 0.494} & \multicolumn{2}{c|}{\textsc{Bank} / 0.649} & \multicolumn{2}{c}{\textsc{Census} / 0.969}
            \\
            \midrule
            \midrule
            Performance & $\texttt{Cost}_{1}$ ($\downarrow$) & \texttt{Bal} ($\uparrow$) & $\texttt{Cost}_{1}$ ($\downarrow$) & \texttt{Bal} ($\uparrow$) & $\texttt{Cost}_{1}$ ($\downarrow$) & \texttt{Bal} ($\uparrow$)
            \\
            \midrule
            Standard (fair-unaware) & 1.788 & 0.206 & 1.989 & 0.391 & 21.402 & 0.030 
            \\
            SFC \citep{backurs2019scalable} & 2.979 & 0.471 & 3.056 & 0.622 & 29.597 & 0.940
            \\
            FCA \checkmark & 2.032 & 0.492 & 2.383 & 0.647 & 22.927 & 0.959
            \\
            \bottomrule
        \end{tabular}
        \vskip -0.05in
    \end{table}


\end{document}